\title{Constraining Logits by Bounded Function
for Adversarial Robustness}
\author{
Sekitoshi Kanai,\textsuperscript{\rm 1}
Masanori Yamada,\textsuperscript{\rm 2}
Shin'ya Yamaguchi,\textsuperscript{\rm 1}\\
Hiroshi Takahashi,\textsuperscript{\rm 1, 3}
Yasutoshi Ida,\textsuperscript{\rm 1, 3}\\
}
\newtheorem{definition}{Definition}
\newtheorem{corollary}{Corollary}
\newtheorem{theorem}{Theorem}
\newtheorem{proposition}{Proposition}
\newtheorem*{assumption*}{Assumption}
\newcommand{\req}[1]{eq.~(\ref{#1})}
\newcommand{\rfig}[1]{Fig.~\ref{#1}}
\newcommand{\rtab}[1]{Tab.~\ref{#1}}
\begin{document}
\maketitle

\begin{abstract}
    We propose a method for improving adversarial robustness by
    addition of a new bounded function just before softmax. 
    Recent studies hypothesize
    that small logits (inputs of softmax) by logit regularization can improve adversarial robustness of deep
    learning.
    Following this hypothesis,
    we analyze norms of logit vectors at
    the optimal point under the assumption of universal approximation and 
    explore new methods for constraining logits by addition of a bounded function before softmax.
    We theoretically and empirically reveal that small logits by addition of a common activation function,
    e.g., hyperbolic tangent, do not improve adversarial robustness since input vectors of the function 
    (pre-logit vectors) can have large norms. 
    From the theoretical findings, we develop the new bounded function. 
    The addition of our function improves adversarial robustness
    because it makes logit and pre-logit vectors have small norms.
    Since our method only adds one activation function before softmax, 
    it is easy to combine our method with adversarial training.
    Our experiments demonstrate that our method is comparable to logit regularization
    methods in terms of accuracies
    on adversarially perturbed datasets without adversarial training. 
    Furthermore, it is superior or comparable to
    logit regularization methods and a recent defense method (TRADES)
    when using adversarial training.
\end{abstract}
\section{Introduction}
Deep neural networks (DNNs) are used
in many applications, e.g., image recognition \cite{resnet} and 
machine translation \cite{vaswani2017attention}, and have achieved great success.
Although DNNs can handle data accurately, 
they are vulnerable to adversarial examples, which are imperceptibly
perturbed data to make DNNs misclassify data \cite{szegedy2013intriguing}.
To investigate vulnerabilities of DNNs and improve adversarial robustness, many adversarial attack and defense methods 
have been presented and evaluated \cite{carmon2019unlabeled,pgd,pgd2,papernot2017practical,Fp,zhang2019you,ross2018improving,absum,mlloo,fisherdetct,resistAA}.

Among defense methods, adversarial training is regarded as a promising method
\cite{pgd,pgd2}.
Adversarial training generates adversarial examples of training data and
trains DNNs on these adversarial examples. 
On the other hand, since the generation of adversarial examples requires high computation cost,
several studies introduced defense methods not using adversarial examples \cite{LogitSq,MMC,warde201611}.
\citet{warde201611} 
showed that label smoothing can be used as the efficient defense methods.
Similarly, \citet{LogitSq} presented logit squeezing that imposes a penalty of norms of
logit vectors, which are input vectors of softmax.
Since recent studies have shown that label smoothing also induces small logits, they
regard label smoothing and logit squeezing as logit regularization methods and
experimentally investigate the relation between robustness and logit norms \cite{mosbach2018logit,shafahi2019,shafahibatch,summers2019improved}.
\citet{shafahi2019} showed that adversarial training actually
reduces the norms of logits along with the increase in the strength of the attacks in training.
According to these studies, constraining of logit norms to be small values is one approach
to improve adversarial examples.

In this paper, we propose a method of constraining the logit norms that uses
 a bounded activation function just before softmax
on the basis of our theoretical findings about logit regularization.
To understand the effect of logit regularization, 
we analyze the optimal logits for training of a neural network that has universal approximation.
In this analysis, we first prove that (i) norms of the optimal logit vectors of cross-entropy are infinitely
large values, and
(ii) logit squeezing and label smoothing make norms of the optimal logit vectors be finite values.
Infinitely large logits mean that the function from data points to 
logits does not have finite Lipschitz constants of which constraints are used
for adversarial robustness \cite{parseval,farnia2018generalizable,NIPS2019_9319,lmt}.
Next, to verify the effect of small logit norms, we evaluate robustness of models the logit vectors of
 which have various norms. 
From the experiments, we confirm that
models can be robust against adversarial attacks if norms of logits are below a certain value.
This observation suggests that the addition of a bounded function 
just before softmax can improve adversarial robustness.
However, we also reveal that adding common bounded activation functions, e.g., hyperbolic tangent, does not improve adversarial robustness.
This is because these functions are monotonically increasing,
and the optimal inputs (hereinafter, we call pre-logits) of these functions become infinitely large values;
i.e., models from data points to pre-logits do not have finite Lipschitz constants.
To overcome this drawback, we develop a new function called bounded logit function (BLF).
BLF is bounded by finite values, and pre-logits at the maximum and minimum points
are also finite values.
As a result, the addition of BLF just before softmax makes the optimal logits and pre-logits have finite values.
Since our method only adds one activation function before softmax, 
it is easy to combine BLF with adversarial training.
We experimentally confirmed that 
BLF can improve robustness more than logit squeezing without using adversarial examples.
Moreover, BLF with adversarial training is superior or comparable to
adversarial training with logit squeezing, label smoothing, and TRADES, which is a recent 
strong defense method using adversarial examples in training \citep{TRADES}, 
in terms of accuracy on MNIST and CIFAR10 attacked by gradient-based white-box attack (PGD \cite{pgd,pgd2}) and gradient-free attacks (SPSA \cite{SPSA} and Square Attack \cite{SQUARE}).
\section{Preliminaries}
\label{preSec}
\subsection{\!\!\!Softmax cross-entropy and logit regularization}
To classify the $i$-th data point $\bm{x}^{(i)}\!\in\!\mathcal{X}$ where $\mathcal{X}$ is a data space,
neural networks learn a probability distribution over $M$ classes
conditioned on the data point $\bm{x}$ as $P_{\bm{\theta}}(k|\bm{x}^{(i)})$ for $k\!=\!1,\dots,M$
where $\bm{\theta}$ is a parameter vector.
Let $\bm{z}_{\bm{\theta}}(\bm{x}^{(i)})=[z_{\bm{\theta},1}(\bm{x}^{(i)}), z_{\bm{\theta},2}(\bm{x}^{(i)}),\dots,z_{\bm{\theta},M}(\bm{x}^{(i)})]^T$ be a logit vector composed of a logit $z_{\bm{\theta},k}(\bm{x}^{(i)})$, which is an input vector
of softmax $\bm{f}_s(\cdot)$, corresponding to the data point $\bm{x}^{(i)}$.
The $k$-th element of softmax $[\bm{f}_s(\cdot)]_k$ represents
the conditional probability of the $k$-th class
\begin{align}
\textstyle
P_{\bm{\theta}}(k|\bm{x}^{(i)})\!=\!
\left[\bm{f}_s(\bm{z}_{\bm{\theta}}(\bm{x}^{(i)}))\right]_k
\!=\!\frac{\mathrm{exp}(z_{\bm{\theta},k}(\bm{x}^{(i)}))}
{\sum_{m=1}^{M} \mathrm{exp}(z_{\bm{\theta},m}(\bm{x}^{(i)}))}.
\label{sbeq}
\end{align}
To train the model $\bm{f}_s(\bm{z}_{\bm{\theta}}(\bm{x}))$, cross-entropy loss $\mathcal{L}_{CE}$ is
 used as loss functions in classification tasks.
Cross-entropy loss $\mathcal{L}_{\mathrm{CE}}(\bm{z}_{\bm{\theta}}(\bm{x}^{(i)}),\bm{p}^{(i)})$ is
\begin{align}
        \textstyle
        \!\!\!\mathcal{L}_{\mathrm{CE}}(\bm{z}_{\bm{\theta}}(\bm{x}^{(i)}),\bm{p}^{(i)})
        \!=\!-\sum_{k=1}^M p_k^{(i)}\mathrm{log} [\bm{f}_s(\bm{z}_{\bm{\theta}}(\bm{x}^{(i)}))]_k,                
\label{ce}
\end{align}
where $\bm{p}^{(i)}$ is a target vector for $\bm{x}^{(i)}$.
Target vectors are generally one-hot vectors as $p_k\!=\!1$ for $k\!=\!t$ and $p_k=0$ for $k\!\neq\!t$
when the label of $\bm{x}^{(i)}$ is $t$.
For a training dataset $\{(\bm{x}^{(i)},\bm{p}^{(i)})\}_{i=1}^N$,
an objective function $J$ of training is
$J=\frac{1}{N}\sum_{i=1}^N\mathcal{L}_{\mathrm{CE}}(\bm{z}_{\bm{\theta}}(\bm{x}^{(i)}),\bm{p}^{(i)})$.

Since softmax cross-entropy, which is a combination of softmax and cross-entropy, for one-hot vectors can make models over-confident,
label smoothing assigns small probabilities to the target values as
$p_t=
1-\alpha$ and 
$p_k=\frac{\alpha}{M-1}$ for $k\neq t$
where $\alpha$ is a hyper-parameter \cite{LabelSmo}.
Label smoothing can alleviate over-confidence and improve generalization
performance \cite{LabelSmo}.
Recent studies have shown that label smoothing can be used as a defense method
against adversarial attacks \cite{shafahi2019,shafahibatch,summers2019improved,warde201611}.

\citet{LogitSq} presented logit squeezing as another way for improving adversarial robustness. 
The objective function of logit squeezing is
\begin{align}
\textstyle
J=\frac{1}{N}\sum_{i=1}^N\left(\mathcal{L}_{\mathrm{CE}}(\bm{x}^{(i)},\bm{p}^{(i)})+\frac{\lambda}{2}||\bm{z}_{\bm{\theta}}(\bm{x}^{(i)})||_2^2\right),
\end{align}
where $\lambda\!>\!0$ is a regularization constant.
This objective function keeps logit norms having small values, and it is
experimentally confirmed that logit squeezing can improve the adversarial robustness.
\subsection{Related work}
Since many studies have been conducted on adversarial attacks and defenses,
we mainly review studies that handle defense methods based on logits.
First, we discuss defense methods using adversarial examples.
\citet{LogitSq} presented adversarial logit pairing, which makes logits of adversarial examples
be similar to logits of clean examples by an additional regularization term 
penalizing the $L_2$ norm distance between them.
Similarly to adversarial logit pairing,
TRADES \cite{TRADES} uses a regularization term that evaluates the difference between softmax
outputs of neural networks for clean examples and for adversarial examples instead of logits.
Since TRADES outperforms adversarial logit pairing, 
we compared our method with TRADES rather than adversarial logit pairing \cite{TRADES}.

Next, we discuss defense methods without using adversarial examples. 
As mentioned above, \citet{LogitSq} also presented logit squeezing inspired by logit pairing, and 
several studies were conducted on the effectiveness of logit squeezing 
\cite{shafahi2019,shafahibatch,summers2019improved}.
\citet{shafahi2019} investigated the effects of logit squeezing and label smoothing
and presented a combination of logit regularization (label smoothing or
logit squeezing) and random Gaussian noise. 
In our study, we evaluated the combination of each logit regularization method and adversarial
training instead of Gaussian noise
 because adversarial perturbations are the worst noise for models, 
i.e., training with them can improve robustness more than that with Gaussian noise.
\citet{summers2019improved} experimentally showed that label smoothing also makes logits have a small range like logit squeezing.
They evaluated a crafted logit regularization method, which combines label smoothing, mixup,
and logit pairing. 
Our method can be one component of the crafted logit regularization method because our method is simple and
only adds an activation function before softmax.

\citet{mosbach2018logit} investigated the loss surface of logit squeezing and showed that
logit squeezing seems to just mask or obfuscate gradient \cite{best} rather than improving  robustness.
However, logit squeezing can still slightly improve robustness against gradient-free attacks,
which does not depend on a gradient. 
Note that the effectiveness of the combination logit squeezing and adversarial training
is still unclear. 
In Section~\ref{exp}, we evaluate logit regularization methods by using 
gradient-based and two gradient-free attacks, and reveal that logit regularization can
improve robustness of adversarial training.
In addition, we investigate the limitation of logit regularization methods 
by using various attacks, e.g., targeted attacks, in appendix~\ref{LimSec}.
\section{Analysis of logit regularization methods for adversarial robustness}
\label{logitSec}
In this section, we first investigate logits
obtained by softmax cross-entropy and logit regularization methods.
Next, we experimentally show the relation between logit norms
and robustness.
All the proofs are provided in appendix~\ref{ProfSsec}.
\subsection{\!\!\!Optimal logits for minimization of training loss}
\label{AsSec}
To clarify the background of our results, we first show the assumption of our analysis inspired 
by the universal approximation properties of neural networks.
\begin{assumption*}
We assume that (a) if data points have the same values as $\bm{x}^{(i)}\!=\!\bm{x}^{(j)}$, they have 
the same labels as $\bm{p}^{(i)}\!=\!\bm{p}^{(j)}$,
(b) the logit vector $\bm{z}_{\bm{\theta}}(\bm{x})$ can be an arbitrary vector for each data point, and
(c) the optimal point $\bm{\theta}^*\!=\!\mathrm{arg}\!\min_{\bm{\theta}}\!\frac{1}{N}\!\sum_{i=1}^{N}\! \mathcal{L}(\bm{z}_{\bm{\theta}}(\bm{x}^{(i)}),\bm{p}^{(i)})$ achieves $\mathcal{L}(\bm{z}_{\bm{\theta}^*}(\bm{x}^{(i)}),\bm{p}^{(i)})\!=\min_{\bm{\theta}}\mathcal{L}(\bm{z}_{\bm{\theta}}(\bm{x}^{(i)}),\bm{p}^{(i)})$ for all $i$.
\end{assumption*}
This assumption ignores generalization performance and takes into account deterministic labels.
This assumption is satisfied if 
we use the  models that can be arbitrary functions and
minimize softmax cross-entropy on the dataset where the
same data points have the same labels.
Though it is a strong assumption,\footnote{Since several papers have shown that over-parameterized network with least squares loss can achieve zero training loss \cite{du2019gradient,du2018gradient},
it might not be a very strong assumption.}
our analysis is valuable for understanding the behavior of the logits 
since DNNs have large representation capacity and we assign a label for each data point
with no duplication. 
 
From the assumption, we can regard the optimal logits $\bm{z}_{\bm{\theta}^*}
(\bm{x}^{(i)})\!=\!\mathrm{arg}\!\min_{\bm{z}_{\bm{\theta}}(\bm{x}^{(i)})}\!\mathcal{L}(\bm{z}_{\bm{\theta}}(\bm{x}^{(i)}),\bm{p}
^{(i)})$ for the $i$-th data point 
as the logits obtained by minimization of the training objective functions.
Next, we show the property of the optimal logits for softmax cross-entropy.
\begin{theorem}
\label{SCETh}
If we use softmax cross-entropy and one-hot vectors as target values, 
at least one element of the optimal logits $\bm{z}_{\bm{\theta}^*}(\bm{x}^{(i)})\!=\!\mathrm{arg}\min_{\bm{z}_{\bm{\theta}}} \mathcal{L}
_{\mathrm{CE}}(\bm{z}_{\bm{\theta}}(\bm{x}^{(i)}),\bm{p}^{(i)}) $
does not have a finite value.
\end{theorem}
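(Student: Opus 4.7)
The plan is to reduce the claim to an analytic statement about the function $\bm{z}\mapsto\mathcal{L}_{\mathrm{CE}}(\bm{z},\bm{p})$ itself, using the assumption in its second and third parts to transfer a pointwise argument on $\bm{z}$ back to the optimal parameter $\bm{\theta}^*$. Concretely, by parts (b) and (c) of the assumption, minimising $\frac{1}{N}\sum_i\mathcal{L}_{\mathrm{CE}}(\bm{z}_{\bm{\theta}}(\bm{x}^{(i)}),\bm{p}^{(i)})$ over $\bm{\theta}$ is equivalent to minimising each term $\mathcal{L}_{\mathrm{CE}}(\bm{z},\bm{p}^{(i)})$ independently over $\bm{z}\in\mathbb{R}^{M}$; thus it suffices to show that this per-example minimum is not attained at any finite vector when $\bm{p}^{(i)}$ is one-hot.

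For the per-example analysis, I would first rewrite the loss in its log-sum-exp form: if $t$ denotes the index with $p_t=1$, then from \req{sbeq} and \req{ce},
\begin{equation*}
\mathcal{L}_{\mathrm{CE}}(\bm{z},\bm{p})=-z_t+\log\sum_{m=1}^{M}\exp(z_m).
\end{equation*}
I would then argue two facts. Lower bound: since $\sum_m\exp(z_m)\ge\exp(z_t)$, we have $\mathcal{L}_{\mathrm{CE}}(\bm{z},\bm{p})\ge 0$, with strict inequality whenever every coordinate is finite, because $\sum_{m\neq t}\exp(z_m)>0$ forces $\log\sum_m\exp(z_m)>z_t$. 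Approachability of zero: the sequence $z_t^{(n)}=n$, $z_m^{(n)}=0$ for $m\neq t$ yields $\mathcal{L}_{\mathrm{CE}}\to 0$, so the infimum equals $0$. Combining these, the infimum is not attained at any $\bm{z}\in\mathbb{R}^{M}$, so the $\arg\min$ lies on the extended-real boundary: at least one coordinate must equal $+\infty$ (the target coordinate) or $-\infty$ (some non-target coordinate).

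Finally, I would translate this back through the assumption: because part (c) requires $\mathcal{L}_{\mathrm{CE}}(\bm{z}_{\bm{\theta}^*}(\bm{x}^{(i)}),\bm{p}^{(i)})=\min_{\bm{\theta}}\mathcal{L}_{\mathrm{CE}}$ for each $i$, and the per-example minimum over the logit vector is only attained in the extended reals, the corresponding coordinate of $\bm{z}_{\bm{\theta}^*}(\bm{x}^{(i)})$ cannot be finite, which is exactly the statement of the theorem. The only subtle point I expect is to be explicit that parts (b) and (c) of the assumption genuinely let us replace minimisation over $\bm{\theta}$ by pointwise minimisation over $\bm{z}$, so that the nonattainment on $\mathbb{R}^{M}$ implies nonattainment at $\bm{\theta}^*$; the rest is the one-line log-sum-exp inequality above.
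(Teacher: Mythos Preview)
Your argument is correct and reaches the same conclusion as the paper, but by a genuinely different route. The paper's proof works through the first-order optimality condition: it computes $\partial\mathcal{L}_{\mathrm{CE}}/\partial z_k$, observes that this vanishes only when the softmax output equals the one-hot target, and then shows that $[\bm{f}_s(\bm{z})]_t=1$ forces $\sum_{m\neq t}\exp(z_m)=0$, hence $z_m\to-\infty$ for every $m\neq t$. You instead bypass calculus entirely: the log-sum-exp rewrite gives $\mathcal{L}_{\mathrm{CE}}(\bm{z},\bm{p})=\log\bigl(1+\sum_{m\neq t}\exp(z_m-z_t)\bigr)>0$ at every finite $\bm{z}$, while an explicit sequence drives the loss to~$0$, so the infimum is not attained in $\mathbb{R}^M$. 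Your version is slightly more self-contained in that it never presupposes a finite minimiser exists before analysing it, whereas the paper's proof is implicitly a contradiction argument (stationarity at a putative finite optimum leads to an impossible equation). On the other hand, the paper's gradient computation is reused verbatim in the later proofs of Propositions~1--2 and Theorems~2--3, so its approach has the advantage of setting up machinery that the rest of Section~\ref{logitSec} relies on; your inequality argument would need to be redone differently for the regularised objectives.
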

This theorem indicates that softmax cross-entropy enlarges logit norms. 
Though this result has been mentioned in several studies \cite{LabelSmo,warde201611},
we show this theorem to clarify our motivation and claims.
From this theorem, we can derive the following corollary:
\begin{corollary}
\label{SCECoro}
If all elements of inputs $x_k^{(i)}$ are normalized as $0\leq x_k^{(i)}\leq 1$ and training dataset has at least two different labels,
the optimal logit function $\bm{z}_{\bm{\theta}^*}(\bm{x})$
for softmax cross-entropy is not globally Lipschitz continuous function, i.e.,
 there is not a finite constant $C\geq 0$ as
 \begin{align}\textstyle
 ||\bm{z}_{\bm{\theta}^*}(\bm{x}^{(i)})-\bm{z}_{\bm{\theta}^*}(\bm{x}^{(j)})||_{\infty}
 &\leq C ||\bm{x}^{(i)}-\bm{x}^{(j)}||_{\infty},\\
 \forall\bm{x}^{(i)},\bm{x}^{(j)}&\in \mathcal{X}.\nonumber
 \end{align}
\end{corollary}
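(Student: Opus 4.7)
The plan is to derive the corollary directly from Theorem~\ref{SCETh} by exhibiting two training points whose optimal logit vectors must differ by an infinite amount in the $\infty$-norm, while the inputs differ by at most $1$ under the $[0,1]$ normalization. The right-hand side will therefore be at most the finite number $C$, and a contradiction will follow once the left-hand side is shown to be infinite.

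First I would pick two training indices $i$ and $j$ with distinct one-hot targets $\bm{p}^{(i)}\!\neq\!\bm{p}^{(j)}$, and denote their target classes by $t^{(i)}\!\neq\!t^{(j)}$; Assumption~(a) forces $\bm{x}^{(i)}\!\neq\!\bm{x}^{(j)}$, so $||\bm{x}^{(i)}-\bm{x}^{(j)}||_\infty\!>\!0$, and the normalization $0\!\le\!x_k\!\le\!1$ bounds this distance by $1$. For the left-hand side, I would invoke the reasoning behind Theorem~\ref{SCETh}: minimizing $\mathcal{L}_{\mathrm{CE}}$ against a one-hot target forces $P_{\bm{\theta}^*}(t^{(i)}|\bm{x}^{(i)})\!=\!1$, i.e.\ $z_{\bm{\theta}^*,t^{(i)}}(\bm{x}^{(i)})-z_{\bm{\theta}^*,k}(\bm{x}^{(i)})\!=\!+\infty$ for every $k\!\neq\!t^{(i)}$, and analogously for $\bm{x}^{(j)}$.

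Next I would combine these gaps telescopically: adding the two divergent differences $z_{\bm{\theta}^*,t^{(i)}}(\bm{x}^{(i)})-z_{\bm{\theta}^*,t^{(j)}}(\bm{x}^{(i)})$ and $z_{\bm{\theta}^*,t^{(j)}}(\bm{x}^{(j)})-z_{\bm{\theta}^*,t^{(i)}}(\bm{x}^{(j)})$ yields $+\infty$, so at least one of the two coordinates $t^{(i)}$ or $t^{(j)}$ must contribute an infinite value to $\bm{z}_{\bm{\theta}^*}(\bm{x}^{(i)})-\bm{z}_{\bm{\theta}^*}(\bm{x}^{(j)})$, giving $||\bm{z}_{\bm{\theta}^*}(\bm{x}^{(i)})-\bm{z}_{\bm{\theta}^*}(\bm{x}^{(j)})||_\infty=\infty$. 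No finite $C$ can then satisfy the proposed Lipschitz bound, which is the desired contradiction.

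The main obstacle will be formalizing the manipulation of infinite quantities cleanly. Because softmax is invariant under additive shifts of the logits, any single coordinate can be driven to $\pm\infty$ by reparameterization, so the phrase ``a coordinate is infinite'' must be interpreted as a statement about gaps between coordinates rather than absolute values. I would handle this either by working with a sequence $\bm{\theta}_n$ driving the training loss to its infimum and showing that the corresponding sequence of finite Lipschitz constants must diverge along this sequence, or by formally allowing $\bm{z}_{\bm{\theta}^*}$ to take values in the extended reals and observing that the telescoping argument above guarantees an infinite coordinatewise difference irrespective of the additive normalization chosen for each logit vector.
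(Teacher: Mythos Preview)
Your proposal is correct and follows essentially the same route as the paper: pick two training points with distinct labels, bound the right-hand side by $C$ via the $[0,1]$ normalization, and use Theorem~\ref{SCETh} to force the left-hand side to be infinite, yielding the contradiction.

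The only noteworthy difference is in how the infinite coordinate is identified. The paper's proof is more direct: it relies on the specific form established inside the proof of Theorem~\ref{SCETh}, namely that $z_{\bm{\theta}^*,m}(\bm{x}^{(j)})\to-\infty$ for every $m\neq t'$, while $z_{\bm{\theta}^*,t}(\bm{x}^{(i)})>-\infty$; since $t\neq t'$, coordinate $t$ of the difference is immediately infinite. You instead work with shift-invariant \emph{gaps} $z_{t^{(i)}}-z_{t^{(j)}}$ at each point and telescope, which is a bit more roundabout but has the virtue of not depending on which additive normalization of the optimal logits one adopts. Your final paragraph's concern about formalizing the infinities (via a minimizing sequence $\bm{\theta}_n$) is well taken and arguably more rigorous than the paper's own treatment, which manipulates $\pm\infty$ directly; either formalization works here, and the sequence version is the cleaner one.
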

This corollary indicates that the optimal logit function for softmax cross-entropy does not 
have a Lipschitz constant; logit vectors can be drastically changed
by small perturbations. 
This fact does not immediately mean that the models are vulnerable since
the logit gaps between the correct label and other labels on given data
are also infinite in this case.
Even so, 
since adversarial examples are outside of the training data,
it is difficult to expect the outputs for adversarial examples.
In fact, constraints of Lipschitz constants are used for improving adversarial robustness \cite{parseval,farnia2018generalizable,NIPS2019_9319,lmt}.
This corollary also indicates that finite logit values are necessary conditions
for a Lipschitz constant.
Thus, robust models have finite logit values, which is in agreement with the empirical observation that adversarial training
reduces the norms of logits along with robustness \cite{shafahi2019}.

Next, we consider the optimal logits when using logit regularization methods.
In the same way as Theorem~\ref{SCETh}, we can show the following propositions:
\begin{proposition}
    The optimal logits for label smoothing $\bm{z}_{\bm{\theta}^*}(\bm{x}^{(i)})\!=\!\mathrm{arg}\min_{\bm{z}_{\bm{\theta}}}\mathcal{L}
    _{\mathrm{CE}}(\bm{z}_{\bm{\theta}}(\bm{x}^{(i)}),\bm{p}^{(i)})$
    satisfy
    \begin{align*}
    \textstyle
    z_{\bm{\theta}^*,k}(\bm{x}^{(i)})\!=\!
    \begin{cases}
        \log(\frac{1\!-\!\alpha}{\alpha}\!\sum_{m\neq t}\!\mathrm{exp}((z_{\bm{\theta}^*,m}(\bm{x}^{(i)}))))&\!\!\!\!\!k=t
        \\
        \log(\frac{\alpha}{M\!-\!1\!-\!\alpha}\!\sum_{m\neq k}\mathrm{exp}(z_{\bm{\theta}^*,m}(\bm{x}^{(i)})))&\!\!\!\!\!k\neq t.
    \end{cases}
    \end{align*}
    If an element of $\mathrm{exp}(\bm{z_{\bm{\theta}^*}}(\bm{x}^{(i)}))$ has a finite
    value, all elements of $\bm{z}_{\bm{\theta}^*}(\bm{x}^{(i)})$ have finite
    values.
\end{proposition}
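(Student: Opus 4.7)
The plan is to exploit Assumption (b) to reduce the problem to an unconstrained finite-dimensional optimization over the logit vector at each data point separately, then write down the first-order optimality conditions and solve them explicitly in the label-smoothing case.

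First I would rewrite the label-smoothing loss in a convenient form. Using $\sum_k p_k^{(i)}=1$ and $\log[\bm{f}_s(\bm{z})]_k = z_k - \log\sum_m \exp(z_m)$, the per-example loss becomes
\begin{align*}
\mathcal{L}_{\mathrm{CE}}(\bm{z},\bm{p}^{(i)}) = -\sum_{k=1}^M p_k^{(i)} z_k + \log\sum_{m=1}^M \exp(z_m),
\end{align*}
which, as a function of $\bm{z}\in\mathbb{R}^M$, is convex (log-sum-exp is convex and the rest is linear). By Assumption (b)--(c) we may minimize freely over $\bm{z}$, so any stationary point is the global optimum.

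Next I would compute $\partial\mathcal{L}_{\mathrm{CE}}/\partial z_j = -p_j^{(i)} + \exp(z_j)/\sum_m\exp(z_m)$ and set it to zero, giving the key stationarity identity
\begin{align*}
\exp(z_{\bm{\theta}^*,j}(\bm{x}^{(i)})) = p_j^{(i)} \sum_{m=1}^M \exp(z_{\bm{\theta}^*,m}(\bm{x}^{(i)})) \qquad (j=1,\dots,M).
\end{align*}
Plugging in the label-smoothing probabilities ($p_t^{(i)}=1-\alpha$ and $p_k^{(i)}=\alpha/(M-1)$ for $k\neq t$), separating the $j$-th term from the sum on the right, and solving for $\exp(z_{\bm{\theta}^*,j})$ is then a short algebraic manipulation. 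For $j=t$ one isolates $\exp(z_{\bm{\theta}^*,t})(1-(1-\alpha))=(1-\alpha)\sum_{m\neq t}\exp(z_{\bm{\theta}^*,m})$, and for $j\neq t$ one isolates $\exp(z_{\bm{\theta}^*,j})(1-\alpha/(M-1))=(\alpha/(M-1))\sum_{m\neq j}\exp(z_{\bm{\theta}^*,m})$; taking logarithms yields exactly the two cases in the statement.

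For the finiteness claim I would reuse the stationarity identity directly: since every label-smoothing coefficient $p_j^{(i)}$ is strictly positive, if $\exp(z_{\bm{\theta}^*,j_0})$ is finite for some index $j_0$, then $\sum_m \exp(z_{\bm{\theta}^*,m}) = \exp(z_{\bm{\theta}^*,j_0})/p_{j_0}^{(i)}$ is finite, and then each $\exp(z_{\bm{\theta}^*,j}) = p_j^{(i)}\sum_m\exp(z_{\bm{\theta}^*,m})$ is finite as well. I do not expect a serious obstacle here; the only subtlety worth a line is justifying that the stationary point is the global minimum, which follows from the convexity of $\mathcal{L}_{\mathrm{CE}}$ in $\bm{z}$ noted above, and that $p_j^{(i)}>0$ for every $j$ (which rules out the degenerate one-hot situation that drove Theorem~\ref{SCETh}).
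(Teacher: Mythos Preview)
Your proposal is correct and follows essentially the same approach as the paper: both reduce to the first-order condition $[\bm{f}_s(\bm{z}_{\bm{\theta}^*})]_j=p_j^{(i)}$ and algebraically isolate $z_{\bm{\theta}^*,j}$. Your version is slightly tidier in two places---you explicitly invoke convexity of log-sum-exp to pass from stationarity to global optimality (the paper leaves this implicit), and your finiteness argument via $\sum_m\exp(z_{\bm{\theta}^*,m})=\exp(z_{\bm{\theta}^*,j_0})/p_{j_0}^{(i)}$ is more direct than the paper's case-by-case contradiction---but the core argument is the same.
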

\begin{proposition}
The optimal logits for logit squeezing
$\!\bm{z}_{\bm{\theta}^*}(\bm{x}^{(i)})\!=\!\mathrm{arg}\!\min_{\bm{z}_{\bm{\theta}}} \mathcal{L}
_{\mathrm{CE}}(\bm{z}_{\bm{\theta}}(\bm{x}^{(i)}),\bm{p}^{(i)}\!)\!+\!\frac{\lambda}{2}\!||\bm{z}_{\bm{\theta}}(\bm{x}^{(i)})||_2$ 
satisfy
\begin{align*}
\!z_{\bm{\theta}^*\!,k}(\bm{x}^{(i)})\!=\!\begin{cases}\textstyle
(-[f_s(\bm{z}_{\bm{\theta}^*}(\bm{x}^{(i)})\!)]_t+1)/\lambda &\textstyle k=t\\ \textstyle
\!=\!-[f_s(\bm{z}_{\bm{\theta}^*}(\bm{x}^{(i)})\!)]_k/\lambda&\textstyle k\neq t.\end{cases}
\end{align*}
Namely, all elements of the optimal logit vector $\bm{z}_{\bm{\theta}^*}(\bm{x}^{(i)})$ have finite values.
\end{proposition}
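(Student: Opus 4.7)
The plan is to reduce the minimization to a per-sample problem using the Assumption (exactly as in the proof of Theorem~\ref{SCETh}), then characterize the optimum by a first-order condition that is valid because the objective is strictly convex and coercive, and finally read off finiteness from the fact that each softmax output lies in $[0,1]$.

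First, by parts (b) and (c) of the Assumption, the joint minimization over $\bm{\theta}$ decouples into independent minimizations of
\begin{align*}
F_i(\bm{z})\!:=\!\mathcal{L}_{\mathrm{CE}}(\bm{z},\bm{p}^{(i)})+\tfrac{\lambda}{2}||\bm{z}||_2^2
\end{align*}
over $\bm{z}\in\mathbb{R}^M$ (I interpret the regularizer as the squared $L_2$ norm, consistent with the earlier definition of logit squeezing and with the linear gradient implicit in the claimed formula). Cross-entropy composed with softmax equals $\log\sum_m \exp(z_m)-\sum_k p_k^{(i)} z_k$, which is convex since LogSumExp is convex; the term $\tfrac{\lambda}{2}||\bm{z}||_2^2$ with $\lambda>0$ is strictly convex and coercive. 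Hence $F_i$ is strictly convex and coercive, so it admits a unique global minimizer $\bm{z}_{\bm{\theta}^*}(\bm{x}^{(i)})$ characterized by $\nabla F_i(\bm{z}_{\bm{\theta}^*}(\bm{x}^{(i)}))=\bm{0}$.

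Next, I use the standard identity $\partial\mathcal{L}_{\mathrm{CE}}/\partial z_k=[\bm{f}_s(\bm{z})]_k-p_k^{(i)}$. Adding the regularizer gradient $\lambda z_k$ and setting the result to zero gives, for one-hot targets with true class $t$,
\begin{align*}
[\bm{f}_s(\bm{z}_{\bm{\theta}^*}(\bm{x}^{(i)}))]_t-1+\lambda z_{\bm{\theta}^*,t}(\bm{x}^{(i)})&=0,\\
[\bm{f}_s(\bm{z}_{\bm{\theta}^*}(\bm{x}^{(i)}))]_k+\lambda z_{\bm{\theta}^*,k}(\bm{x}^{(i)})&=0,\ k\neq t,
\end{align*}
which rearrange to the two cases stated in the proposition.

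Finally, since $0\leq[\bm{f}_s(\bm{z}_{\bm{\theta}^*}(\bm{x}^{(i)}))]_k\leq 1$ by definition of softmax, the two equations give $|z_{\bm{\theta}^*,k}(\bm{x}^{(i)})|\leq 1/\lambda$ for every $k$, so all components are finite. The only mild subtlety is that the stationary condition is implicit: $\bm{z}_{\bm{\theta}^*}(\bm{x}^{(i)})$ appears on both sides through $\bm{f}_s$. This is not an obstacle to the proposition, however, because existence of a solution is guaranteed by the strict convexity and coercivity of $F_i$, and the uniform softmax bound gives finiteness directly from the equation without needing to solve the fixed point explicitly.
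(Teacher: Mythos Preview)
Your proof is correct and follows essentially the same approach as the paper: reduce to a per-sample problem via the Assumption, set the gradient of the softmax cross-entropy plus quadratic regularizer to zero, and bound the resulting expressions using $0\leq[\bm{f}_s(\cdot)]_k\leq 1$. Your added remarks on strict convexity and coercivity justifying existence of a finite minimizer (and hence that the first-order condition actually characterizes it) make the argument slightly more rigorous than the paper's, which simply asserts the stationarity condition at the minimum; your interpretation of the regularizer as the squared $L_2$ norm is also the intended one, matching both the earlier definition of logit squeezing and the gradient $\lambda z_k$ used in the paper's own derivation.
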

These propositions indicate that logit regularization methods enable
the optimal logit values to have finite values; i.e., these methods satisfy
the necessary condition of Lipschitz continuous.
This property might improve robustness since the logit functions 
do not tend to change drastically by small perturbation on input data points.
If logit regularization methods induce small Lipschitz constants,
the models can be robust against adversarial examples. 

From the hypothesis that small logits can improve adversarial robustness, 
we consider approaches to bound logits other than logit regularization methods.
As an alternative to logit regularization, we can constrain logits by addition of a
bounded activation function just before softmax.
As such functions, hyperbolic tangent (tanh) and sigmoid functions
are common functions in neural networks.
If we use these monotonically increasing functions just before softmax,
we have the following theorem: 
\begin{theorem}
\label{mono}
Let $g(z)$ be tanh or sigmoid function and $\gamma$ be a hyper-parameter
 satisfying $0\!<\!\gamma\!<\!\infty$.
If we use tanh or sigmoid function before softmax
 as $\bm{f}_s(\gamma g(\bm{z}_{\bm{\theta}}(\bm{x}^{(i)})))$, 
all elements of the optimal pre-logit vector
 $\bm{z}_{\bm{\theta}^*}(\bm{x}^{(i)})\!=\!\mathrm{arg}\min_{\bm{z}_{\bm{\theta}}} \mathcal{L}
_{\mathrm{CE}}(\gamma g(\bm{z}_{\bm{\theta}}(\bm{x}^{(i)})),\bm{p}^{(i)}) $
do not have finite values
while all elements of the optimal logit vector $\gamma g(\bm{z}_{\bm{\theta}^*})$ has finite values.
\end{theorem}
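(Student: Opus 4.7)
My plan is to examine the partial derivatives of the modified cross-entropy loss with respect to the pre-logits and to exploit strict monotonicity and boundedness of $g$. For a one-hot target $\bm{p}^{(i)}$ with correct class $t$, the loss can be written as
\begin{align*}
\mathcal{L}_{\mathrm{CE}}(\gamma g(\bm{z}_{\bm{\theta}}(\bm{x}^{(i)})),\bm{p}^{(i)}) = -\gamma g(z_{\bm{\theta},t}) + \log\sum_{m=1}^M \exp(\gamma g(z_{\bm{\theta},m})).
\end{align*}
Differentiating with respect to $z_{\bm{\theta},k}$ yields
\begin{align*}
\frac{\partial \mathcal{L}_{\mathrm{CE}}}{\partial z_{\bm{\theta},k}} = \gamma\, g'(z_{\bm{\theta},k})\bigl([\bm{f}_s(\gamma g(\bm{z}_{\bm{\theta}}))]_k - \mathbb{1}[k=t]\bigr).
\end{align*}

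The crux of the argument is the following observation. Because $g$ (tanh or sigmoid) is bounded, the vector $\gamma g(\bm{z}_{\bm{\theta}})$ has finite entries for every finite $\bm{z}_{\bm{\theta}}$, so each $\exp(\gamma g(z_{\bm{\theta},m}))$ is strictly positive and finite. This forces $0 < [\bm{f}_s(\gamma g(\bm{z}_{\bm{\theta}}))]_k < 1$ for all $k$, so the softmax output can never equal a one-hot vector at any finite pre-logit. Combined with $g'(z) > 0$ for all finite $z$ (strict monotonicity of both tanh and sigmoid), the partial derivative above is strictly negative for $k=t$ and strictly positive for $k \neq t$. Hence at any finite $\bm{z}_{\bm{\theta}}$ the loss can be strictly decreased by enlarging $z_{\bm{\theta},t}$ and decreasing $z_{\bm{\theta},k}$ for $k \neq t$, so no finite critical point exists, and any minimizing sequence must drive $z_{\bm{\theta}^*,t}\to +\infty$ and $z_{\bm{\theta}^*,k}\to -\infty$ for $k\neq t$. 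Therefore every component of the optimal pre-logit vector fails to be finite.

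For the second assertion, I would simply observe that boundedness of $g$ gives $\gamma g(z)\in[-\gamma,\gamma]$ for tanh and $\gamma g(z)\in[0,\gamma]$ for sigmoid regardless of $z$. Taking the appropriate limits along any minimizing sequence yields $\gamma g(z_{\bm{\theta}^*,t})\to \gamma$ and $\gamma g(z_{\bm{\theta}^*,k})\to -\gamma$ (tanh) or $0$ (sigmoid) for $k\neq t$, all of which are finite.

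The main subtlety, rather than a computational obstacle, is a matter of interpretation: the infimum of the loss is not attained at any finite $\bm{z}_{\bm{\theta}}$, so the statement ``the optimal pre-logit vector does not have finite values'' must be read in the sense that every minimizing sequence has all components diverging to $\pm\infty$, exactly as in Theorem~\ref{SCETh}. Once that convention is fixed, the monotonicity-plus-boundedness argument above carries the entire proof without further calculation.
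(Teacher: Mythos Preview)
Your proposal is correct and follows essentially the same approach as the paper: both factor the gradient via the chain rule into a softmax-difference term and a $g'$ term, argue the softmax term cannot vanish because the bounded $g$ keeps the logits finite, and conclude that the gradient can only vanish when $g'(z)=0$, i.e., as $z\to\pm\infty$. Your version adds a useful refinement the paper omits---the explicit sign analysis showing $z_{\bm{\theta}^*,t}\to+\infty$ and $z_{\bm{\theta}^*,k}\to-\infty$ for $k\neq t$---and is more careful about the interpretation of ``optimal'' when the infimum is not attained, but the underlying argument is identical.
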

This theorem indicates that though optimal logits become finite values by addition of tanh or sigmoid,
the pre-logit functions $\bm{z}_{\bm{\theta}^*}(\bm{x})$ do not have finite Lipschitz constants.
Therefore, the pre-logit can be changed by small perturbation.
Thus, this theorem indicates that bounded logits might not be sufficient for adversarial
robustness.
\captionsetup[subfloat]{farskip=0.5pt,captionskip=2pt}
\begin{figure}[tbp]
\centering
\includegraphics[width=\linewidth]{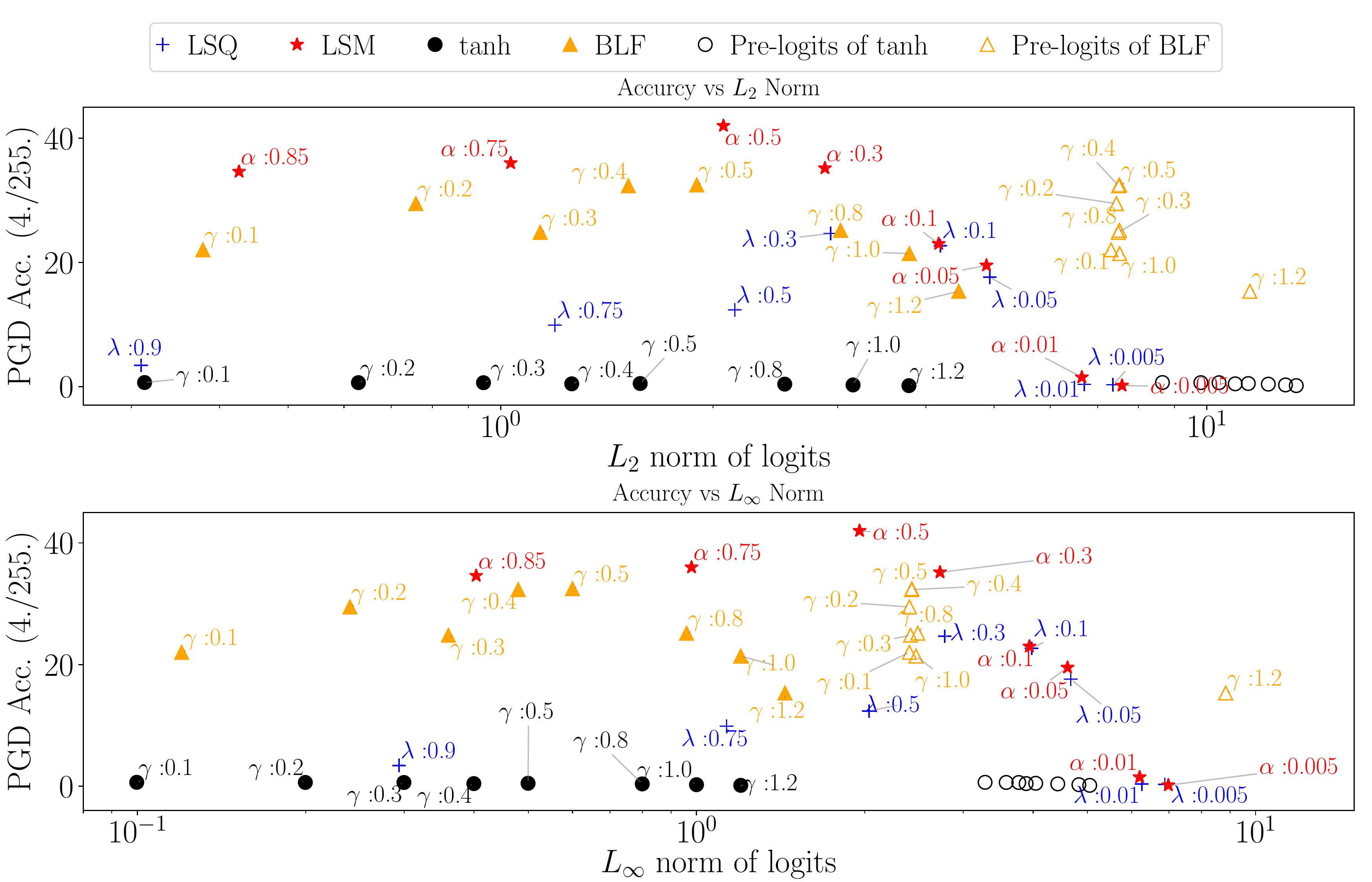}
\caption{$L_2$ (top) and $L_\infty$ (bottom) norms of $\bm{z}$ or $\gamma g(\bm{z})$ vs robust accuracies. Pre-logits of BLF and tanh denote accuracies against norms of $\bm{z}$ instead of norms of $\gamma g(\bm{z})$.~We omit $\gamma$ on Inputs of tanh to increase visibility.}
\label{LogitVSRob}
\end{figure}
\subsection{Empirical evaluation of logit regularization}
As shown above, logit regularization can induce the finite logit values,
and tanh and sigmoid functions cannot keep pre-logits small.
In this section, we empirically investigate the relation between logit norms and
 adversarial robustness.
We evaluated the average norms of logits on clean data of CIFAR10 
and accuracies on adversarial examples of CIFAR10 (PGD $\varepsilon\!=\!4/255$ and 100 iterations)
 for various logit regularization methods.
Note that we normalized CIFAR10 such that their pixel values are in [0,1].
We used logit squeezing (LSQ), label smoothing (LSM), and bounded logits by tanh with
various $\alpha$, $\lambda$, and $\gamma$. 
The detailed experimental conditions are provided in appendix~\ref{excond}.

Figure~\ref{LogitVSRob} shows adversarial robustness against the average norms 
of logit vectors. 
Note that results of BLF in \rfig{LogitVSRob} are discussed in the next section.
In this figure, each point corresponds to each hyper-parameter.
For tanh,
we also plot adversarial robustness against average norms of pre-logit vectors $\bm{z}$ as well as logit vectors $\gamma g(\bm{z})$. 
In \rfig{LogitVSRob}, models learned using logit regularization methods have various norms and robustness
depending on $\alpha$ and $\lambda$, 
and the robust accuracies become almost zero when norms exceed about seven.
The results of tanh indicate that even if models have small logits by adding 
bounded functions,
they are vulnerable when their pre-logit norms are large.
In addition, the results of tanh imply that just scaling logits $\gamma \bm{z}$
does not improve the robustness.
From the observation, we need a new function that has bounded outputs and inputs. 
\begin{figure}[tbp]
\centering
\includegraphics[width=.9\linewidth]{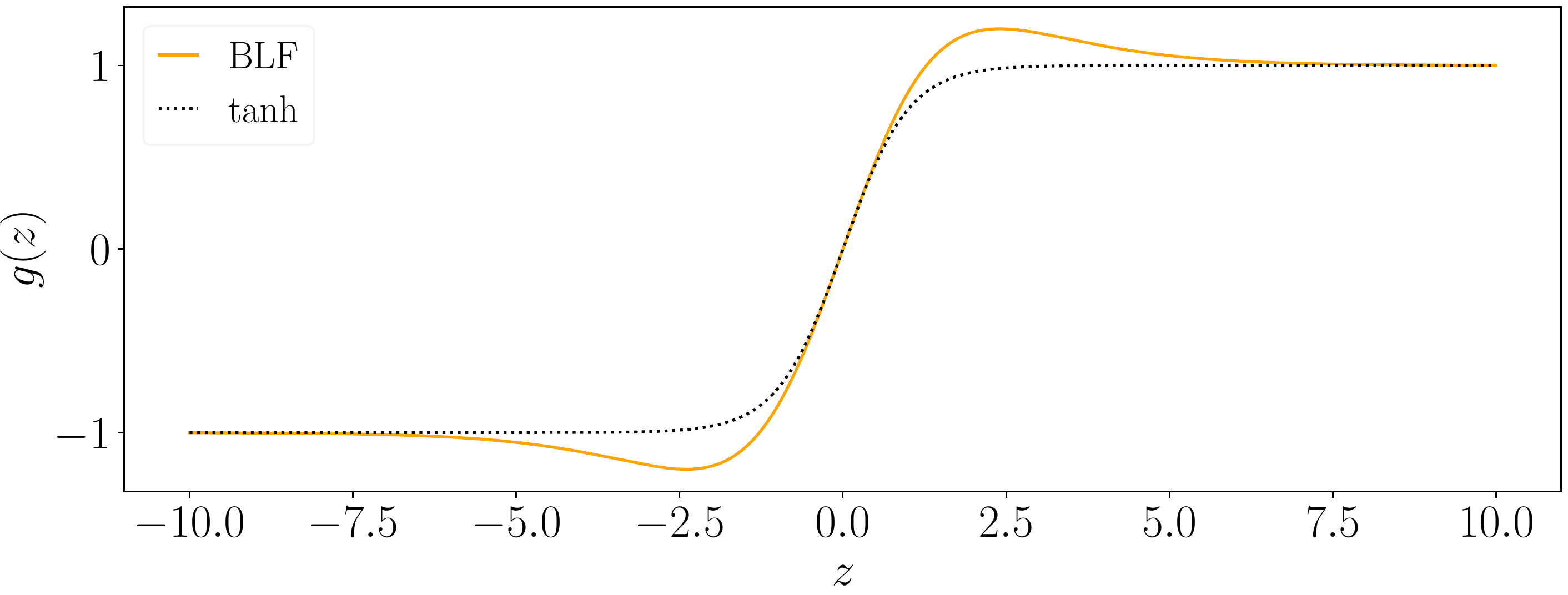}
\caption{Comparison of BLF and tanh}
\label{propFun}
\end{figure}
\section{Proposed method}
We propose constraining logits by the addition of
 a new activation function called bounded logit function (BLF) just before softmax.
BLF is defined as follows:
\begin{definition}
Bounded logit function (BLF) is defined as 
\begin{align}
g(z)=2\left\{z\sigma(z)+\sigma(z)-z\sigma^2(z)\right\}-1,
\label{propEq}
\end{align}
where $\sigma$ is sigmoid function. When $z$ is a vector, BLF becomes element-wise operation.
\end{definition}
BLF is similar to tanh as shown in \rfig{propFun}. 
In fact, BLF has 
some of the same properties of tanh, e.g.,
$\lim_{z\rightarrow+\infty}g(z)=1$, $\lim_{z\rightarrow-\infty}g(z)\!=\!-1$,
and $\left.\partial g(z)/\partial z \right|_{z=0}\!\!=\!z$.
However, this function has the maximum and minimum points in $-\sqrt{5}\!-\!1\!<\!z\!<\!-2$ 
and $2\!<\!z\!<\!\sqrt{5}\!+\!1$ while tanh does not have the finite maximum and minimum points.
Therefore, we have the following theorem:
\begin{theorem}
\label{PropPro}
Let $g(z)$ be BLF and $\gamma$ be a hyper-parameter satisfying $0\!<\!\gamma\!<\!\infty$.
If we use $g(z)$ before softmax as $\bm{f}_s(\gamma g(\bm{z}_{\bm{\theta}}(\bm{x})))$, 
 all elements of the optimal pre-logit vector
 $\bm{z}_{\bm{\theta}^*}(\bm{x}^{(i)})\!=\!\mathrm{arg}\!\min_{\bm{z}_{\bm{\theta}}} \mathcal{L}
_{\mathrm{CE}}(\gamma g(\bm{z}_{\bm{\theta}}(\bm{x}^{(i)})),\bm{p}^{(i)}) $
have finite values, and all elements of the optimal logit vector
 $\gamma g(\bm{z}_{\bm{\theta}^*})$ also have finite values.
Specifically, we have the following equalities and inequalities:
\begin{align}\textstyle
\gamma g(z_{\bm{\theta}^*,k}(\bm{x}^{(i)}))&\textstyle=
\begin{cases}
\gamma\max_z g(z)~~~k=t\\\textstyle
\gamma\min_z g(z)~~~\mathrm{otherwise},
\end{cases}\\
\textstyle\gamma<&\textstyle |\gamma g(z_{\bm{\theta}^*,k}(\bm{x}^{(i)}))|<\gamma\frac{\sqrt{5}+1}{2},\nonumber\\
z_{\theta^*,k}(\bm{x}^{(i)})&\textstyle=\begin{cases}\textstyle\mathrm{arg}\!\max_z g(z)~~~k=t\\\textstyle
\mathrm{arg}\!\min_z g(z)~~~\mathrm{otherwise},
\end{cases}\\
\textstyle 2<&\textstyle|z_{\bm{\theta}^*,k }(\bm{x}^{(i)})|<\sqrt{5}+1.\nonumber
\end{align}
\end{theorem}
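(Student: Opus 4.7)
The plan is to exploit the universal approximation assumption to treat each pre-logit vector as a free variable, reduce the minimization of cross-entropy to an independent optimization of $g$ at each coordinate, and then show that $g$ attains its global extrema at finite points.

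First, I would write the cross-entropy loss in closed form for a one-hot target at position $t$:
\begin{equation*}
\mathcal{L}_{\mathrm{CE}}(\gamma g(\bm{z}), \bm{p}^{(i)}) = -\gamma g(z_t) + \log\Big(\sum_{m=1}^M \exp(\gamma g(z_m))\Big).
\end{equation*}
Because $g$ acts elementwise, the loss depends on each coordinate $z_k$ only through $g(z_k)$. Direct differentiation shows the loss is strictly decreasing in $g(z_t)$ and strictly increasing in $g(z_k)$ for each $k\neq t$. Since the coordinates of $\bm{z}$ can be chosen independently by assumption (b), minimization reduces to choosing $z_t$ to maximize $g$ and each $z_k$ (for $k\neq t$) to minimize $g$; the existence of finite optimal pre-logits is then equivalent to $g$ having global extrema at finite points.

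Next, I would analyze $g$ directly. A short calculation shows that $g$ is odd and that its derivative factors as $g'(z)=2\sigma(z)(1-\sigma(z))\big[2+z(1-2\sigma(z))\big]$, so critical points satisfy the transcendental equation $z\tanh(z/2)=2$. The function $h(z)=z\tanh(z/2)$ is strictly increasing on $(0,\infty)$ (from $0$ to $\infty$), giving a unique positive critical point $z^*$; by oddness of $g$ the unique negative critical point is $-z^*$. Evaluating $h$ at the endpoints yields $h(2)=2\tanh(1)<2$ and $h(\sqrt{5}+1)>2$, so $z^*\in(2,\sqrt{5}+1)$ and $-z^*\in(-\sqrt{5}-1,-2)$. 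Comparing with the asymptotic values $\lim_{z\to\pm\infty}g(z)=\pm 1$ confirms these are the global max and min.

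Finally, the critical-point equation $z^*(1-2\sigma(z^*))=-2$ yields $\sigma(z^*)=(z^*+2)/(2z^*)$, and substituting into $g(z^*)=2\sigma(z^*)\big[z^*(1-\sigma(z^*))+1\big]-1$ collapses algebraically to the clean identity $g(z^*)=z^*/2$. Combined with $z^*\in(2,\sqrt{5}+1)$, this gives $\gamma<\gamma g(z^*)<\gamma(\sqrt{5}+1)/2$, and by oddness $\gamma g(-z^*)=-\gamma z^*/2$, establishing all of the theorem's equalities and inequalities. The main obstacle I expect is the clean analytic bound $h(\sqrt{5}+1)>2$, which reduces to $(\sqrt{5}-1)e^{\sqrt{5}+1}>\sqrt{5}+3$; this is numerically easy but slightly fiddly to present without a calculator, though the two-term Taylor estimate $e^{\sqrt{5}+1}>1+(\sqrt{5}+1)+(\sqrt{5}+1)^2/2=5+2\sqrt{5}$ suffices.
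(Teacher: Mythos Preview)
Your proposal is correct and follows essentially the same route as the paper: reduce the cross-entropy minimization to extremizing $g$ coordinatewise, locate the critical points of $g$ via the equation $2-z\tanh(z/2)=0$, bracket them in $(2,\sqrt{5}+1)$ and $(-\sqrt{5}-1,-2)$, and then use the identity $g(z^*)=z^*/2$ at critical points to read off the logit bounds.

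The differences are in execution rather than strategy. You exploit the oddness of $g$ and the strict monotonicity of $h(z)=z\tanh(z/2)$ on $(0,\infty)$ to obtain a \emph{unique} positive critical point and then transfer everything to the negative side by symmetry; the paper treats the two sides separately and uses the intermediate value theorem without asserting uniqueness. For the bracketing, the paper bounds $f(z)=2-z\tanh(z/2)$ via the rational inequalities $\tfrac{z}{1+z}<1-e^{-z}$ and $e^{z}<\tfrac{1}{1-z}$, which after algebra yield $f(z)<0$ for $|z|>\sqrt{5}+1$; you instead evaluate $h$ directly at the endpoint and close the estimate with a second-order Taylor bound on $e^{\sqrt{5}+1}$. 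Both arguments land on the same interval, and your version is a bit more economical.
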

Theorem~\ref{PropPro} indicates that
this function gives finite logits and pre-logits\footnote{If we need more specific values of the optimal logits and inputs, they can be solved numerically.}
 unlike tanh.
Therefore, we can keep both logits and pre-logits as small values when we add BLF before softmax.
We can control the scale of logits by using the hyper-parameter $\gamma$.
We conducted experiments using various $\gamma$ in the same manner as mentioned in the previous section,
 and evaluated adversarial robustness against norms of logits and pre-logits (\rfig{LogitVSRob}).
From this figure, BLF keeps the logits and pre-logits small, and
its robust accuracy is higher than that of logit squeezing.
Furthermore, since the optimal pre-logits $z_{\bm{\theta}^*,j}(\bm{x}^{(i)})$ do not depend on $\gamma$,
pre-logits of BLF on some $\gamma$ can have almost the same norms; the $L_\infty$ norms are vertically
aligned on about 2.4 despite the difference in $\gamma$. 
The result indicates that the empirical optimal pre-logits follow Theorem 3,
though our theoretical results are based on the Assumption in Section~\ref{AsSec}.

$\gamma$ can be used as a learnable parameter. However, the optimal $\gamma$ becomes infinitely large 
by minimization of softmax cross-entropy, and the logit norms become infinite values.
Thus, the learnable $\gamma$ does not improve
adversarial robustness.
We also evaluated a learnable version of BLF (L-BLF) in the next section.
In this setting, we used $\gamma=\mathrm{softplus}(\tilde{\gamma})$ and optimized $\tilde{\gamma}$
to keep $\gamma$ non-negative.

Note that one of the reasons why we use BLF is that BLF is similar to tanh, so it is easy to verify
the effect of the finite optimal points.
We can use other bounded functions, which are not monotonically increasing,
instead of BLF. We evaluate some such functions in appendix~\ref{EvalSec}.
\section{Experiments}\label{exp}
In this section, we conducted experiments to evaluate the proposed method in terms of (a)
robustness against gradient-based attacks, 
(b) robustness against gradient-free attacks, 
and (c) operator norms of models.
In the experiments, we evaluated the models using only clean training data (standard training)
and using adversarially perturbed training data (adversarial training), respectively.
\subsection{Experimental Conditions}
This~section~gives~an~outline~of~the~experimental~conditions and 
the details are provided in appendix~\ref{excond}.
Datasets of the experiments were MNIST \cite{mnist} and CIFAR10 \cite{cifar}.
Our method was compared with a model trained without any logit regularization methods (Baseline),
logit squeezing (LSQ), and label smoothing (LSM).
In our method, we evaluated BLF with fixed $\gamma$ (BLF) and BLF with learnable $\gamma$ (L-BLF).
We also compared them with TRADES,
which is a strong defense method using adversarial examples \cite{TRADES}, in the adversarial training setting.

For MNIST, we used a convolutional neural network (CNN) composed of two convolutional layers 
and two fully connected layers (2C2F)
and one that is composed of four convolutional layers 
and three fully connected layers (4C3F) following \cite{TRADES}.
For CIFAR10, we used ResNet-18 (RN18) \cite{resnet} and WideResNet-34-10 (WRN) \cite{WRN} also following \cite{TRADES}.

We used untargeted projected gradient descent (PGD), which is the most popular white box attack,
as a gradient-based attack and used SPSA and Square Attacks as gradient-free attacks.
The hyper-parameters for PGD were based on \cite{pgd2}.
The $L_\infty$ norm of the perturbation $\varepsilon$ was set to $\varepsilon\!=\!0.3$ for MNIST
and $\varepsilon\!=\!8/255$ for CIFAR10 at training time.
For PGD, we randomly initialized the perturbation and updated it for 40 iterations with a step size of 0.01 on MNIST
at training and evaluation times,
and on CIFAR10 for 7 iterations with a step size of 2/255
at training time and 100 iterations with the same step size at evaluation time.
At evaluation time, we use $\varepsilon\!=\![0, 0.05, \dots, 0.3]$ on MNIST
and $\varepsilon\!=\![0, 2/255, \dots, 20/255]$ on CIFAR10.
$\varepsilon\!=\!0$ corresponds to clean data.
For TRADES, we set hyper-parameters of adversarial examples based on the code provided
by the authors.\footnote{https://github.com/yaodongyu/TRADES}
On MNIST, step size was set to 0.01,
and the number of steps was set to 40, and $\varepsilon$ was set to 0.3.
On CIFAR10, step size was set to 2/255,
and the number of steps was set to 10, and $\varepsilon$ was set to 8/255.
We selected the best hyper-parameters of our method $\gamma$, logit squeezing $\lambda$, label smoothing $\alpha$, and TRADES $\beta$
among five parameters. The selected hyper-parameters are shown in Figs.~\ref{MNIST-2CNN} and \ref{CIFAR10Fig}
for MNIST and CIFAR10, respectively.
For WRN, we used the same hyper-parameters as those of RN18. 
We trained models for five times for MNIST and three times for CIFAR10 and
show the average and standard deviation of test accuracies.
To generate adversarial examples, we used advertorch \cite{ding2018advertorch}.
\subsection{Robustness against gradient-based methods}
\subsubsection{Accuracy against PGD}
Figures~\ref{MNIST-2CNN}~(a)-(d) show accuracies on MNIST attacked by PGD.
In these figures, results of $\varepsilon\!=\!0$ correspond to clean accuracy.
For 2C2F (\rfig{MNIST-2CNN}(a)) in the standard training setting, label smoothing is the most 
robust against PGD until $\varepsilon$ is smaller than 0.2.
For 2C2F and 4C3F (Figs.~\ref{MNIST-2CNN}(a) and (c)) with $\varepsilon>0.15$, BLF is the most robust in standard training.
When we use the learnable $\tilde{\gamma}$, L-BLF does not improve the robustness.
This is because $\tilde{\gamma}$ becomes large to minimize the loss function,
and norms of logits have large values.
In the adversarial training setting, our proposed function improves
robustness the most for 2C2F and is comparable to TRADES for 4C3F.
Note that TRADES of 2C2F (\rfig{MNIST-2CNN}(b)) is not more robust than Baseline in our experiments.
This might be because we train models on training data attacked by PGD with $\varepsilon=0.3$
following \cite{pgd2}
while \citet{TRADES} train them on training data attacked by PGD with $\varepsilon=0.1$ in Table~4 of \cite{TRADES}.

Figure~\ref{CIFAR10Fig} shows the results on CIFAR10.
In the standard training setting, label smoothing improves robustness the most for RN18, and
our proposed method improves robustness the most for WRN.
In the adversarial training setting, our method improves the robustness the most.
It is more robust than TRADES 
even though the number of iteration for TRADES is larger than that of our method.

The reason label smoothing and logit squeezing are not so effective
in the adversarial training might be due to the complexity of the objective function:
regularization terms might disturb the mini-max problem for adversarial robustness.
On the other hand, our method does not change the objective function.
Thus, it is more suitable for adversarial training.
\begin{figure}[tb]
\centering
\subfloat[Standard training]{\includegraphics[width=.5\linewidth]{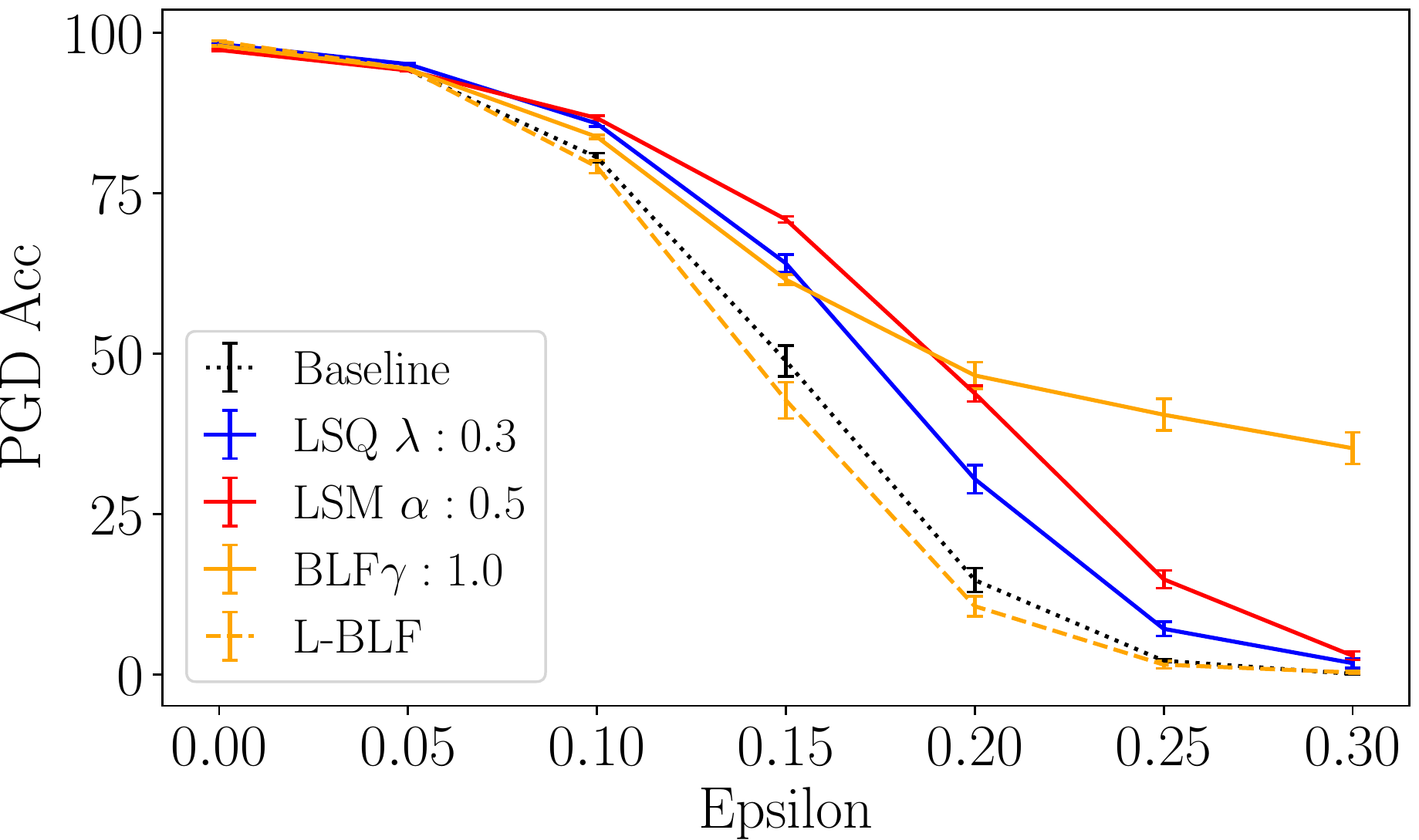}}
\centering\hfill
\subfloat[Adversarial training]{\includegraphics[width=.5\linewidth]{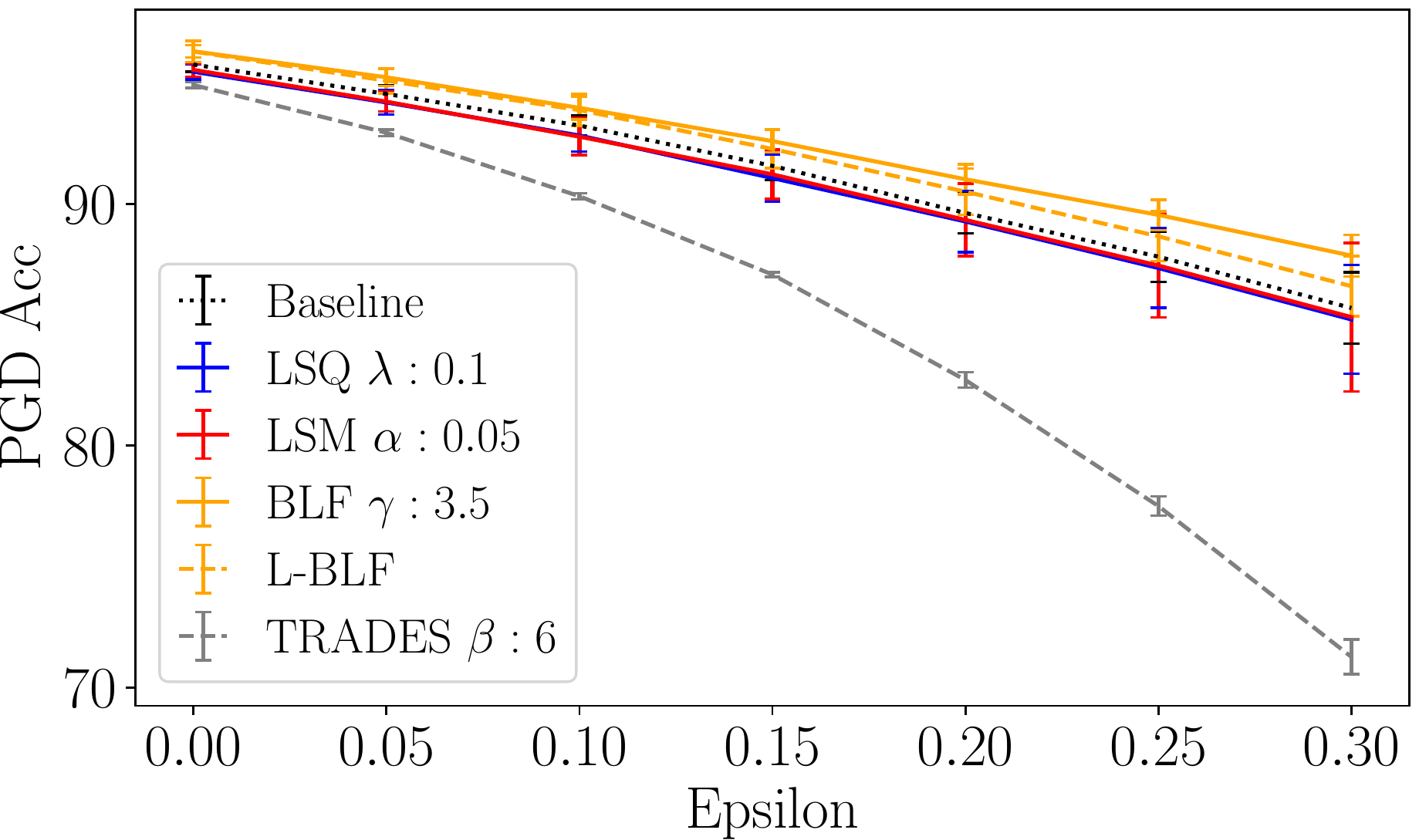}}\\
\centering
\subfloat[Standard training]{\includegraphics[width=.5\linewidth]{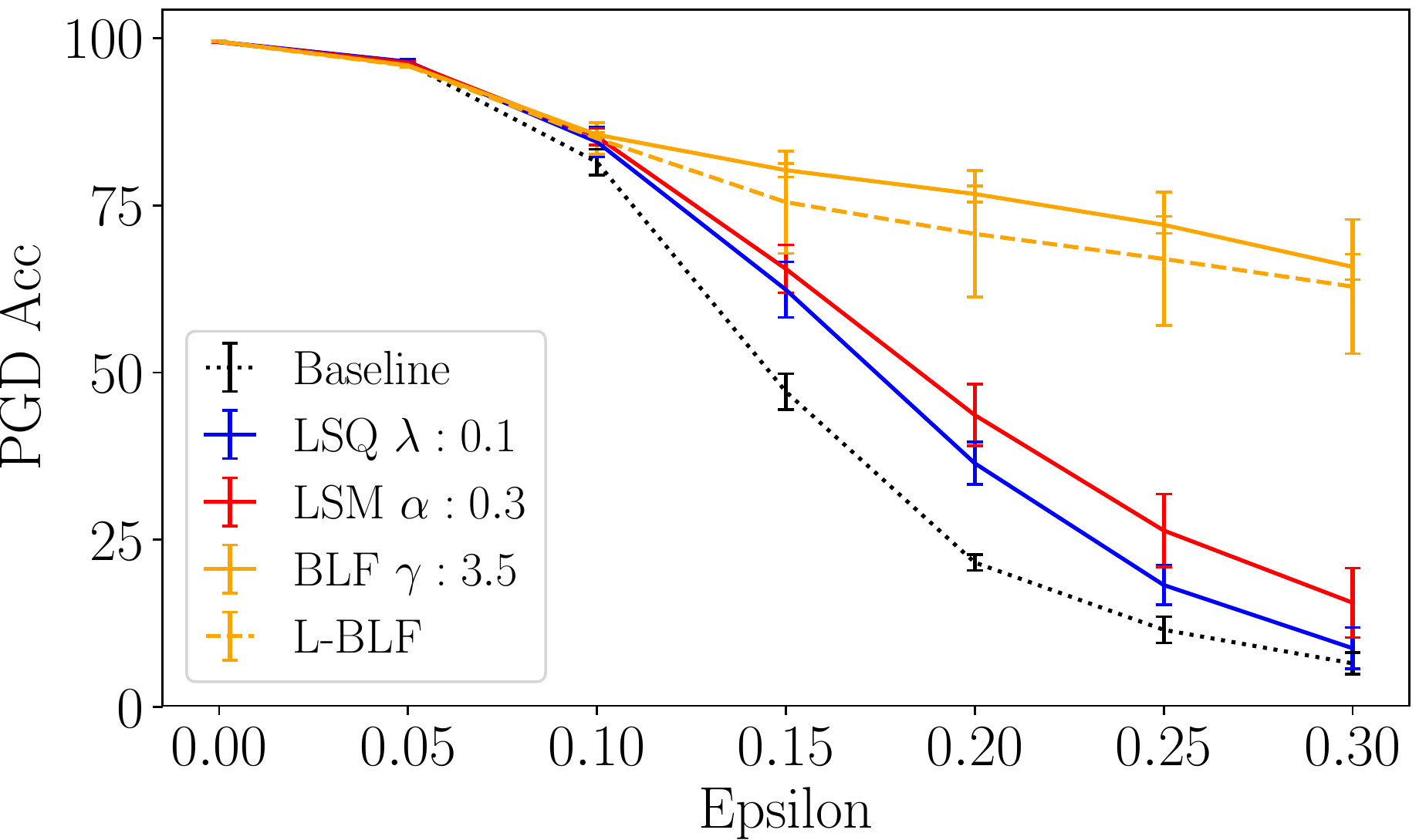}}
\centering\hfill
\subfloat[Adversarial training]{\includegraphics[width=.5\linewidth]{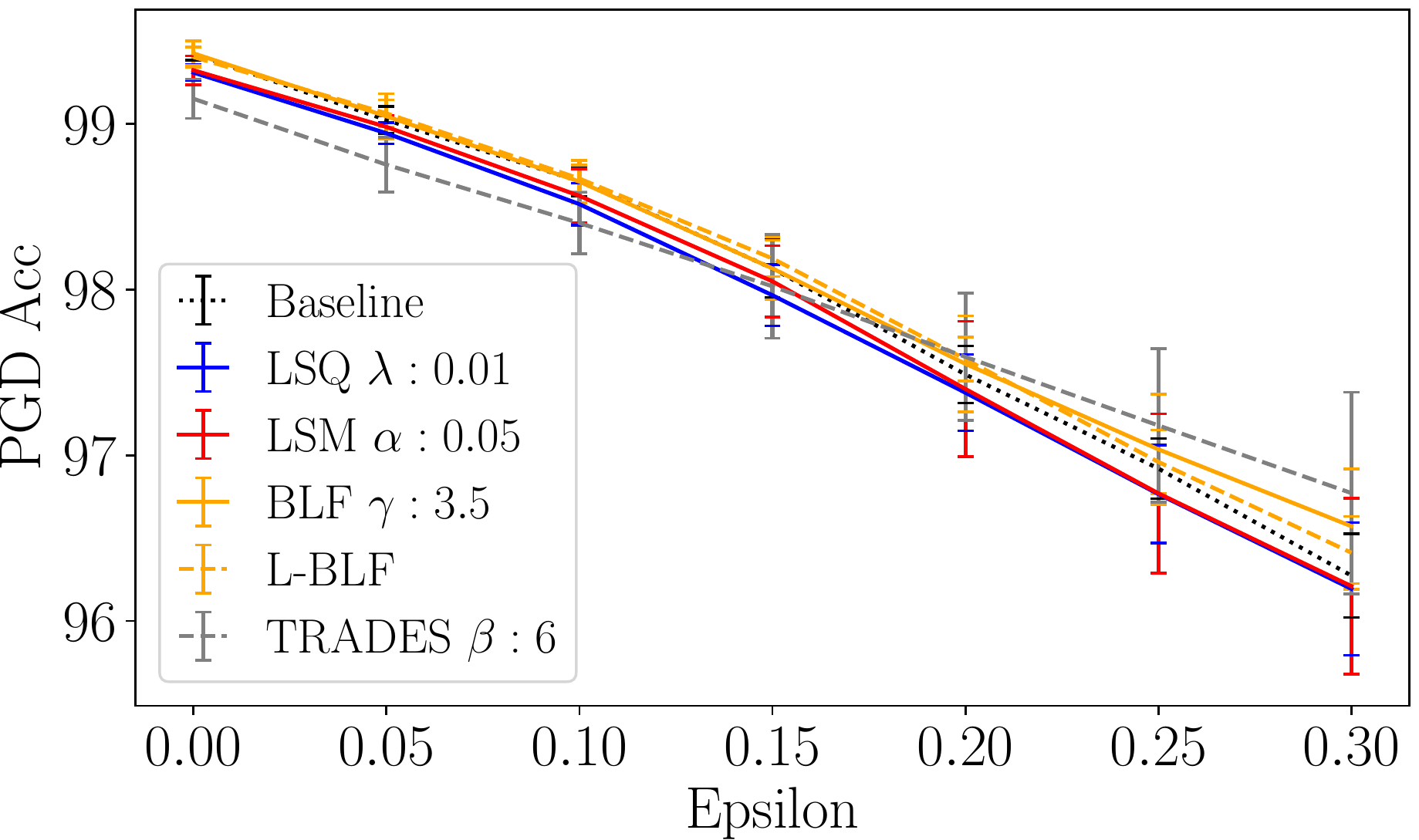}}
\caption{Accuracy of 2C2F (top) and 4C3F (bottom) on MNIST attacked by PGD (40 iterations). 
Error bars correspond to standard deviations.}
\label{MNIST-2CNN}
\end{figure}
\begin{figure}[tb]
\centering
\subfloat[Standard training]{\includegraphics[width=.5\linewidth]{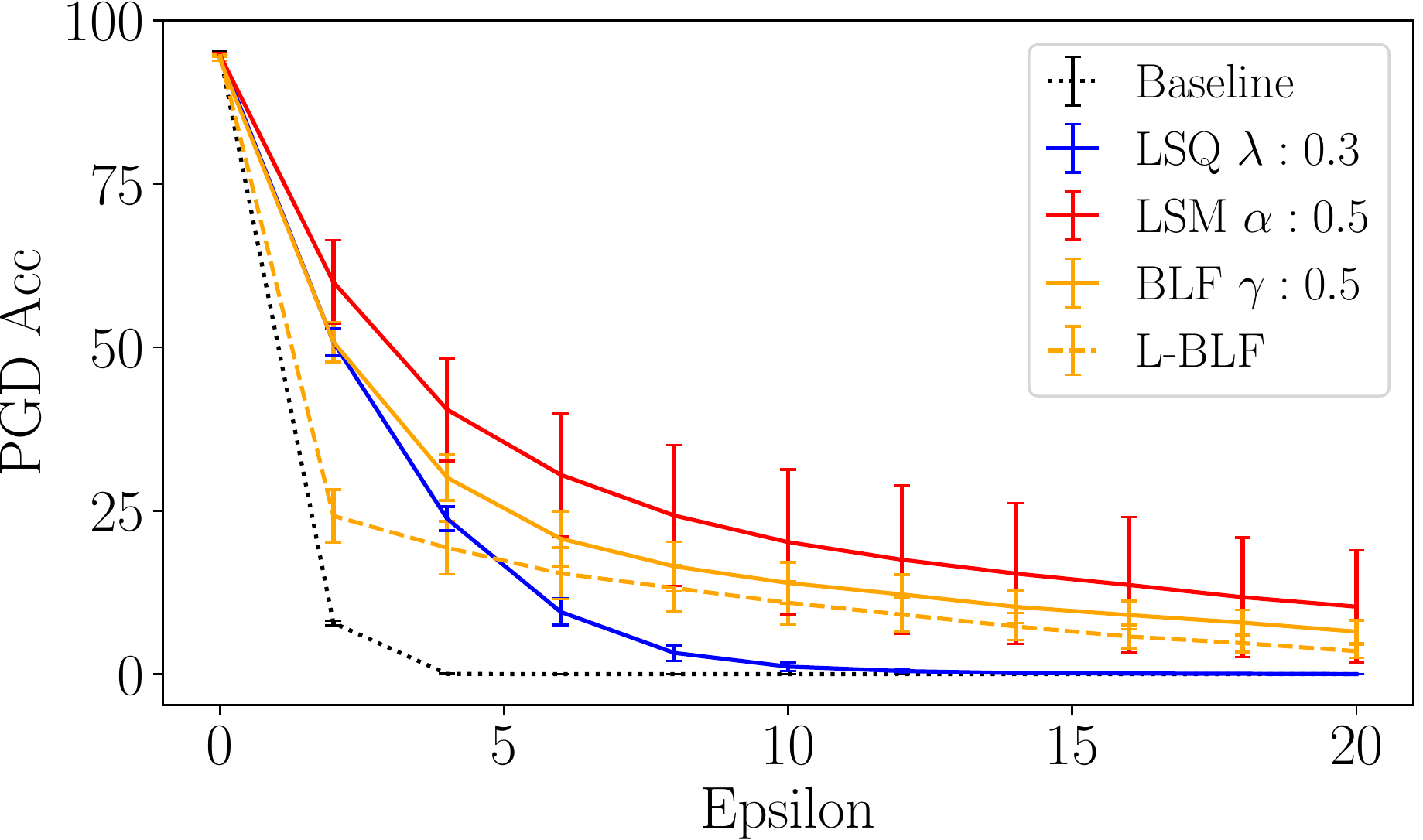}}
\centering\hfill
\subfloat[Adversarial training]{\includegraphics[width=.5\linewidth]{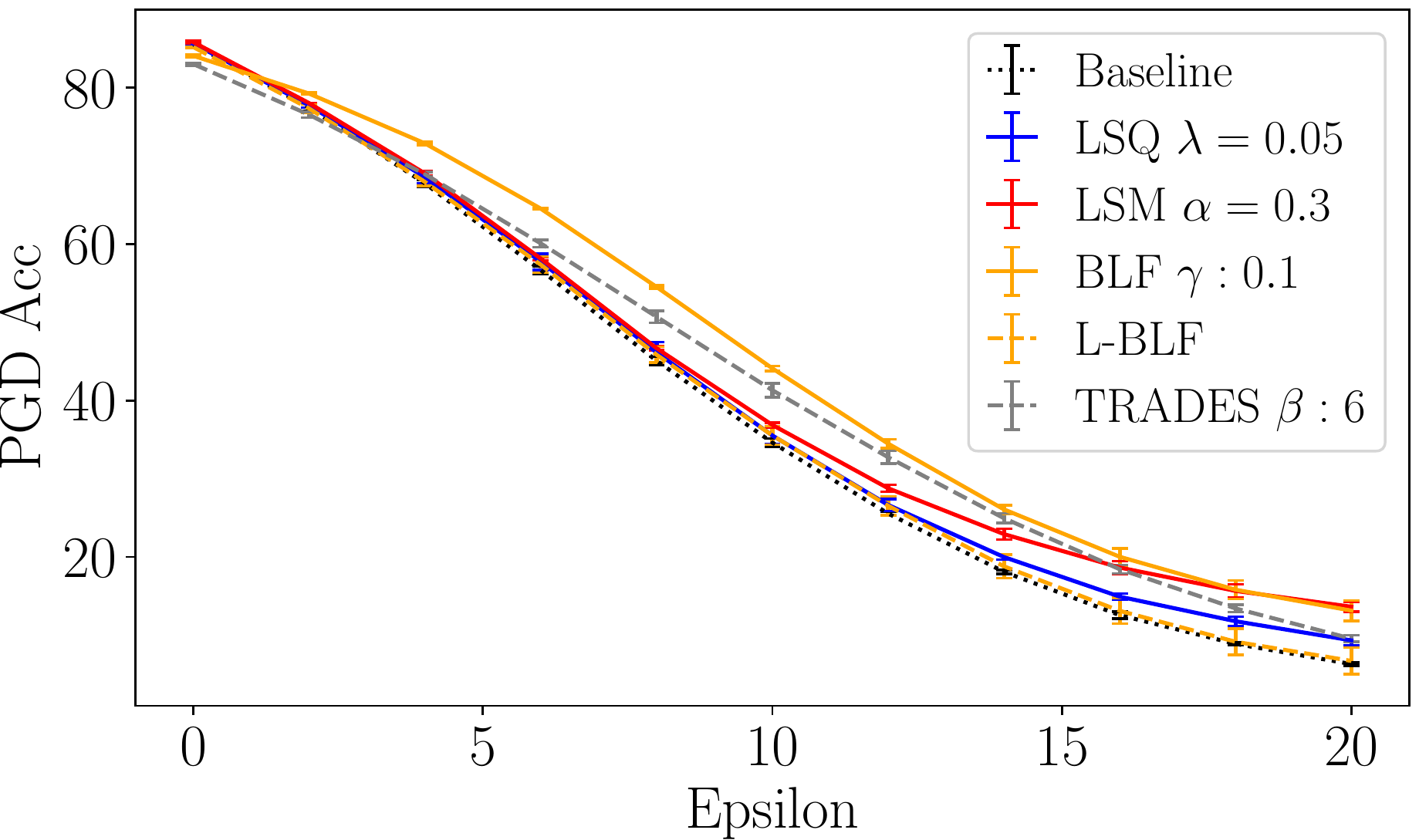}}\\
\centering
\subfloat[Standard training]{\includegraphics[width=.5\linewidth]{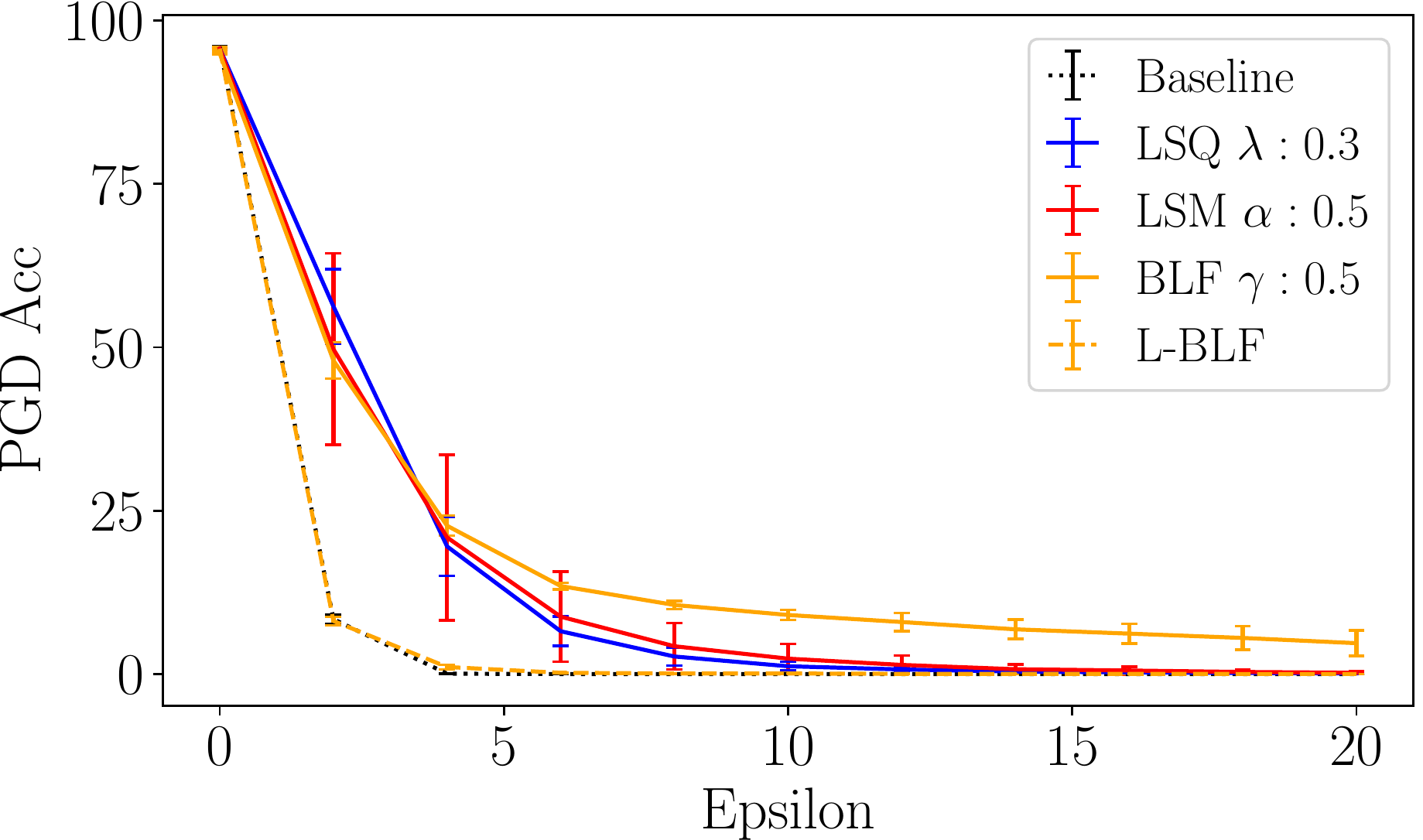}}
\centering\hfill
\subfloat[Adversarial training]{\includegraphics[width=.5\linewidth]{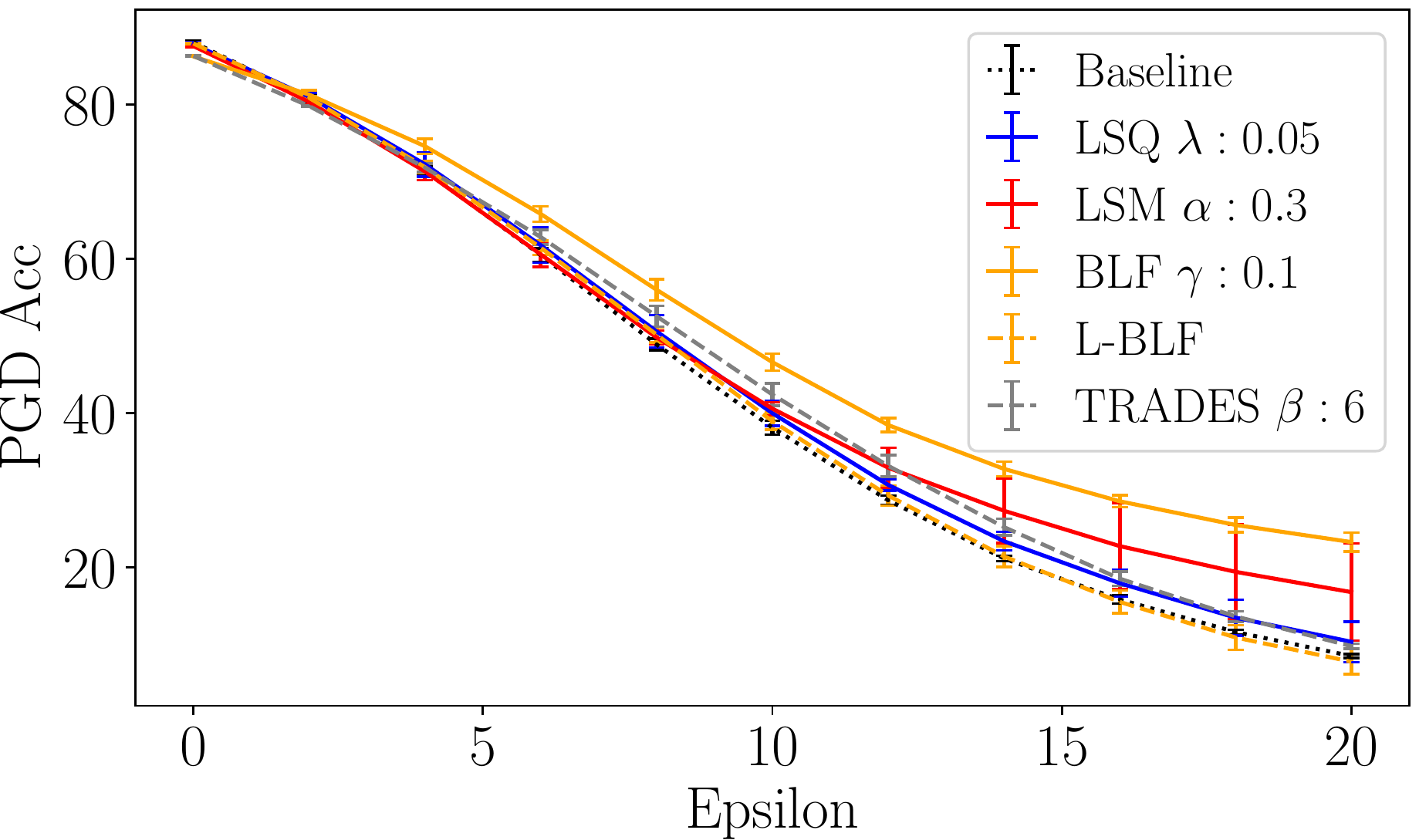}}
\caption{Accuracy of RN18 (top) and WRN (bottom) on CIFAR10 attacked by PGD (100 iterations). 
Error bars correspond to standard deviations.}
\label{CIFAR10Fig}
\end{figure}

\subsubsection{Evaluation of misleading gradients  of BLF}
    As shown in \rfig{propFun}, the absolute values of BLF $|g(z)|$ does not become
    smaller than one in intervals of $z\!<\!\mathrm{arg}\!\min_z g(z)$ and $z\!>\!\mathrm{arg}\!\max_z g(z)$.
    In the intervals, the gradient of BLF might mislead the gradient-based attacks
    because the absolute values of outputs of BLF only change in $1\!<\!|g(z)|\!<\!\max_z |g(z)|$.
    This might be a cause of robustness of BLF, which is not expected.
    To investigate the effect of the intervals, 
    we replaced a BLF with tanh to generate PGD attacks.
    We conducted this experiment by the following procedure: (i) we trained BLF models
    in standard and adversarial settings, (ii) we replaced BLF with tanh in models trained at the previous step, 
    (iii) we generated PGD attacks by using replaced models, (iv) we replaced tanh with BLF again and evaluated robust accuracies of BLF models against PGD attacks generated at the previous step.
    Since tanh is similar to BLF and is a monotonically increasing function,
    gradient-based attacks can effectively avoid the misleading gradients in the above mentioned intervals.
    In fact, we observed that replaced models achieved almost the same clean accuracy as BLF models even though
    their parameters are optimized for BLF: Clean accuracies of BLF (standard training), replaced models (standard training)), BLF (adversarial training),
    and replaced models (adversarial training) are 94.58, 94.58, 82.42, and 83.38, respectively.
    
    Robust accuracies of RN18 on CIFAR10 is shown in \rfig{RefPGD}.
    In this figure, we show robust accuracies of BLF models against PGD by using replaced models
    and using the BLF models.
    Though PGD for replaced models succeeded in attacking BLF models more than PGD for BLF models when $\varepsilon$ is set to be greater than $10$ in standard training settings,
    it could not attack BLF models well in the adversarial training setting and in the standard training setting when $\varepsilon <10$.
    Therefore, BLF does not only employ the misleading gradients in $z\!<\!\mathrm{arg}\!\min_z g(z)$ and $z\!>\!\mathrm{arg}\!\max_z g(z)$ for improving adversarial robustness.
    \begin{figure}[tb]
        \centering
        \subfloat[Standard training]{\includegraphics[width=.5\linewidth]{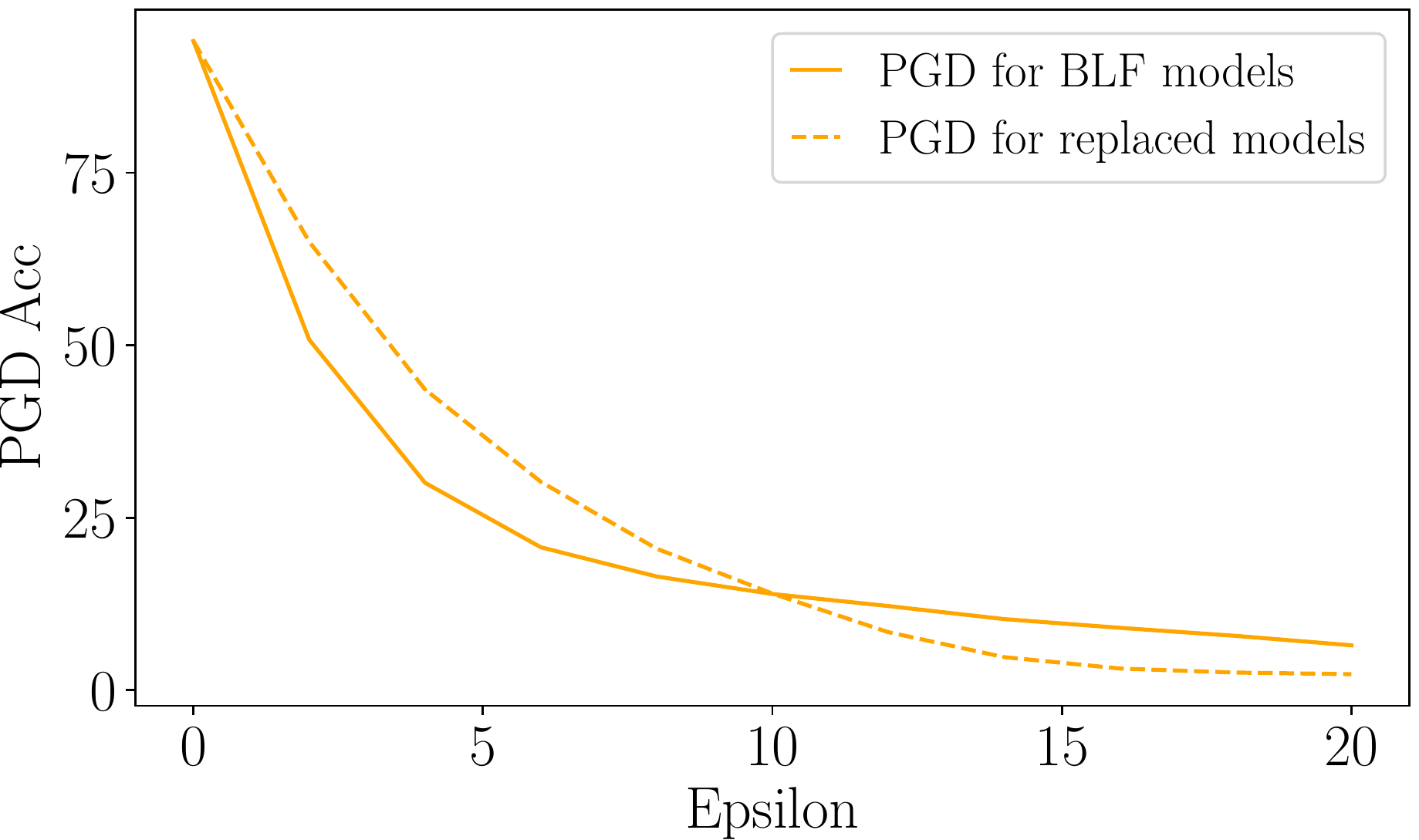}}
        \centering
        \subfloat[Adversarial training]{\includegraphics[width=.5\linewidth]{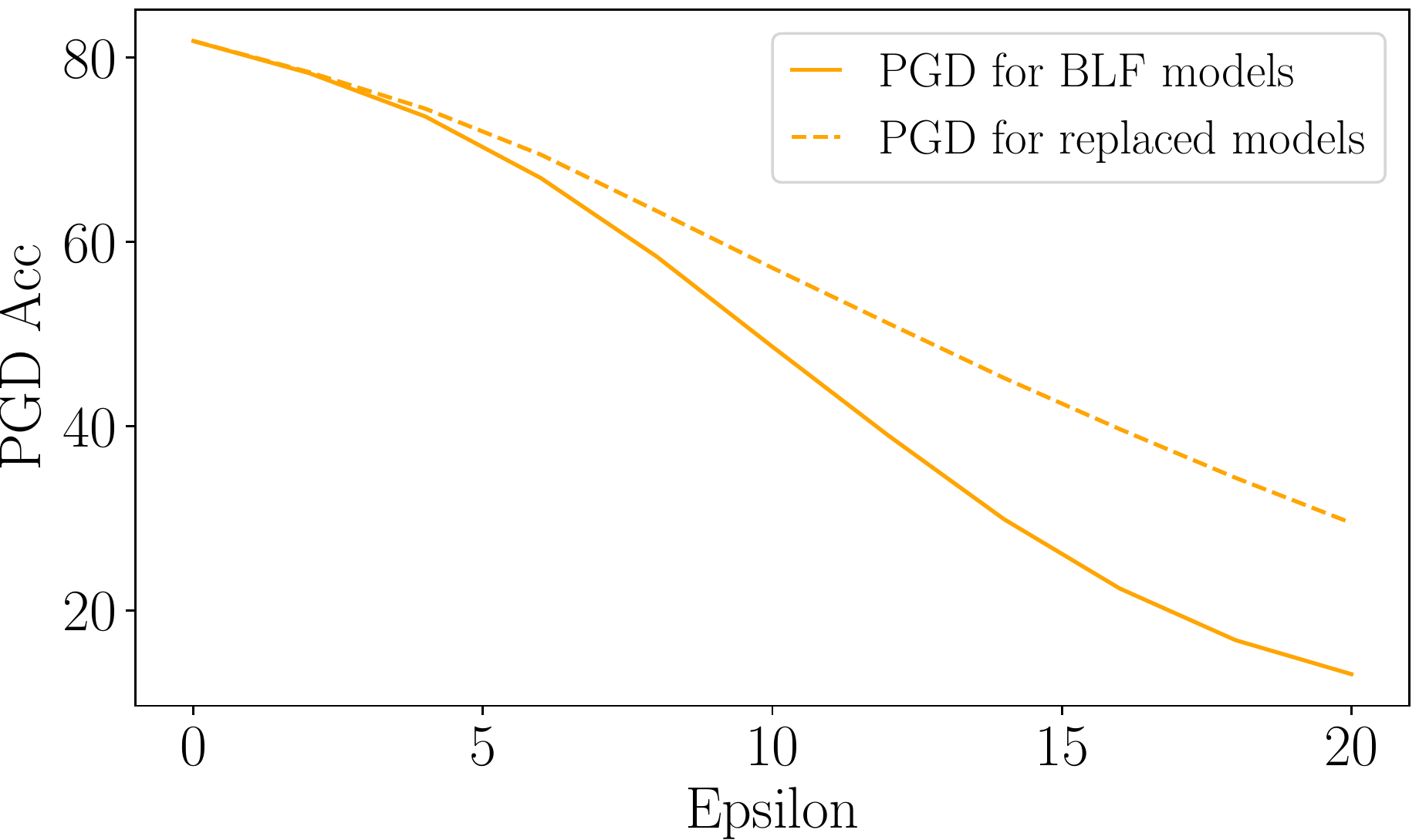}} 
        \caption{Robust accuracy against PGD by using models replaced tanh. We compare robust accuracies
        against PGD naively generated by using BLF models and against PGD generated by using tanh.}
        \label{RefPGD}
    \end{figure}

\subsection{Robustness against gradient-free attacks}
Since \citet{mosbach2018logit} pointed out that logit regularization
only masks or obfuscates gradient to improve robustness,
we evaluated the robustness against gradient-free attacks.
In the experiments, we tuned hyperparameters ($\lambda$, $\alpha$, $\gamma$, $\beta$)
for each attack and train the model for one time for each hyper-parameter.
\subsubsection{Accuracy against SPSA attacks}
We used SPSA as a gradient-free attack because it can operate
when the loss surface is difficult to optimize \cite{carlini2019evaluating,SPSA}.
We set the hyper-parameters of SPSA as epsilons of 0.15 for MNIST and 8/255 for CIFAR10, perturbation size of 0.01, Adam learning rate 
of 0.01, maximum iterations of 40 for MNIST and 10 for CIFAR10, and batchsize of 2048.\footnote{We could not use the original hyper-parameters \cite{SPSA}
since SPSA requires high computation costs \cite{shafahibatch}. Even so, we could evaluate the robustness of our method relatively.}
 
Table~\ref{SPSAtab} lists the accuracies of
2C2F and 4C3F with each method on MNIST attacked by SPSA and
RN18 and WRN with each method on CIFAR10 attacked by SPSA.
We can see that in the standard training setting, 
BLF improves robustness against SPSA the most on MNIST even though
robustness against PGD of BLF is lower than those of label smoothing and logit squeezing in \rfig{MNIST-2CNN} (a).
On CIFAR10, label smoothing and logit squeezing improve robustness more than BLF.
In adversarial training settings, however, our method improves robustness the most in a majority of settings.
These results are in agreement with the results of gradient-based attacks.
\begin{table}[tbp]
    \centering
    \caption{Robust accuracies against SPSA on MNIST ($\varepsilon\!=\!0.15$)  and CIFAR10 ($\varepsilon\!=\!8/255$).
    ST and AT represent standard training and adversarial training, respectively.}
    \label{SPSAtab}
        \small
    \begin{tabular}{ccccccc}\toprule
    &Baseline&LSQ&LSM &BLF&TRADES\\\midrule
    2C2F ST&$53.5$&$60.8$&$67.4$&$\bm{68.4}$&N/A\\
    2C2F AT&$90.2$&$92.3$&$90.0$&$\bm{92.7}$&$86.4$\\\midrule
    4C3F ST&$51.3$&$69.2$&$73.8$&$\bm{82.4}$&N/A\\
    4C3F AT&$\bm{98.3}$&$\bm{98.3}$&$98.2$&$98.2$&$98.1$\\\midrule\midrule
    RN18 ST&$0.3$&$23.5$&$\bm{24.9}$&$23.4$&N/A\\
    RN18 AT&$56.6$&$56.8$&$56.6$&$\bm{58.7}$&$57.4$\\\midrule
    WRN ST&$0.1$&$\bm{42.1}$&$16.3$&$13.9$&N/A\\
    WRN AT&$60.6$&$61.5$&$60.0$&$\bm{62.3}$&$61.3$\\
    \bottomrule
    \end{tabular}
\end{table}

\begin{table}[tbp]
    \centering
    \caption{Robust accuracies against Square Attack on MNIST ($\varepsilon\!=\!0.15$)  and CIFAR10 ($\varepsilon\!=\!8/255$).
    ST and AT represent standard training and adversarial training, respectively.}
    \label{SQUAREtab}
    \small
    \begin{tabular}{ccccccc}\toprule
    &Baseline&LSQ &LSM&BLF&TRADES\\\midrule
    2C2F ST& 51.5 &     53.0 &      $\bm{64.0}$ &  59.4&N/A\\
    2C2F AT&91.4 &     90.3 &      91.0 &  $\bm{92.1}$ &  85.3\\\midrule
    4C3F ST&35.6 &     46.2 &      51.5 &  $\bm{56.1}$ &N/A\\
    4C3F AT&97.9 &     $\bm{98.0}$ &      97.9 &  $\bm{98.0}$ &  $\bm{98.0}$\\\midrule\midrule
    RN18 ST&$0.3$&$20.1$&$27.6$&$\bm{31.7}$&N/A\\
    RN18 AT&$54.5$&$55.4$&$\bm{55.6}$&$53.8$&$55.3$\\\midrule
    WRN ST&$0.2$&30.2 &      $\bm{38.4}$ &   30.8 &N/A\\
    WRN AT&$59.7$&$59.7$&$60.0$&$\bm{60.9}$&$59.4$\\
    \bottomrule
    \end{tabular}
\end{table}
\subsubsection{Accuracy against Square Attacks}
Although the SPSA attack does not use exact gradients, it still approximates gradients to generate attacks.
Thus, obfuscating gradients might be still effective for the SPSA attack.
To investigate whether logit regularization methods just obfuscate gradients,
we evaluate the robustness against Square Attack \cite{SQUARE}, which is a query-based
black box attack.
Since the Square Attack uses random search to generate attacks,
obfuscating gradients are ineffective for the Square Attack.
To generate Square Attacks, we set the number of queries to 5000 and use the code in \cite{AutoAttack}.
Note that untargeted Square Attacks use a margin loss instead of cross entropy loss.

Robust accuracies against Square Attacks are listed in \rtab{SQUAREtab}.
In this table, BLF achieves the highest or the second highest accuracies on almost all settings.
In addition, all logit regularization methods without adversarial training can improve
the robust accuracies though Square Attacks do not use gradients.
Therefore, logit constraints does not only just obfuscate gradients for improving robustness.
\subsection{Evaluation of operator norms}
As discussed in Section~\ref{logitSec}, softmax cross-entropy can cause large Lipschitz constants,
and it might be a cause of vulnerabilities.
To investigate Lipschitz constants of models, we computed averages of $L_\infty$ 
operator norms of convolution layers of RN18 (\rtab{Weighttab}) by following 
\cite{gouk2018regularisation}.
The $L_\infty$ operator norms of convolution layers can be a criterion of Lipschitz constants
since one of Lipschitz constants of composite functions is the product of Lipschitz constants of composing functions
and $L_\infty$ operator norm is a Lipschitz constant for a linear function.
Table~\ref{Weighttab} shows that logit regularization methods induce small $L_\infty$ operator norms 
of convolution layers compared with Baseline
 even though they do not explicitly impose the penalty of parameter values.
This table indicates that
BLF can outperform other methods when using adversarial training
because it effectively induces small Lipschitz constants.
On the other hand, the $L_\infty$ norm of L-BLF is almost the same as
that of Baseline. Thus, BLF with learnable $\gamma$ does not improve the robustness. 
Note that the $L_\infty$ norm of Baseline does not become extremely large 
because we applied some regularization methods, e.g., weight decay and early stopping,
 into all methods to obtain good generalization performance.
 \begin{table}[tbp]
    \centering
    \caption{Averages of $L_\infty$ operator norms of convolution layers of RN18.}
    \label{Weighttab}
    \small
    \begin{tabular}{ccccccc}\toprule
    &Baseline&LSQ&LSM &BLF &L-BLF&TRADES\\\midrule
    ST&$19.6$&$17.1$&$10.1$&$11.2$&$20.1$&N/A\\
    AT&$17.1$&$16.5$&$11.4$&$4.4$&$16.8$&$13.3$\\
    \bottomrule
    \end{tabular}
    \end{table}
    
\section{Conclusion}
We proposed a method of constraining the logits by adding a bounded activation function just before softmax
following the hypothesis that small logits improve the adversarial robustness.
We developed a new bounded function that has the finite maximum and minimum points 
so that logits and pre-logits have small values.
Compared with other logit regularization methods,
our method can effectively improve the robustness in adversarial training
despite its simplicity.

Though we provided insights into the vulnerabilities of softmax cross-entropy
and empirical evidence of the effectiveness of logit regularization methods,
it is still an open question why small logits can improve robustness. 
Our experiments of tanh indicate that small logits are not sufficient for adversarial robustness.
Even so, our experiments showed that our method is comparable to the recent defense method
in terms of adversarial robustness 
against both gradient-based and gradient-free attacks in adversarial training.
Thus, our results indicate that the investigation into the relation between logit regularization
 and robustness is still an important research direction to reveal the cause of vulnerabilities of DNNs.

\bibliography{CameraBib.bib}

\appendix
\section{Proofs}
\label{ProfSsec}
In this section, we show proofs of theoretical results in the paper following
the assumption.
\begin{assumption*}
    We assume that (a) if data points have the same values as $\bm{x}^{(i)}\!=\!\bm{x}^{(j)}$, they have 
    the same labels as $\bm{p}^{(i)}\!=\!\bm{p}^{(j)}$,
    (b) the logit vector $\bm{z}_{\bm{\theta}}(\bm{x})$ can be an arbitrary vector for each data point, and
    (c) the optimal point $\bm{\theta}^*\!=\!\mathrm{arg}\!\min_{\bm{\theta}}\!\frac{1}{N}\!\sum_{i=1}^{N}\! \mathcal{L}(\bm{z}_{\bm{\theta}}(\bm{x}^{(i)}),\bm{p}^{(i)})$ achieves $\mathcal{L}(\bm{z}_{\bm{\theta}^*}(\bm{x}^{(i)}),\bm{p}^{(i)})\!=\min_{\bm{\theta}}\mathcal{L}(\bm{z}_{\bm{\theta}}(\bm{x}^{(i)}),\bm{p}^{(i)})$ for all $i$.
    \end{assumption*}
    \setcounter{theorem}{0}
    \begin{theorem}
        If we use softmax cross-entropy and one-hot vectors as target values, 
        at least one element of the optimal logits $\bm{z}_{\bm{\theta}^*}(\bm{x}^{(i)})\!=\!\mathrm{arg}\min_{\bm{z}_{\bm{\theta}}} \mathcal{L}
        _{\mathrm{CE}}(\bm{z}_{\bm{\theta}}(\bm{x}^{(i)}),\bm{p}^{(i)}) $
        does not have a finite value.
        \end{theorem}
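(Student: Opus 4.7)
The plan is to use the assumption to reduce the global optimization over $\bm{\theta}$ to pointwise optimization over each logit vector independently, then analyze the one-hot cross-entropy loss as a function on $\mathbb{R}^M$ and show its infimum is not attained at any finite point.

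First I would invoke parts (a), (b), and (c) of the Assumption to argue that minimizing the average loss over the dataset decouples into minimizing $\mathcal{L}_{\mathrm{CE}}(\bm{z},\bm{p}^{(i)})$ separately in each $\bm{z}\in\mathbb{R}^M$ for each $i$: part (b) allows each $\bm{z}_{\bm{\theta}}(\bm{x}^{(i)})$ to be chosen freely, part (a) ensures there is no inconsistency between duplicated data points, and part (c) identifies $\bm{z}_{\bm{\theta}^*}(\bm{x}^{(i)})$ with the argmin of $\mathcal{L}_{\mathrm{CE}}(\cdot,\bm{p}^{(i)})$ over $\mathbb{R}^M$.

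Next I would rewrite the loss for a one-hot target with correct index $t$ as
\begin{equation*}
\mathcal{L}_{\mathrm{CE}}(\bm{z},\bm{p}^{(i)})=\log\!\sum_{m=1}^{M}\exp(z_m)-z_t.
\end{equation*}
A direct lower bound is $\mathcal{L}_{\mathrm{CE}}\geq 0$, since $[\bm{f}_s(\bm{z})]_t\leq 1$. I would then exhibit a sequence (for instance $z_t=n$ and $z_k=0$ for $k\neq t$, letting $n\to\infty$) along which the loss converges to $0$, so $\inf_{\bm{z}\in\mathbb{R}^M}\mathcal{L}_{\mathrm{CE}}=0$. Finally I would show this infimum is not attained at any finite $\bm{z}$: if every $z_k$ were finite, then $\sum_{m\neq t}\exp(z_m)>0$, so $\log\sum_m\exp(z_m)>z_t$ strictly, making the loss strictly positive and hence not minimal. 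Therefore the argmin, interpreted as a limit point, must have at least one coordinate tending to $\pm\infty$ (either $z_t\to+\infty$ or some $z_k\to-\infty$ for $k\neq t$).

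The main obstacle is mostly a matter of presentation rather than mathematical depth: being careful about the distinction between infimum and minimum, and about what ``$\mathrm{arg}\min$'' means when the minimum is not attained in $\mathbb{R}^M$. I would resolve this by stating the claim as ``the infimum over $\mathbb{R}^M$ is not attained, so any optimizing sequence has at least one coordinate diverging,'' which matches the informal statement that at least one element of $\bm{z}_{\bm{\theta}^*}(\bm{x}^{(i)})$ fails to have a finite value. The rest is a routine one-line calculation on $\log\sum\exp$ and needs no further machinery.
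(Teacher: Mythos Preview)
Your proposal is correct and reaches the same conclusion, but the analytical tool differs from the paper's. The paper argues via first-order conditions: it computes $\partial J/\partial z_k = [\bm{f}_s(\bm{z})]_k - p_k$, sets this to zero ``at the minimum point,'' and deduces that the softmax output must equal the one-hot target exactly, which forces $\sum_{m\neq t}\exp(z_m)=0$ and hence $z_m\to-\infty$ for every $m\neq t$. You instead work directly with the loss value, showing $\inf_{\bm{z}}\mathcal{L}_{\mathrm{CE}}=0$ by an explicit sequence and then that $\mathcal{L}_{\mathrm{CE}}(\bm{z})>0$ strictly for every finite $\bm{z}$. Both arguments are one-liners once the pointwise reduction is granted; your version is more careful about the infimum/minimum distinction that the paper glosses over when it invokes a stationary point before knowing one exists, while the paper's gradient route yields the slightly sharper information that it is specifically all non-target coordinates that must diverge to $-\infty$.
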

\begin{proof}
Let $t$ be a target label for $\bm{x}^{(i)}$. 
The objective function of softmax cross-entropy loss for $\bm{x}^{(i)}$ is
\begin{align}
\textstyle
J=
\mathcal{L}_{\mathrm{CE}}(\bm{z}_{\bm{\theta}}(\bm{x}^{(i)}),\bm{p}^{(i)})
=-\mathrm{log} [\bm{f}_s(\bm{z}_{\bm{\theta}}(\bm{x}^{(i)}))]_t.\nonumber
\end{align}
At the minimum point, we have $\left.\frac{\partial J}{\partial \bm{z}_{\bm{\theta}}(\bm{x}^{(i)})}\right|_{\bm{z}_{\bm{\theta}^*}}=\bm{0}$ since we assume $\bm{z}_{\bm{\theta}}(\bm{x}^{(i)})$
can be an arbitrary vector in Assumption. 
By differentiating softmax cross-entropy, we obtain 
\begin{align}
\textstyle
\frac{\partial J}{\partial z_{\bm{\theta},k}(\bm{x}^{(i)})}=
\begin{cases}
-1+[\bm{f}_s(\bm{z}_{\bm{\theta}}(\bm{x}^{(i)}))]_k&k\textstyle=t,\\\textstyle
[\bm{f}_s(\bm{z}_{\bm{\theta}}(\bm{x}^{(i)}))]_k&\mathrm{otherwise}.
\end{cases}
\label{sceopteq}
\end{align}
Thus, $\left.\frac{\partial J}{\partial \bm{z}_{\bm{\theta}}(\bm{x}^{(i)})}\right|_{\bm{z}_{\bm{\theta}^*}}\!\!=\!\bm{0}$ means that a softmax output vector $\bm{f}_s(\bm{z}_{\bm{\theta}^{*}}(\bm{x}^{(i)}))$ is a
one hot-vector; $ [\bm{f}_s(\bm{z}_{\bm{\theta}^{*}}(\bm{x}^{(i)}))]_t\!=\!1$ and 
$ [\bm{f}_s(\bm{z}_{\bm{\theta}^{*}}(\bm{x}^{(i)}))]_k\!=\!0$ for $k\!\neq\! t$.
Since $[\bm{f}_s(\bm{z}_{\bm{\theta}}(\bm{x}^{(i)}))]_t \neq 0$, 
we have $0<\sum_{m=1}^{M} \mathrm{exp}(z_{\bm{\theta},m}(\bm{x}^{(i)}))$.
The $t$-th output of softmax becomes
\begin{align}\textstyle
\frac{\mathrm{exp}(z_{\bm{\theta},t}(\bm{x}^{(i)}))}
{\sum_{m=1}^{M} \mathrm{exp}(z_{\bm{\theta},m}(\bm{x}^{(i)}))}&\textstyle=1,\nonumber\\\textstyle
\mathrm{exp}(z_{\bm{\theta},t}(\bm{x}^{(i)}))&\textstyle=\sum_{m=1}^{M} \mathrm{exp}(z_{\bm{\theta},m}(\bm{x}^{(i)})),\nonumber\\\textstyle
\sum_{m\neq t} \mathrm{exp}(z_{\bm{\theta},m}(\bm{x}^{(i)}))&\textstyle=0.\nonumber
\end{align}
Since $\mathrm{exp}(z)\geq 0$,
we have $\mathrm{exp}(z_{\bm{\theta},m}(\bm{x}^{(i)}))=0$ for $m\neq t$.
Since $\lim_{z\rightarrow -\infty} \mathrm{exp}(z)=0$,
the element of the optimal logits $z_{\bm{\theta}^*,m}(\bm{x}^{(i)})$ does not have a finite value for $m\neq t$. Therefore, at least one element of the optimal logits has an infinite value.
\end{proof}
\setcounter{corollary}{0}
\begin{corollary}
    If all elements of inputs $x_k^{(i)}$ are normalized as $0\leq x_k^{(i)}\leq 1$ and training dataset has at least two different labels,
    the optimal logit function $\bm{z}_{\bm{\theta}^*}(\bm{x})$
    for softmax cross-entropy is not globally Lipschitz continuous function, i.e.,
     there is not a finite constant $C\geq 0$ as
     \begin{align}\textstyle
     ||\bm{z}_{\bm{\theta}^*}(\bm{x}^{(i)})-\bm{z}_{\bm{\theta}^*}(\bm{x}^{(j)})||_{\infty}
     &\leq C ||\bm{x}^{(i)}-\bm{x}^{(j)}||_{\infty},\nonumber\\
     \forall\bm{x}^{(i)},\bm{x}^{(j)}&\in \mathcal{X}.\nonumber
     \end{align}
    \end{corollary}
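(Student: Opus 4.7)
The plan is to derive the non-Lipschitz property directly from Theorem~1 by exhibiting a single pair of data points for which the logit gap is infinite while the input gap is finite, and then invoking the fact that a global Lipschitz bound must hold uniformly over all pairs.

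First, I would use the hypothesis that the training set contains at least two distinct labels to choose two training points $\bm{x}^{(i)}$ and $\bm{x}^{(j)}$ whose labels $t_i$ and $t_j$ differ. By part (a) of the Assumption (same input implies same label), such points must themselves be distinct, and since every coordinate is normalized to $[0,1]$, the input gap $\|\bm{x}^{(i)}-\bm{x}^{(j)}\|_\infty$ is a strictly positive number no larger than $1$.

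Next I would read off the stronger conclusion already present inside the proof of Theorem~1: the stationarity condition forces $\exp(z_{\bm{\theta}^*,m}(\bm{x}^{(i)}))=0$ for \emph{every} non-target index $m\neq t_i$, so every non-target coordinate of the optimal logit vector diverges to $-\infty$. Applied to the index $t_j$, which is a non-target for $\bm{x}^{(i)}$ but the target for $\bm{x}^{(j)}$, this gives $z_{\bm{\theta}^*,t_j}(\bm{x}^{(i)})=-\infty$ while $z_{\bm{\theta}^*,t_j}(\bm{x}^{(j)})$ is finite (or $+\infty$). Hence the $t_j$-th coordinate of the difference vector has infinite magnitude, and therefore $\|\bm{z}_{\bm{\theta}^*}(\bm{x}^{(i)})-\bm{z}_{\bm{\theta}^*}(\bm{x}^{(j)})\|_\infty=\infty$.

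Finally, I would combine the two sides of the claimed inequality: the left-hand side is infinite for this particular pair, while the right-hand side $C\|\bm{x}^{(i)}-\bm{x}^{(j)}\|_\infty$ is a finite multiple of a positive finite number. No finite $C\geq 0$ can satisfy $\infty\leq C\cdot\|\bm{x}^{(i)}-\bm{x}^{(j)}\|_\infty$, so the Lipschitz inequality fails and the corollary follows. The only subtlety worth mentioning explicitly is that the ``infinite'' logit values should be understood as limits of minimizing sequences rather than literal attained values; consequently I would phrase the argument either in terms of divergent sequences of parameters producing the optimal loss, or equivalently by noting that any candidate finite $C$ can be defeated by choosing a minimizing sequence along which $|z_{\bm{\theta}^*,t_j}(\bm{x}^{(i)})|$ grows without bound while $\|\bm{x}^{(i)}-\bm{x}^{(j)}\|_\infty$ remains fixed. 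This is the only step that requires real care; once it is handled, the corollary is a direct coordinate-wise consequence of Theorem~1.
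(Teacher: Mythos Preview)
Your proposal is correct and follows essentially the same route as the paper: pick two training points with distinct labels, use the conclusion of Theorem~1 that every non-target logit coordinate diverges to $-\infty$, and observe that the coordinate corresponding to one point's label gives an infinite difference while the input gap is bounded by $1$. The paper argues via the index $t$ (label of $\bm{x}^{(i)}$) and you argue via the symmetric index $t_j$, and your extra remarks about distinctness of the points and the limit interpretation of ``infinite'' logits are welcome clarifications but do not change the structure of the argument.
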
   
\begin{proof}
If Corollary~\ref{SCECoro} does not hold, there is a finite constant $C$ satisfying
\begin{align}\textstyle
||\bm{z}_{\bm{\theta}^*}(\bm{x}^{(i)})-\bm{z}_{\bm{\theta}^*}(\bm{x}^{(j)})||_{\infty}
 &\leq C ||\bm{x}^{(i)}-\bm{x}^{(j)}||_{\infty}\label{sceLip}
\\ \forall\bm{x}^{(i)},\bm{x}^{(j)}&\in \mathcal{X},\nonumber
\end{align}
and $0\leq C<\infty$.
We assume that $t$ and $t'$ are labels of $\bm{x}^{(i)}$ and $\bm{x}^{(j)}$, respectively, and
$t\neq t'$.
As shown in the proof of Theorem~\ref{SCETh}, 
$z_{\bm{\theta},m}(\bm{x}^{(i)})$ for $m\neq t$ does not have finite values
and $-\infty<z_{\bm{\theta},t}(\bm{x}^{(i)})\leq\infty$.
On the other hand $z_{\bm{\theta},m}(\bm{x}^{(j)})$ for $m\neq t'$ does not have finite values.
Thus, $[\bm{z}_{\bm{\theta}^*}(\bm{x}^{(i)})-\bm{z}_{\bm{\theta}^*}(\bm{x}
^{(j)})]_t=z_{\bm{\theta},t}(\bm{x}^{(i)})-z_{\bm{\theta},t}(\bm{x}^{(j)})$ does not have finite 
values.
Thus, the left-hand side of \req{sceLip} is not finite values.
On the other hand, we have $||\bm{x}^{(i)}-\bm{x}^{(j)}||_{\infty}\leq 1$ because we assume $0\leq x_k\leq 1$.
As a result, $\infty\leq C ||\bm{x}^{(i)}-\bm{x}^{(j)}||_{\infty}\leq C$, and it contradicts 
the statement $0\leq C<\infty$, which completes the proof. 
\end{proof}
\setcounter{proposition}{0}
\begin{proposition}
    The optimal logits for label smoothing $\bm{z}_{\bm{\theta}^*}(\bm{x}^{(i)})\!=\!\mathrm{arg}\min_{\bm{z}_{\bm{\theta}}}\mathcal{L}
    _{\mathrm{CE}}(\bm{z}_{\bm{\theta}}(\bm{x}^{(i)}),\bm{p}^{(i)})$
    satisfy
    \begin{align*}
    \textstyle
    z_{\bm{\theta}^*,k}(\bm{x}^{(i)})\!=\!
    \begin{cases}
        \!\log(\frac{1\!-\!\alpha}{\alpha}\!\sum_{m\neq t}\!\mathrm{exp}((z_{\bm{\theta}^*,m}(\bm{x}^{(i)}))))&\!\!\!\!\!k=t
        \\
        \!\log(\frac{\alpha}{M\!-\!1\!-\!\alpha}\!\sum_{m\neq k}\mathrm{exp}(z_{\bm{\theta}^*,m}(\bm{x}^{(i)})))&\!\!\!\!\!k\neq t.
    \end{cases}
    \end{align*}
    If an element of $\mathrm{exp}(\bm{z_{\bm{\theta}^*}}(\bm{x}^{(i)}))$ has a finite
    value, all elements of $\bm{z}_{\bm{\theta}^*}(\bm{x}^{(i)})$ have finite
    values.
\end{proposition}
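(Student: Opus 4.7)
The plan is to mirror the strategy used in Theorem~\ref{SCETh}: differentiate the training objective with respect to the logit vector, invoke the Assumption to justify setting the gradient to zero, and then unpack the resulting system of equations. Because label smoothing only changes the target vector $\bm{p}^{(i)}$ (still with $\sum_k p_k = 1$), the derivative of $\mathcal{L}_{\mathrm{CE}}$ with respect to $z_{\bm{\theta},k}(\bm{x}^{(i)})$ reduces to the standard $-p_k^{(i)} + [\bm{f}_s(\bm{z}_{\bm{\theta}}(\bm{x}^{(i)}))]_k$. Setting this to zero at $\bm{\theta}^*$ gives the clean identity $[\bm{f}_s(\bm{z}_{\bm{\theta}^*}(\bm{x}^{(i)}))]_k = p_k^{(i)}$ for every $k$, which is the main analytic hinge of the proof.

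Next, I would substitute the label-smoothing values $p_t^{(i)} = 1-\alpha$ and $p_k^{(i)} = \alpha/(M-1)$ for $k\neq t$ into this identity and solve for $\exp(z_{\bm{\theta}^*,k}(\bm{x}^{(i)}))$. For the target coordinate, rearranging $\exp(z_{\bm{\theta}^*,t}) = (1-\alpha)\sum_m \exp(z_{\bm{\theta}^*,m})$ isolates $\sum_{m\neq t}\exp(z_{\bm{\theta}^*,m})$ on one side and yields the first branch of the displayed formula after taking logarithms. For a non-target coordinate $k\neq t$, the analogous manipulation of $(M-1)\exp(z_{\bm{\theta}^*,k}) = \alpha \sum_m \exp(z_{\bm{\theta}^*,m})$ gives $\exp(z_{\bm{\theta}^*,k}) = \tfrac{\alpha}{M-1-\alpha}\sum_{m\neq k}\exp(z_{\bm{\theta}^*,m})$, and taking logs matches the second branch. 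These are purely algebraic rearrangements, so I would not dwell on them.

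For the final claim about finiteness, I would argue directly from $[\bm{f}_s(\bm{z}_{\bm{\theta}^*}(\bm{x}^{(i)}))]_k = p_k^{(i)}$. Since $\alpha\in(0,1)$, every $p_k^{(i)}$ is strictly positive, so the identity can be rewritten as $\sum_m \exp(z_{\bm{\theta}^*,m}(\bm{x}^{(i)})) = \exp(z_{\bm{\theta}^*,k}(\bm{x}^{(i)}))/p_k^{(i)}$. If one coordinate of $\exp(\bm{z}_{\bm{\theta}^*}(\bm{x}^{(i)}))$ is finite, the right-hand side is finite, hence the entire sum is finite. Each summand is a nonnegative exponential, so all of them are finite and (being positive exponentials) bounded away from zero as fractions of a finite sum, giving finite logits coordinate-wise.

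I do not anticipate a serious obstacle: the only subtlety is making sure the move from ``one exponential is finite'' to ``all logits are finite'' is done carefully, since we need both that no coordinate blows up (handled by the sum bound) and that no coordinate drops to $-\infty$ (handled by $p_k^{(i)}>0$, which forces $\exp(z_{\bm{\theta}^*,k})$ to be a strictly positive fraction of a finite total). Everything else is routine softmax calculus already carried out in the proof of Theorem~\ref{SCETh}.
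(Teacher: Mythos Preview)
Your proposal is correct and follows essentially the same approach as the paper: set the gradient of the label-smoothed cross-entropy to zero to obtain $[\bm{f}_s(\bm{z}_{\bm{\theta}^*}(\bm{x}^{(i)}))]_k = p_k^{(i)}$, rearrange algebraically to get the displayed formulas, and then use the strict positivity of the smoothed targets to conclude finiteness. Your finiteness argument via the ratio identity $\sum_m \exp(z_{\bm{\theta}^*,m}) = \exp(z_{\bm{\theta}^*,k})/p_k^{(i)}$ is slightly more direct than the paper's case-by-case contradiction, but the underlying idea is the same.
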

\begin{proof}
Let $t$ be a label of  data point $\bm{x}^{(i)}$.
The objective function of label smoothing for $\bm{x}^{(i)}$ is 
\begin{align}
    \textstyle J&\textstyle =-\sum_m p_m \mathrm{log} [\bm{f}_s(\bm{z}_{\bm{\theta}}(\bm{x}^{(i)}))]_m
    \nonumber\\
    &\textstyle =-(1-\alpha)\mathrm{log} [\bm{f}_s(\bm{z}_{\bm{\theta}}(\bm{x}^{(i)}))]_t\nonumber\\
    &\textstyle ~~~~~-\frac{\alpha}{M-1}\sum_{m\neq t}\mathrm{log} [\bm{f}_s(\bm{z}_{\bm{\theta}}(\bm{x}^{(i)}))]_m\nonumber
\end{align}
since $p_t=1-\alpha$ and $p_m=\frac{\alpha}{M-1}$ for $m\neq t$.
By differentiating softmax cross-entropy, we obtain
\begin{align}\textstyle
\label{derLab}
\frac{\partial J}{\partial z_{\bm{\theta},k}}=\begin{cases}
[\bm{f}_s(\bm{z}_{\bm{\theta}^*}(\bm{x}^{(i)}))]_k+\alpha-1&k=t,\\\textstyle
[\bm{f}_s(\bm{z}_{\bm{\theta}^*}(\bm{x}^{(i)}))]_k-\frac{\alpha}{M-1}&\textstyle\mathrm{otherwise}.
\end{cases}
\end{align}
Since we assume that one of the elements of $\bm{z}_{\bm{\theta}^*}$ has a finite value, 
we have $\sum_m\mathrm{exp}z_{\bm{\theta}^*,m}>0$.
Thus, \req{derLab} for $k=t$ becomes 
\begin{align}\textstyle
[\bm{f}_s(\bm{z}_{\bm{\theta}^*}(\bm{x}^{(i)}))]_t&\textstyle=1-\alpha,
\label{prelabopt0}\\\textstyle
\frac{\mathrm{exp}z_{\bm{\theta}^*,t}}{\sum_m\mathrm{exp}z_{\bm{\theta}^*,m}}&\textstyle=1-\alpha,\nonumber\\\textstyle
\mathrm{exp}z_{\bm{\theta}^*,t}&\textstyle=(1-\alpha)(\sum_m\mathrm{exp}z_{\bm{\theta}^*,m}),\nonumber\\\textstyle
\alpha \mathrm{exp}z_{\bm{\theta}^*,t}&\textstyle=(1-\alpha)(\sum_{m\neq t}\mathrm{exp}z_{\bm{\theta}^*,m}),\nonumber\\\textstyle
z_{\bm{\theta}^*,t}&\textstyle=\log(\frac{1-\alpha}{\alpha}(\sum_{m\neq t}\mathrm{exp}z_{\bm{\theta}^*,m})).\label{labopt0}
\end{align}
In the same manner, we have 
\begin{align}\textstyle
&[\bm{f}_s(\bm{z}_{\bm{\theta}^*}(\bm{x}^{(i)}))]_k\textstyle=\frac{\alpha}{M-1},
\label{prelabopt}\\\textstyle
&z_{\bm{\theta}^*,k}\textstyle=\log(\frac{\alpha}{M-1-\alpha}\sum_{m\neq k}\mathrm{exp}(z_{\bm{\theta}^*,m}(\bm{x}^{(i)})))\label{labopt},
\end{align}
for $k\neq t$.
It is difficult to obtain the 
solutions of eqs.~(\ref{labopt0}) and (\ref{labopt}) in closed form, but we can show
they have finite values as follows.
If $z_{\bm{\theta}^*,k'}(\bm{x}^{(i)}))\rightarrow-\infty$ where $k'\neq t$,
\req{prelabopt} does not hold because the left side of \req{prelabopt} becomes 0 and 
$0<\alpha<M-1$.
If $z_{\bm{\theta}^*,t}(\bm{x}^{(i)}))\rightarrow\infty$,
\req{prelabopt0} does not hold because the left side of \req{prelabopt0} becomes 1.
Therefore, all elements of the logits have finite values.
\end{proof}
\begin{proposition}
    The optimal logits for logit squeezing
    $\!\bm{z}_{\bm{\theta}^*}(\bm{x}^{(i)})\!=\!\mathrm{arg}\!\min_{\bm{z}_{\bm{\theta}}} \mathcal{L}
    _{\mathrm{CE}}(\bm{z}_{\bm{\theta}}(\bm{x}^{(i)}),\bm{p}^{(i)}\!)\!+\!\frac{\lambda}{2}\!||\bm{z}_{\bm{\theta}}(\bm{x}^{(i)})||_2$ 
    satisfy
    \begin{align*}
    \!z_{\bm{\theta}^*\!,k}(\bm{x}^{(i)})\!=\!\begin{cases}
    (-[f_s(\bm{z}_{\bm{\theta}^*}(\bm{x}^{(i)})\!)]_t+1)/\lambda &k=t\\
    \!=\!-[f_s(\bm{z}_{\bm{\theta}^*}(\bm{x}^{(i)})\!)]_k/\lambda&k\neq t.\end{cases}
    \end{align*}
    Namely, all elements of the optimal logit vector $\bm{z}_{\bm{\theta}^*}(\bm{x}^{(i)})$ have finite values.
    \end{proposition}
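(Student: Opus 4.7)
The plan is to mimic the structure of the proofs of Theorem~\ref{SCETh} and the label smoothing proposition: compute the gradient of the objective with respect to the (assumed free) logit vector, set it to zero, and then extract both the closed-form expression and the finiteness claim. Because the Assumption lets us treat $\bm{z}_{\bm{\theta}}(\bm{x}^{(i)})$ as an arbitrary vector, the first-order condition $\left.\partial J/\partial \bm{z}_{\bm{\theta}}(\bm{x}^{(i)})\right|_{\bm{z}_{\bm{\theta}^*}}=\bm{0}$ is necessary and, as we will see, sufficient.

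The first step is to write $J=\mathcal{L}_{\mathrm{CE}}(\bm{z}_{\bm{\theta}}(\bm{x}^{(i)}),\bm{p}^{(i)})+\frac{\lambda}{2}\|\bm{z}_{\bm{\theta}}(\bm{x}^{(i)})\|_2^2$ and reuse the cross-entropy gradient already derived in \req{sceopteq}, namely $\partial \mathcal{L}_{\mathrm{CE}}/\partial z_{\bm{\theta},k}=[\bm{f}_s(\bm{z}_{\bm{\theta}}(\bm{x}^{(i)}))]_k-\mathbf{1}[k=t]$. The gradient of the squeezing penalty is simply $\lambda z_{\bm{\theta},k}$. Setting the sum to zero gives, for $k=t$, $[\bm{f}_s(\bm{z}_{\bm{\theta}^*}(\bm{x}^{(i)}))]_t-1+\lambda z_{\bm{\theta}^*,t}(\bm{x}^{(i)})=0$, and for $k\neq t$, $[\bm{f}_s(\bm{z}_{\bm{\theta}^*}(\bm{x}^{(i)}))]_k+\lambda z_{\bm{\theta}^*,k}(\bm{x}^{(i)})=0$. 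Rearranging each equation yields exactly the two displayed expressions in the statement.

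The finiteness claim then follows from the elementary bound $0<[\bm{f}_s(\bm{z})]_k<1$, which holds for any real logit vector because softmax is a probability distribution with strictly positive entries summing to one. Combined with $\lambda>0$, this immediately gives $|z_{\bm{\theta}^*,k}(\bm{x}^{(i)})|<1/\lambda$ for every $k$, so every element is finite and in fact uniformly bounded.

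The only subtle point to mention is that the first-order condition characterises a global minimum rather than a saddle point. This can be addressed briefly by noting that $\mathcal{L}_{\mathrm{CE}}$ is convex in the logit vector (its Hessian equals $\mathrm{diag}(\bm{f}_s)-\bm{f}_s\bm{f}_s^\top$, which is positive semidefinite) and that $\frac{\lambda}{2}\|\bm{z}\|_2^2$ is strictly convex, so $J$ is strictly convex in $\bm{z}_{\bm{\theta}}(\bm{x}^{(i)})$ and the stationary point derived above is the unique minimiser. I expect this convexity remark to be the only step requiring a sentence of justification beyond routine algebra; everything else reduces to differentiating softmax cross-entropy as in the earlier proofs and applying $[\bm{f}_s]_k\in(0,1)$.
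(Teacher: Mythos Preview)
Your proposal is correct and follows essentially the same route as the paper: compute the gradient of $J$ using the softmax cross-entropy derivative from \req{sceopteq}, add the penalty gradient $\lambda z_{\bm{\theta},k}$, set to zero, and bound the result using $[\bm{f}_s(\bm{z})]_k\in[0,1]$. Your extra convexity remark (strict convexity of $J$ in the logit vector) is a nice addition that the paper omits, but otherwise the arguments coincide.
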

\begin{proof}
Let $t$ be a label of data point $\bm{x}^{(i)}$.
The objective function of logit squeezing for $\bm{x}^{(i)}$ is 
$J=-\mathrm{log} [\bm{f}_s(\bm{z}_{\bm{\theta}}(\bm{x}^{(i)}))]_t+\frac{\lambda}{2}||\bm{z}_{\bm{\theta}}(\bm{x}^{(i)})||_2$.
Since we assume $\bm{z}_{\bm{\theta}}(\bm{x}^{(i)})$ can be an arbitrary vector,
$\left.\frac{\partial J}{\partial \bm{z}_{\bm{\theta}}}\right|_{\bm{z}_{\bm{\theta}}=\bm{z}_{\bm{\theta}^*}}=\bm{0}$ at the minimum points.
By differentiating $J$, we obtain 
\begin{align}\textstyle
\frac{\partial J}{\partial z_{\bm{\theta},k}}=
\begin{cases}
-1+[\bm{f}_s(\bm{z}_{\bm{\theta}}(\bm{x}^{(i)}))]_k+\lambda z_{\bm{\theta},k}&k=t,\\
[\bm{f}_s(\bm{z}_{\bm{\theta}}(\bm{x}^{(i)}))]_k+\lambda z_{\bm{\theta},k}&\mathrm{otherwise},
\end{cases}\nonumber
\end{align}
thus have
\begin{align}\textstyle
-1+[\bm{f}_s(\bm{z}_{\bm{\theta}^*}(\bm{x}^{(i)}))]_t+\lambda z_{\bm{\theta}^*,t}&=0,\nonumber\\\textstyle
z_{\bm{\theta}^*,t}&=\frac{1-[\bm{f}_s(\bm{z}_{\bm{\theta}^*}(\bm{x}^{(i)}))]_t}{\lambda}.\nonumber
\end{align}
Since each element of softmax functions is bounded as
 $0\leq [\bm{f}_s(\bm{z}_{\bm{\theta}^*}(\bm{x}^{(i)}))]_k\leq 1$,
 we have $0\leq z_{\bm{\theta}^*,t} \leq \frac{1}{\lambda}$.
In the same manner, we have $-\frac{1}{\lambda}\leq z_{\bm{\theta}^*,k} \leq 0$ for $k\neq t$.
Therefore, all elements of the optimal logits have finite values.
\end{proof}
\begin{theorem}
    Let $g(z)$ be tanh or sigmoid function and $\gamma$ be a hyper-parameter
     satisfying $0\!<\!\gamma\!<\!\infty$.
    If we use tanh or sigmoid function before softmax
     as $\bm{f}_s(\gamma g(\bm{z}_{\bm{\theta}}(\bm{x}^{(i)})))$, 
    all elements of the optimal pre-logit vector
     $\bm{z}_{\bm{\theta}^*}(\bm{x}^{(i)})\!=\!\mathrm{arg}\min_{\bm{z}_{\bm{\theta}}} \mathcal{L}
    _{\mathrm{CE}}(\gamma g(\bm{z}_{\bm{\theta}}(\bm{x}^{(i)})),\bm{p}^{(i)}) $
    do not have finite values
    while all elements of the optimal logit vector $\gamma g(\bm{z}_{\bm{\theta}^*})$ has finite values.
    \end{theorem}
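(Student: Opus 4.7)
The plan is to compute the first-order stationarity conditions of $J(\bm{z}) := \mathcal{L}_{\mathrm{CE}}(\gamma g(\bm{z}),\bm{p}^{(i)})$, show they admit no finite solution, and then argue that the infimum of $J$ is nevertheless approached along the direction where $z_t \to +\infty$ and $z_k \to -\infty$ for $k \neq t$. Because $g$ is bounded, this same limit drives every coordinate of $\gamma g(\bm{z})$ to a finite value, which immediately gives the second half of the theorem.

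First I would differentiate $J$ by the chain rule and reuse the softmax-cross-entropy derivative computed in the proof of Theorem~\ref{SCETh}: this yields
$\partial J/\partial z_{\bm{\theta},k}(\bm{x}^{(i)}) = \gamma\, g'(z_{\bm{\theta},k}(\bm{x}^{(i)}))\,\bigl([\bm{f}_s(\gamma g(\bm{z}_{\bm{\theta}}(\bm{x}^{(i)})))]_k - p_k^{(i)}\bigr)$. For both tanh and sigmoid $g'(z)>0$ at every finite $z$, so a finite stationary point would force $[\bm{f}_s(\gamma g(\bm{z}_{\bm{\theta}^*}(\bm{x}^{(i)})))]_k = p_k^{(i)}$ for every $k$, i.e., the softmax output must equal the one-hot target. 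But $\gamma g(z)$ takes values in the bounded interval $(-\gamma,\gamma)$ for tanh and $(0,\gamma)$ for sigmoid, so at any finite $\bm{z}$ the ratio $\mathrm{exp}(\gamma g(z_t))/\sum_m \mathrm{exp}(\gamma g(z_m))$ is strictly less than $1$, contradicting the one-hot requirement. Hence no finite stationary point exists.

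Next I would identify the minimizing direction by monotonicity. Since $g$ is strictly increasing and $u \mapsto -\log[\bm{f}_s(u)]_t$ is strictly decreasing in $u_t$ and strictly increasing in each $u_k$ with $k\neq t$, $J(\bm{z})$ is strictly decreasing in $z_t$ and strictly increasing in each $z_k$ with $k\neq t$. Thus the infimum of $J$ is approached only as $z_t \to +\infty$ and $z_k \to -\infty$ for $k\neq t$, which shows every coordinate of the optimal pre-logit diverges. In this same limit $\gamma g(z_t)\to\gamma$ and either $\gamma g(z_k)\to -\gamma$ (tanh) or $\gamma g(z_k)\to 0$ (sigmoid); all are finite, so $\gamma g(\bm{z}_{\bm{\theta}^*}(\bm{x}^{(i)}))$ has finite entries. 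Note that the gradient vanishes in this limit precisely because $g'(z)\to 0$ as $|z|\to\infty$, which is consistent with the absence of any finite critical point.

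The main subtlety, and what I would flag explicitly at the start of the proof, is the interpretation of $\arg\min$ when the infimum is not attained at any finite point. Under the Assumption the parameter $\bm{\theta}$ can realize an arbitrary pre-logit vector, so "optimality'' must be read in the limit sense: at the optimum every pre-logit coordinate is $\pm\infty$, while the composite $\gamma g(\bm{z}_{\bm{\theta}^*}(\bm{x}^{(i)}))$ has a well-defined finite limit value. Once this convention is fixed, both halves of the theorem follow from the short monotonicity-plus-stationarity argument above.
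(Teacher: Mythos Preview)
Your proposal is correct and follows essentially the same approach as the paper: both proofs apply the chain rule to factor $\partial J/\partial z_k$ into the softmax-cross-entropy gradient times $\gamma g'(z_k)$, observe that the former cannot vanish at any finite $\bm{z}$ because $\gamma g$ is bounded (so the softmax output is never exactly one-hot), and conclude that stationarity forces $g'(z_k)=0$, which for tanh and sigmoid happens only as $z_k\to\pm\infty$. Your additional monotonicity step pinning down the sign of each divergent coordinate and your explicit remark on the limit interpretation of $\arg\min$ are not in the paper's proof, but they only add clarity rather than change the argument.
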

    \begin{proof}
First, we show the case using tanh as $g(z)=\mathrm{tanh}(z)$.
Let $t$ be a target label for $\bm{x}^{(i)}$. 
The objective function of softmax cross-entropy loss for $\bm{x}^{(i)}$ is
\begin{align}\textstyle
    \scalebox{0.95}{$
J=
\mathcal{L}_{\mathrm{CE}}(\gamma g(\bm{z}_{\bm{\theta}}(\bm{x}^{(i)})),\bm{p}^{(i)})
=-\mathrm{log} [\bm{f}_s(\gamma\mathrm{tanh}(\bm{z}_{\bm{\theta}}(\bm{x}^{(i)})))]_t.
$}\nonumber
\end{align}
Since we assume that $\bm{z}_{\bm{\theta}}(\bm{x}^{(i)})$ can be an arbitrary vector,
$\left.\frac{\partial J}{\partial \bm{z}_{\bm{\theta}}}\right|_{\bm{z}_{\bm{\theta}}^*(\bm{x}^{(i)})}=\bm{0}$ at the minimum point.
Since $\mathrm{tanh}$ is an element-wise function,
$\frac{\partial J}{\partial z_{\bm{\theta}^*,k}}$ is written as
\begin{align}\textstyle
\frac{\partial J}{\partial z_{\bm{\theta},k}}=\gamma\frac{\partial J}{\partial \gamma\mathrm{tanh}(z_{\bm{\theta},k})}\frac{\partial \mathrm{tanh}(z_{\bm{\theta},k})}{\partial z_{\bm{\theta},k}}.\nonumber
\end{align}
$\gamma\frac{\partial J}{\partial \gamma \mathrm{tanh}(z_{\bm{\theta},k})}$ is obtained as in \req{sceopteq} and does not become 0 since $-1\leq \mathrm{tanh}(z_{\bm{\theta},k})\leq 1$ and $0<\gamma<\infty$.
Thus, $\left.\frac{\partial J}{\partial z_{\bm{\theta},k}}\right|_{z_{\bm{\theta}^*,k}(\bm{x}^{(i)})}=0$ corresponds to $\left.\frac{\partial \mathrm{tanh}(z_{\bm{\theta},k})}{\partial z_{\bm{\theta},k}}\right|_{z_{\bm{\theta}^*,k}(\bm{x}^{(i)})}=0$.
We have 
\begin{align}\textstyle
\frac{\partial \mathrm{tanh}(z_{\bm{\theta},k})}{\partial z_{\bm{\theta},k}}=1-\mathrm{tanh}^2(z_{\bm{\theta},k}).\nonumber
\end{align}
Only if $z_{\bm{\theta}^*,k}(\bm{x}^{(i)})\rightarrow \pm \infty$, the following equation holds:
$\left.\frac{\partial \mathrm{tanh}(z_{\bm{\theta},k})}{\partial z_{\bm{\theta},k}}\right|_{z_{\bm{\theta}^*,k}(\bm{x}^{(i)})}=0$.
Therefore, all elements of the optimal input vector  $\bm{z}_{\bm{\theta}^*}$ do not have finite values.
The case using sigmoid function $\sigma (z)$ can be shown in the same manner.
Since the derivative of sigmoid becomes $\frac{\partial \sigma (z_{\bm{\theta},k})}{\partial z_{\bm{\theta},k}}=\sigma(z_{\bm{\theta},k})(1-\sigma(z_{\bm{\theta},k}))$, the equation
$\left.\frac{\partial \sigma(z_{\bm{\theta},k})}{\partial z_{\bm{\theta},k}}\right|_{z_{\bm{\theta}^*,k}(\bm{x}^{(i)})}=0$ holds when $z_{\bm{\theta}^*,k}(\bm{x}^{(i)})\rightarrow \pm \infty$.
\end{proof}
\begin{theorem}
    Let $g(z)$ be BLF and $\gamma$ be a hyper-parameter satisfying $0\!<\!\gamma\!<\!\infty$.
    If we use $g(z)$ before softmax as $\bm{f}_s(\gamma g(\bm{z}_{\bm{\theta}}(\bm{x})))$, 
     all elements of the optimal pre-logit vector
     $\bm{z}_{\bm{\theta}^*}(\bm{x}^{(i)})\!=\!\mathrm{arg}\!\min_{\bm{z}_{\bm{\theta}}} \mathcal{L}
    _{\mathrm{CE}}(\gamma g(\bm{z}_{\bm{\theta}}(\bm{x}^{(i)})),\bm{p}^{(i)}) $
    have finite values, and all elements of the optimal logit vector
     $\gamma g(\bm{z}_{\bm{\theta}^*})$ also have finite values.
    Specifically, we have the following equalities and inequalities:
    \begin{align}\textstyle
    \gamma g(z_{\bm{\theta}^*,k}(\bm{x}^{(i)}))&\textstyle=
    \begin{cases}
    \gamma\max_z g(z)~~~k=t\\\textstyle
    \gamma\min_z g(z)~~~\mathrm{otherwise},
    \end{cases}\\
    \gamma<&|\gamma g(z_{\bm{\theta}^*,k}(\bm{x}^{(i)}))|<\gamma\frac{\sqrt{5}+1}{2},\nonumber\\
    z_{\theta^*,k}(\bm{x}^{(i)})&\textstyle=\begin{cases}\mathrm{arg}\!\max_z g(z)~~~k=t\\\textstyle
    \mathrm{arg}\!\min_z g(z)~~~\mathrm{otherwise},
    \end{cases}\\
    2<&|z_{\bm{\theta}^*,k }(\bm{x}^{(i)})|<\sqrt{5}+1.\nonumber
    \end{align}
    \end{theorem}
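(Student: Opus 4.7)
The plan mirrors the argument used for Theorem~2 but exploits the fact that BLF is bounded \emph{and} attains its extrema at finite arguments. First, I write the objective as $J = -\log[\bm{f}_s(\gamma g(\bm{z}_{\bm{\theta}}(\bm{x}^{(i)})))]_t$ and apply the chain rule, using that $g$ acts element-wise, to get
\begin{align*}
\frac{\partial J}{\partial z_{\bm{\theta},k}} = \gamma \cdot \frac{\partial J}{\partial (\gamma g(z_{\bm{\theta},k}))} \cdot g'(z_{\bm{\theta},k}),
\end{align*}
where the first factor equals $[\bm{f}_s(\gamma g(\bm{z}_{\bm{\theta}}))]_k - p_k$, exactly as in \req{sceopteq}. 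Since $g$ is bounded (a fact to be established below), $\gamma g(\bm{z}_{\bm{\theta}})$ lies in a compact box, so $\bm{f}_s(\gamma g(\bm{z}_{\bm{\theta}}))$ is componentwise strictly in $(0,1)$ and cannot equal the one-hot target; thus the CE factor is strictly nonzero for every $k$. The stationarity condition therefore reduces to $g'(z_{\bm{\theta}^{*},k})=0$ for every $k$.

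Next I characterize the zeros of $g'$. A direct computation using $\sigma'=\sigma(1-\sigma)$ yields
\begin{align*}
g'(z) = 2\sigma(z)(1-\sigma(z))\bigl[\,2 + z(1-2\sigma(z))\bigr],
\end{align*}
so, using $2\sigma(z)-1 = \tanh(z/2)$, $g'(z)=0$ is equivalent to $h(z):= z\tanh(z/2) = 2$. The function $h$ is even and smooth with $h(0)=0$, $h(z)\to\infty$, and $h'(z)>0$ on $(0,\infty)$, so $h(z)=2$ has a unique positive root $z^{*}$ and the corresponding negative root $-z^{*}$. To pin $z^{*}\in(2,\sqrt{5}+1)$ I verify $h(2)=2\tanh(1)<2$ (immediate) together with $h(\sqrt{5}+1)>2$, i.e.\ $\tanh((\sqrt{5}+1)/2) > (\sqrt{5}-1)/2$, which reduces via $\tanh(x)=1-2/(e^{2x}+1)$ to the elementary numerical check $e^{\sqrt{5}+1}+1 > 3+\sqrt{5}$. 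The identity $\sigma(-z)=1-\sigma(z)$ also gives $g(-z)+g(z)=0$, i.e.\ $g$ is odd, so $z^{*}$ is the unique global maximizer of $g$ and $-z^{*}$ its unique global minimizer.

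The crucial simplification comes from substituting the critical-point relation $\tanh(z^{*}/2)=2/z^{*}$ into the rewritten form $g(z) = \tanh(z/2) + \tfrac{z}{2}(1-\tanh^{2}(z/2))$, which collapses to the clean identity $g(z^{*}) = z^{*}/2$. Combined with $z^{*}\in(2,\sqrt{5}+1)$ this gives $g(z^{*})\in(1,(\sqrt{5}+1)/2)$, and in particular $|g|<(\sqrt{5}+1)/2$, closing the boundedness hypothesis used in the first step. Finally, among the $2^{M}$ stationary configurations (each coordinate is either $z^{*}$ or $-z^{*}$), the minimizer of $J$ is identified by noting that $-\log[\bm{f}_s]_t$ is strictly decreasing in the margin $\gamma g(z_t)-\gamma g(z_k)$ for every $k\neq t$, so $J$ is minimized by taking $z_{\bm{\theta}^{*},t}=z^{*}$ and $z_{\bm{\theta}^{*},k}=-z^{*}$ for $k\neq t$. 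The principal obstacle is recognizing the algebraic miracle $g(z^{*})=z^{*}/2$; once that is in hand, the bounds $2<|z_{\bm{\theta}^{*},k}|<\sqrt{5}+1$ and $\gamma<|\gamma g(z_{\bm{\theta}^{*},k})|<\gamma(\sqrt{5}+1)/2$ follow immediately.
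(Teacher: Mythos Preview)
Your proof is correct and follows the same overall architecture as the paper's: reduce stationarity to $g'(z_k)=0$ via the chain rule and boundedness of $g$, locate the finite critical points of $g$, derive the identity $g(z^{*})=z^{*}/2$, and then select the configuration that minimizes $J$. The technical details differ slightly in ways that streamline the argument. Where the paper bounds the root on each side separately by manipulating $f(z)=2-z\tanh(z/2)$ with the inequalities $z/(1+z)<1-e^{-z}$ and $e^{z}<1/(1-z)$, you instead observe that $h(z)=z\tanh(z/2)$ is even and strictly increasing on $(0,\infty)$, which immediately gives \emph{uniqueness} of the positive root (something the paper never states) and reduces the bound $z^{*}<\sqrt{5}+1$ to a single numerical inequality; you also exploit the oddness of $g$ to handle the negative side for free, whereas the paper treats it by a parallel computation. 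These are cosmetic improvements rather than a different method, but they do make the argument tidier.
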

    \begin{proof}
Let $t$ be a target label for $\bm{x}^{(i)}$. 
The objective function of softmax cross-entropy loss for $\bm{x}^{(i)}$ is
\begin{align}
J=
\mathcal{L}_{\mathrm{CE}}(\gamma g(\bm{z}_{\bm{\theta}}(\bm{x}^{(i)})),\bm{p}^{(i)})
=-\mathrm{log} [\bm{f}_s(\gamma g(\bm{z}_{\bm{\theta}}(\bm{x}^{(i)})))]_t,\nonumber
\end{align}
where $g(z)=2\left\{z\sigma(z)+\sigma(z)-z\sigma^2(z)\right\}-1$.
Since we assume that $\bm{z}_{\bm{\theta}}(\bm{x}^{(i)})$ can be an arbitrary vector,
$\left.\frac{\partial J}{\partial \bm{z}_{\bm{\theta}}}\right|_{\bm{z}_{\bm{\theta}^*}(\bm{x}^{(i)})}=\bm{0}$ at the minimum point.
Similary to tanh, BLF $g(z)$ is an element-wise function and
bounded by finite values.
Therefore, $\gamma\frac{\partial J}{\partial \gamma g(z_{\bm{\theta},k})}$ does not become 0.
As a result, $\left.\frac{\partial J}{\partial \bm{z}_{\bm{\theta}}}\right|_{\bm{z}_{\bm{\theta}}^*(\bm{x}^{(i)})}=\bm{0}$ holds when $\left.\frac{\partial g(z_{\bm{\theta},k})}{\partial z_{\bm{\theta},k}}\right|_{z_{\bm{\theta},k}^*(\bm{x}^{(i)})}=0$ for all $k$.
We have     
\begin{align}
    \scalebox{0.95}{$
        \frac{\partial g(z_{\bm{\theta},k})}{\partial z_{\bm{\theta},k}}=2\sigma(z_{\bm{\theta},k})(1-\sigma(z_{\bm{\theta},k}))(2+z_{\bm{\theta},k}-2z_{\bm{\theta},k}\sigma(z_{\bm{\theta},k})),
        $}\nonumber
    \end{align}
thus, candidates of the optimal points satisfy one of the following conditions:
 $\sigma(z)=0$, $\sigma(z)=1$, and $2+z-2z\sigma(z)=0$.
 Inputs that satisfy $\sigma(z)=0$ and $\sigma(z)=1$ correspond to
$z\rightarrow-\infty$ and $z\rightarrow\infty$,
respectively. Their outputs $g(z)$ become $\lim_{z\rightarrow-\infty}g(z)=-1$ and $\lim_{z\rightarrow\infty}g(z)=1$, respectively.
To investigate points satisfying $2+z-2z\sigma(z)=0$, we define $f(z)=2+z-2z\sigma(z)$,
which is a continuous function.
This function can be written as
\begin{align}\textstyle
f(z)&\textstyle=2+z-2z\sigma(z),\nonumber\\
&\textstyle=2+z-2z\frac{1}{1+e^{-z}},\nonumber\\
&\textstyle=2+\frac{z(1+e^{-z})-2z}{1+e^{-z}},\nonumber\\
&\textstyle=2-z\frac{1-e^{-z}}{1+e^{-z}},\label{interEq}\\
&\textstyle=2-z\mathrm{tanh}(\frac{z}{2}).\nonumber
\end{align}
Since $\mathrm{tanh(z)}<z$ for $z>0$ and $\mathrm{tanh(z)}>z$ for $z<0$, we have 
 \begin{align}\textstyle
 f(z)=2-z\mathrm{tanh}(z/2)>2-\frac{z^2}{2}~~\mathrm{for}~~z\neq 0.\nonumber
 \end{align}
By using this equation, we have $f(z)>0$ for $-2<z<2$.
In addition, since $\frac{z}{1+z}<1-e^{-z}$ for $z>-1$, we have
the following inequality from \req{interEq}:
\begin{align}\textstyle
f(z)&\textstyle=2-z\frac{1-e^{-z}}{1+e^{-z}},\nonumber\\
&\textstyle=\frac{2(1+e^{-z})-z(1-e^{-z})}{1+e^{-z}},\label{interEq2}\\
&\textstyle=\frac{-(2+z)(1-e^{-z})+4}{1+e^{-z}},\nonumber\\
&\textstyle<\frac{-(2+z)z+4(z+1)}{(1+e^{-z})(z+1)},\nonumber\\
&\textstyle=\frac{-(z-1)^2+5}{(1+e^{-z})(z+1)}.\nonumber
\end{align}
Thus, we have $f(z)<0$ for $z>\sqrt{5}+1$.
On the other hand, since we have $e^z<\frac{1}{1-z}$ for $z<1$,
we have the following inequality from \req{interEq2}:
\begin{align}\textstyle
f(z)&\textstyle=\frac{2(1+e^{-z})-z(1-e^{-z})}{1+e^{-z}},\nonumber\\
&\textstyle=\frac{2(e^z+1)-z(e^z-1)}{e^z+1},\nonumber\\
&\textstyle=\frac{e^z(2-z)+2+z}{e^z+1},\nonumber\\
&\textstyle<\frac{2-z+(2+z)(1-z)}{(e^z+1)(1-z)},\nonumber\\
&\textstyle=\frac{-(z+1)^2+5}{(1+e^{-z})(1-z)}.\nonumber
\end{align}
Thus, we have $f(z)<0$ for $z<-\sqrt{5}-1$.
Therefore, the points satisfying $\frac{\partial g(z)}{\partial z}=0$ are included in
$-\sqrt{5}-1<z<-2$ and $2<z<\sqrt{5}+1$ from intermediate value theorem.
The $g(z)$ can be written using $f(z)$ as
\begin{align}
g(z)&=2\left\{z\sigma(z)+\sigma(z)-z\sigma^2(z)\right\}-1,\nonumber\\
&=2\sigma(z)\left\{\frac{z}{2}+\frac{f(z)}{2}\right\}-1.\nonumber
\end{align}
Let $\underline{z}^*$ be inputs satisfying $f(\underline{z}^*)=0$ and $\sqrt{5}-1<\underline{z}^*<-2$,
and  $\bar{z}^*$ be inputs satisfying $f(\bar{z}^*)=0$ and $2<\bar{z}^*<\sqrt{5}+1$.
We have
\begin{align}\textstyle
g(\underline{z}^*)&\textstyle=\sigma(\underline{z}^*)\underline{z}^*-1=1+\frac{\underline{z}^*}{2}-1=\frac{\underline{z}^*}{2},\nonumber\\
\textstyle
g(\bar{z}^*)&\textstyle=\sigma(\bar{z}^*)\bar{z}^*-1=1+\frac{\bar{z}^*}{2}-1=\frac{\bar{z}^*}{2},\nonumber
\end{align}
where we use $f(\underline{z}^*)=f(\bar{z}^*)=2+z^*-2z^*\sigma(z^*)=0$.
Thus, we have $-\frac{\sqrt{5}+1}{2}<g(\underline{z}^*)<-1$ and $1<g(\bar{z}^*)<\frac{\sqrt{5}+1}{2}$.
Since $g(\underline{z}^*)$ is lesser than $\lim_{z\rightarrow -\infty}g(z)=-1$ and $g(\bar{z}^*)$ is greater than $
\lim_{z\rightarrow +\infty}g(z)=1$, $\underline{z}^*$ is the minimum point $\underline{z}^*=\mathrm{arg}\min_z g(z)$
 and 
$\bar{z}^*$ is the maximum point $\bar{z}^*=\mathrm{arg}\max_z g(z)$. 
Therefore, the optimal inputs $z_{\bm{\theta}^*,k}(\bm{x}^{(i)})$ and logits $g(z_{\bm{\theta}^*,k}(\bm{x}^{(i)}))$ are finite values for BLF, and
each element of the optimal inputs $z_{\bm{\theta}^*,k}(\bm{x}^{(i)})$ is 
$\underline{z}^*$ or $\bar{z}^*$.
The objective function 
$J=-\mathrm{log} [\bm{f}_s(\gamma g(\bm{z}_{\bm{\theta}}(\bm{x}^{(i)})))]_t=
-\mathrm{log} \frac{\mathrm{exp}(\gamma g(z_{\bm{\theta},t}(\bm{x}^{(i)})))}
{\sum_{m=1}^{M} \mathrm{exp}(\gamma g(z_{\bm{\theta},m}(\bm{x}^{(i)})))}$ 
becomes the smallest when
\begin{align}
g(z_{\bm{\theta},k}(\bm{x}^{(i)}))=
\begin{cases}\textstyle
g(\bar{z}^*) &\textstyle
k=t,\\\textstyle
g(\underline{z}^*) &\textstyle
\mathrm{otherwise}.
\end{cases}\nonumber
\end{align}
Therefore, the optimal logits are
\begin{align}
\textstyle
\gamma g(z_{\bm{\theta}^*,k}(\bm{x}^{(i)}))&\textstyle=
\begin{cases}
\gamma\max_z g(z)~~~k=t,\\\textstyle
\gamma\min_z g(z)~~~\mathrm{otherwise},
\end{cases}\nonumber
\end{align}
and the optimal inputs are
\begin{align}\textstyle
z_{\bm{\theta}^*,k}(\bm{x}^{(i)})&\textstyle=
\begin{cases}\mathrm{arg}\max_z g(z)~~~k=t,\\\textstyle
\mathrm{arg}\min_z g(z)~~~\mathrm{otherwise},
\end{cases}\nonumber
\end{align}
Thus, we have
\begin{align}\textstyle
\gamma&<|\gamma g(z_{\bm{\theta}^*,k}(\bm{x}^{(i)}))|<\textstyle
\gamma\frac{\sqrt{5}+1}{2},\nonumber\\2&<|z_{\bm{\theta}^*,k}(\bm{x}^{(i)})|<\sqrt{5}+1.\nonumber
\end{align}
\end{proof} 
We developed BLF inspired by the derivative of swish \cite{swish} since it has finite maximum and minimum points and is a continuous function.
\section{Detailed experimental conditions}
\label{excond}
To generate PGD and SPSA attacks, we used advertorch \cite{ding2018advertorch}.
Since previous studies did not split the training datasets of MNIST and CIFAR10 into validation sets and training sets \cite{TRADES,pgd2},
we did not split them.
Our codes for experiments are based on the open-source code \cite{ExCode}
\subsection{Conditions for empirical evaluation of logit regularization in Section 3.2}
In this experiment, the model architecture was ResNet-18 (RN18).
For tanh and BLF, we added these activation function before softmax.
We trained all models by using SGD with momentum of 0.9 for 350 epochs and
used weight decay of 0.0005.
The initial learning rate was set to 0.1.
After the 150-th epoch and the 250-th epoch, we divided the learning rate by 10.
The minibatch size was set to 128.
We used models at the epoch when they achieved the largest clean accuracy.
To make average logit norms obtain various values, we use various hyperparameters for each
logit regularization method.
For label smoothing, we set $\alpha$ to [0.005, 0.01, 0.05, 0.1, 0.3, 0.5, 0.75, 0.85];
for logit squeezing, we set $\lambda$ to [0.005, 0.01, 0.05, 0.1, 0.3, 0.5, 0.75, 0.9];
for tanh, we set $\gamma$ to [0.1, 0.2, 0.3, 0.4, 0.5, 0.8, 1.0, 1.2];
and for BLF, we set $\gamma$ to [0.1, 0.2, 0.3, 0.4, 0.5, 0.8, 1.0, 1.2].
After standard training of the models, we evaluate their accuracies on the test data of CIFAR-10 attacked by
PGD.
We set the step size of PGD to 2/255 and 
the number of iteration to 100. The $L_\infty$ norm of perturbations was set to 4/255.
\subsection{Experimental conditions for experiments discussed in Section 5}
Our experimental conditions depend on the model architectures.
\subsubsection*{Experimental conditions for small CNNs}
For MNIST, we used a CNN composed of two convolutional layers and two fully connected layers 
(2C2F) and one composed of four convolutional layers 
and three fully connected layers (4C3F).
The details of these model architectures are as follows:
\begin{description}
\item{\textbf{2C2F} }
 The first convolutional layer had the 10 output channels and the second convolutional
layer had 20 output channels.
The kernel sizes of the convolutional layers were set to 5, and 
their strides were set to 1. We did not use zero-padding in these layers.
We added max pooling with the stride of 2 and ReLU activation after each convolutional layer.
The size of the first fully connected layer was set to $320\times 50$, and we used the ReLU activation 
after this layer.
The size of the second fully connected layer was set to $50\times 10$, and we used softmax as the 
output function. 
Between the first and the second convolution layer and between the first and the second fully 
connected layer, we applied 50~\% dropout.
We used the default initialization of PyTorch 1.0.0 to initialize all parameters.
\item{\textbf{4C3F}} 
We used an implementation of this model architecture released by the authors of \cite{TRADES}.
The first and second convolutional layers had 32 output channels, and 
the third and fourth convolutional layer had 64 output channels.
The kernel sizes of the convolutional layers were set to 3, and 
their strides were set to 1. We did not use zero-padding in these layers.
We used the ReLU activation after each convolution layer and
 added max pooling with the stride of 2 after the second and fourth ReLU activation.
Three fully connected layers followed these convolution layers.
The size of the first fully connected layer was set to $1024\times 200$, and that of 
the size of the second fully connected layer was set to $200\times 200$.
The size of the last fully connected layer was set to $200\times 10$.
After the first and the second fully connected layers, we added ReLU activation
and applied 50~\% dropout into the output of the first fully connected layers.
We used softmax as the output function. 
We used the default initialization of PyTorch 1.0.0 to initialize all parameters except for 
biases and weights of the last fully connected layer.
All these biases and weights were initialized as zero.
\end{description}
We trained models by using SGD with momentum of 0.5 and learning rate of 0.01 for 100 epochs.
For 2C2F,
we used weight decay of 0.01 in both the standard training and adversarial training settings. 
For 4C3F, the learning rate SGD was divided by 10 after the 55-th, 75-th, and 90-th epoch. 
We did not use weight decay for training of 4C3F
in both the standard training and adversarial training settings . 
The minibatch size was set to 64.
We used the models at the epoch when they achieved the largest clean accuracy.
We evaluate the following hyper-parameter sets: $\lambda$ of [0.01, 0.05, 0.1, 0.3, 0.5]
for logit squeezing, $\alpha$ of [0.05, 0.1, 0.3, 0.5, 0.7] for label smoothing,
$\gamma$ of [0.5, 0.8, 1.0, 1.5, 3.5] for BLF , and $\beta$ of [3, 6, 9, 12, 15] for TRADES,
and we select hyperparameters that achieve  the largest clean accuracy for standard training,
and those that achieve the largest adversarial accuracy against
PGD ($\varepsilon=0.3$) for adversarial training.
We initialized $\tilde{\gamma}$ as -1 in L-BLF.
\subsubsection*{Experimental conditions for RN18}
We trained models by using SGD with momentum of 0.9 for 350 epochs in the standard training setting
and for 120 epochs in the adversarial training setting.
In both standard and adversarial training settings, we used weight decay of 0.0005.
The initial learning rate was set to 0.1.
After the 150-th epoch and the 250-th epoch, we divided the learning rate by 10 in the standard 
training setting.
In the adversarial training setting, we divided the learning rate by 10 after the 50-th epoch and 100-th epoch.
The minibatch size was set to 128.
We used the default initialization of PyTorch 1.0.0 to initialize all parameters except for $\tilde{\gamma}$.
We initialized $\tilde{\gamma}$ as -1 in L-BLF.
We used the models at the epoch when they achieved the largest clean accuracy.
We evaluate the following hyper-parameter sets: $\lambda$ of [0.01, 0.05, 0.1, 0.3, 0.5] for logit 
squeezing, $\alpha$ of [0.05, 0.1, 0.3, 0.5, 0.7] for label smoothing,
$\gamma$ of [0.1, 0.5, 0.8, 1.0, 1.2] for BLF, and $\beta$ of [3, 6, 9, 12, 15] for TRADES, 
and we selected hyperparameters that achieve the largest clean accuracy for the standard training setting
and those that achieves the largest adversarial accuracy against
PGD ($\varepsilon=8/255$) for the adversarial training setting.
\subsubsection*{Experimental conditions for WRN}
We used an implementation of WRN released by the authors of \cite{TRADES}.
We trained models by using SGD with momentum of 0.9 for 350 epochs in the standard training setting
and for 120 epochs in the adversarial training  setting.
In both standard and adversarial training settings, we used weight decay of 0.0005.
The initial learning rate was set to 0.1.
After the 150-th epoch and the 250-th epoch, we divided the learning rate by 10 in the standard 
training setting.
In the adversarial training setting, we divided learning rate by 10 after the 75-th epoch and 100-th epoch 
following \cite{TRADES}\footnote{This learning rate schedule is based on the code of \cite{TRADES}: https://github.com/yaodongyu/TRADES}.
We used the default initialization of PyTorch 1.0.0 to initialize all parameters except $\tilde{\gamma}$.
We initialized $\tilde{\gamma}$ as -1 in L-BLF.
We used models at the epoch when models achieved the largest clean accuracy.
We used the same hyper-parameters $\alpha$, $\lambda$, $\gamma$, and $\beta$ used in the experiments of RN18.
\section{Limitation of logit regularization methods}
\label{LimSec}
\citet{engstrom2018evaluating} have shown that adversarial logit pairing, which is similar to logit regularization,
is sensitive to targeted attacks, and 
\citet{mosbach2018logit} have shown that logit squeezing is sensitive to PGD attacks with multi-restart.
In this section, we evaluate logit regularization methods with strong untargeted and targeted PGD attacks with multi-start.
As strong PGD attacks, we used AutoPGD (APGD) with cross-entropy and targeted AutoPGD with difference of logits Ratio Loss (TAPGD) in \cite{AutoAttack}.
APGD is more sophisticated than PGD; APGD uses momentum and adaptively selects the step size. We restarted APGD and TAPGD five times.
In the experiments, we tuned hyperparameters ($\lambda$, $\alpha$, $\gamma$, $\beta$) for each attack
and train the model for one time for each hyper-parameter.

Results are listed in \rtab{APGDtab}.
In this table, logit regularization methods are superior or comparable to Baseline.
In particular, logit regularization methods outperform TRADES on MNIST.
On CIFAR10, robust accuracies against TAPGD become zero when standard training.
However, when using adversarial training, logit regularization methods can outperform Baseline;
i.e., logit regularization contributes to general adversarial robustness.
Robust accuracies against APGD of BLF become the highest on a majority of settings.
Thus, BLF is effective in defending against untargeted strong attacks.

Indeed, logit regularization methods without adversarial training are not robust enough for strong attacks.
However, they are still effective against practical threat models, e.g., untargeted attacks or black box attacks.
Furthermore, when combining adversarial training with these methods, they can be comparable to strong defense methods.
\begin{table}[tbp]
    \centering
    \caption{Robust accuracies against APGD attacks on MNIST ($\varepsilon\!=\!0.15$)  and CIFAR10 ($\varepsilon\!=\!8/255$).
    Logit regularization methods are weak to targeted attacks in the standard training setting, especially on CIFAR10.}
    \label{APGDtab}
    \resizebox{\columnwidth}{!}{
    \begin{tabular}{ccccccc}\toprule
    &Baseline&LSQ&LSM&BLF&TRADES\\\midrule
    APGD (2C2F, ST)&    43.8 &     58.9 &      $\bm{69.2}$ &  60.5&   N/A \\
    APGD (2C2F, AT) &    91.8 &     90.9 &      91.4 &  $\bm{92.3}$ &  86.8 \\
    TAPGD (2C2F,ST) &    44.2 &     49.7 &       $\bm{61.70 }$&     57.0 &   N/A \\
    TAPGD (2C2F,AT) &     91.9 &     90.7 &      91.2 &  $\bm{92.4}$ &  85.2 \\\midrule
    APGD (4C3F, ST)        &    31.6 &      41.8 &      48.5 &  $\bm{75.8}$ &   N/A \\
    APGD (4C3F,AT)        &    98.0 &     $\bm{98.2}$ &      98.0 &   98.1 &  98.0 \\
    TAPGD (4C3F,ST) &    27.7 &     $\bm{39.8}$ &      37.9 &  38.6 &   N/A \\
    TAPGD (4C3F,AT) &    98.1 &     $\bm{98.3}$ &      98.1 &  98.2 &   98.1 \\\midrule\midrule
    APGD (RN18, ST)        &        0.0 &      0.3 &       1.4 &   $\bm{3.8 }$&   N/A \\
    APGD (RN18, AT)        &    48.6 &     50.8 &      50.5 &  51.5 &   $\bm{52.6 }$\\
    TAPGD (RN18, ST) &        0.0 &         0.0 &          0.0 &      0.0 &   N/A \\
    TAPGD (RN18, AT) &    46.1 &     47.4 &      47.6 &  46.2 &  $\bm{49.9}$ \\\midrule
    APGD (WRN, ST)        &      0.0 &      0.0 &       0.8 &   $\bm{6.0 }$&   N/A \\
    APGD (WRN, AT)        &    53.1 &     53.0 &      54.0 &  $\bm{55.9}$ &  $\bm{55.9}$ \\
    TAPGD (WRN, ST) &      0.0 &       0.0 &        0.0 &    0.0 &   N/A \\
    TAPGD (WRN, AT) &    51.6 &     51.4 &      52.0 &  51.0 &  $\bm{53.8}$ \\
    \bottomrule
    \end{tabular}
    }
\end{table}
\section{Comparison with other bounded functions}
\label{EvalSec}
\begin{figure}[tb]
    \centering
    \subfloat[sine wave]{\includegraphics[width=.78\linewidth]{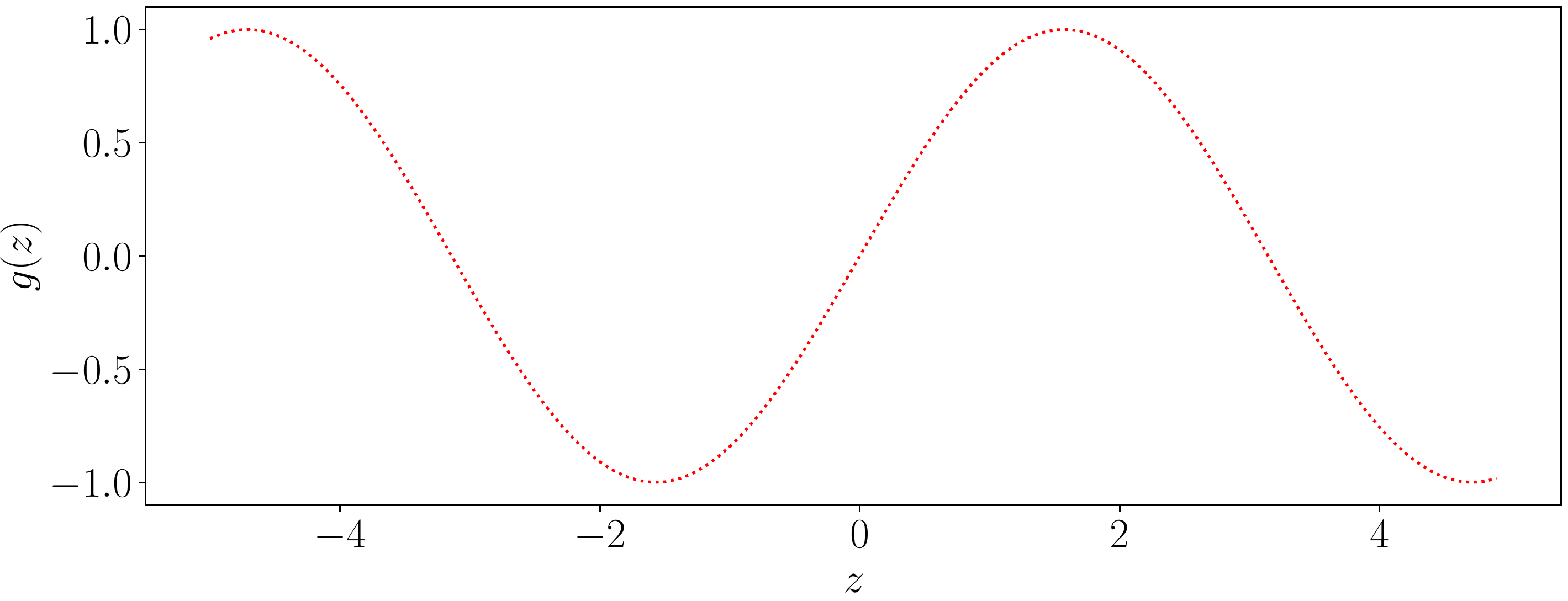}}\\
    \subfloat[single wave]{\includegraphics[width=.78\linewidth]{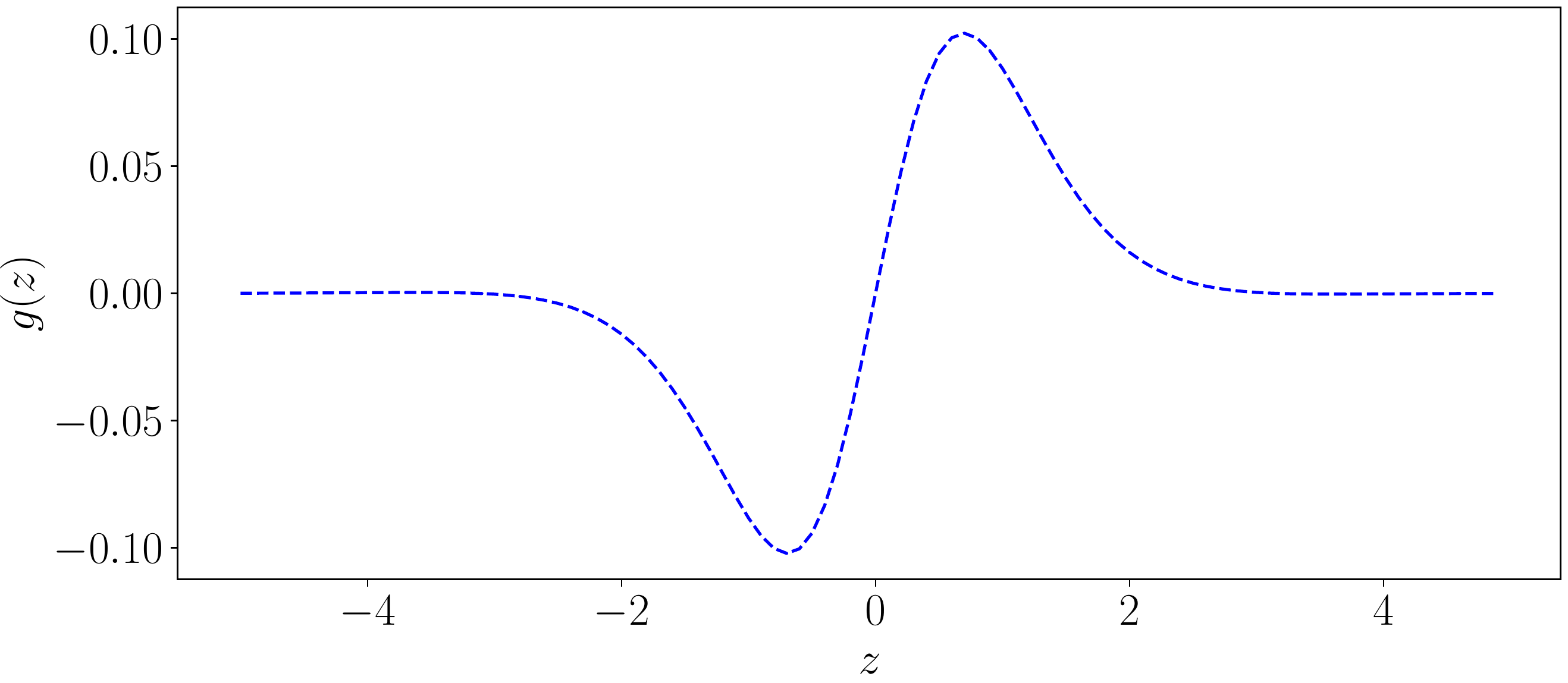}}\\
    \caption{Other bounded functions}
    \label{Fun}
    \end{figure}
\begin{figure}[tb]
    \centering
    \subfloat[MNIST Standard training]{\includegraphics[width=.49\linewidth]{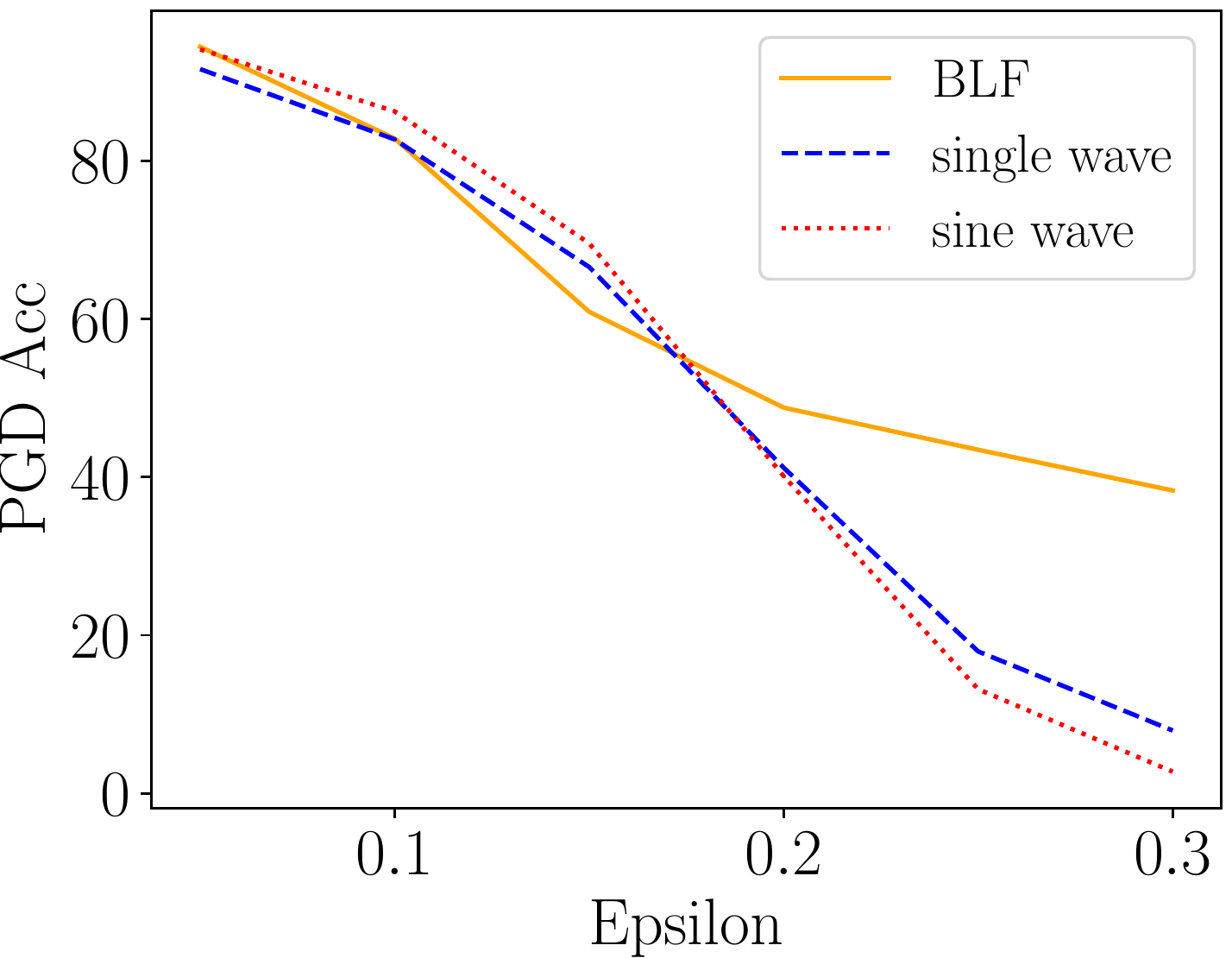}}
    \centering\hfill
    \subfloat[MNIST Adversarial training]{\includegraphics[width=.49\linewidth]{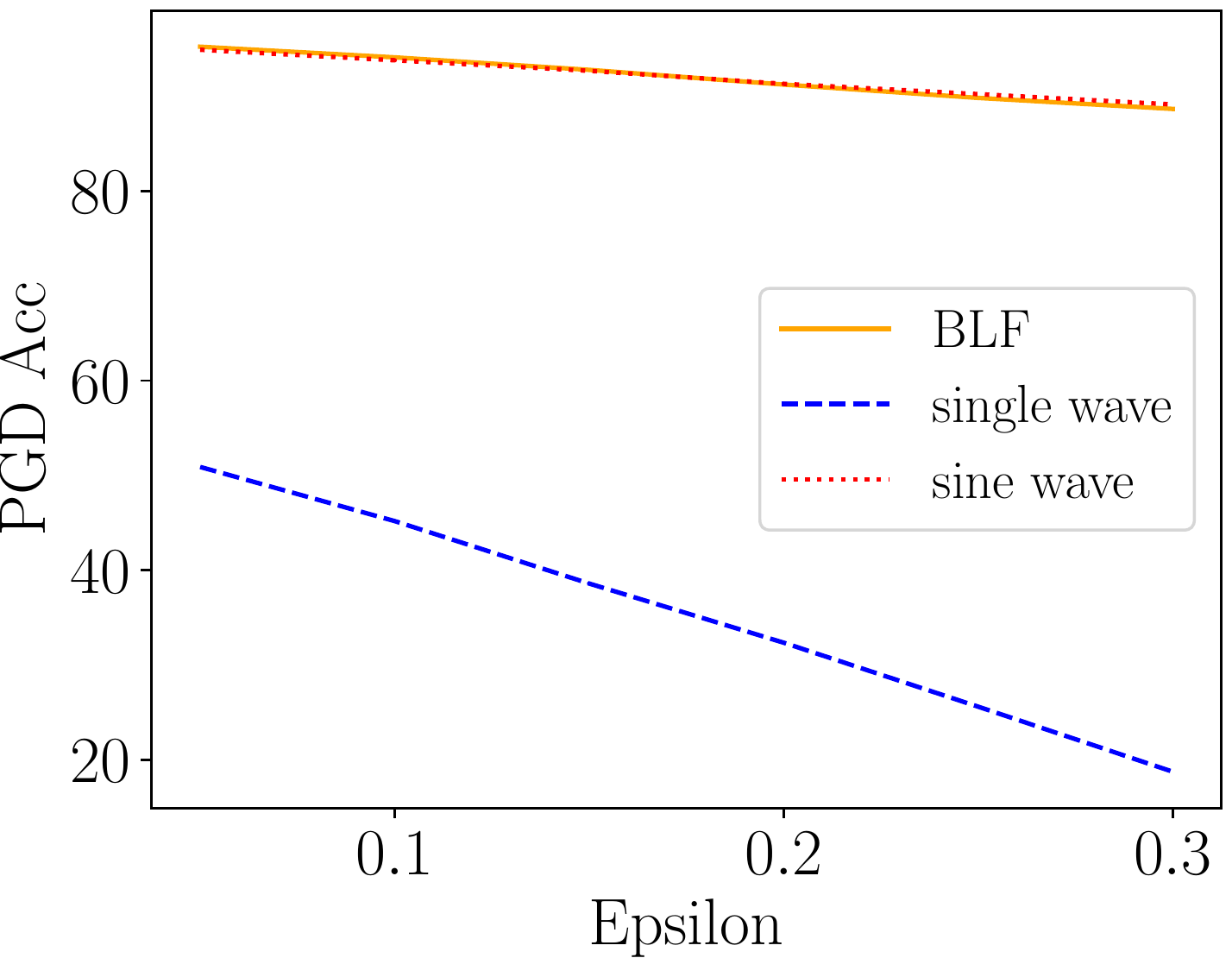}}\\
    \centering
    \subfloat[CIFAR10 Standard training]{\includegraphics[width=.49\linewidth]{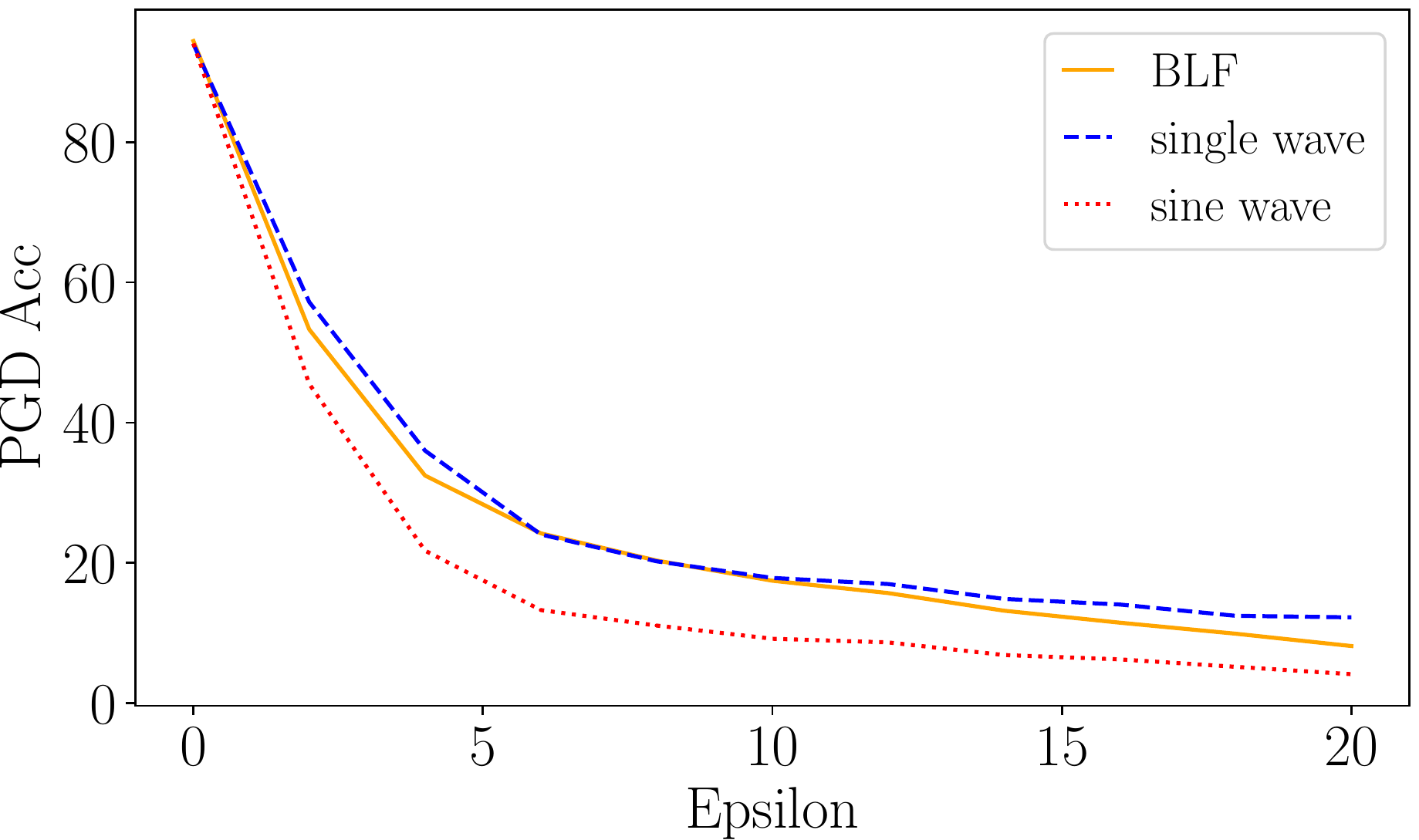}}
    \centering\hfill
    \subfloat[CIFAR10 Adversarial training]{\includegraphics[width=.49\linewidth]{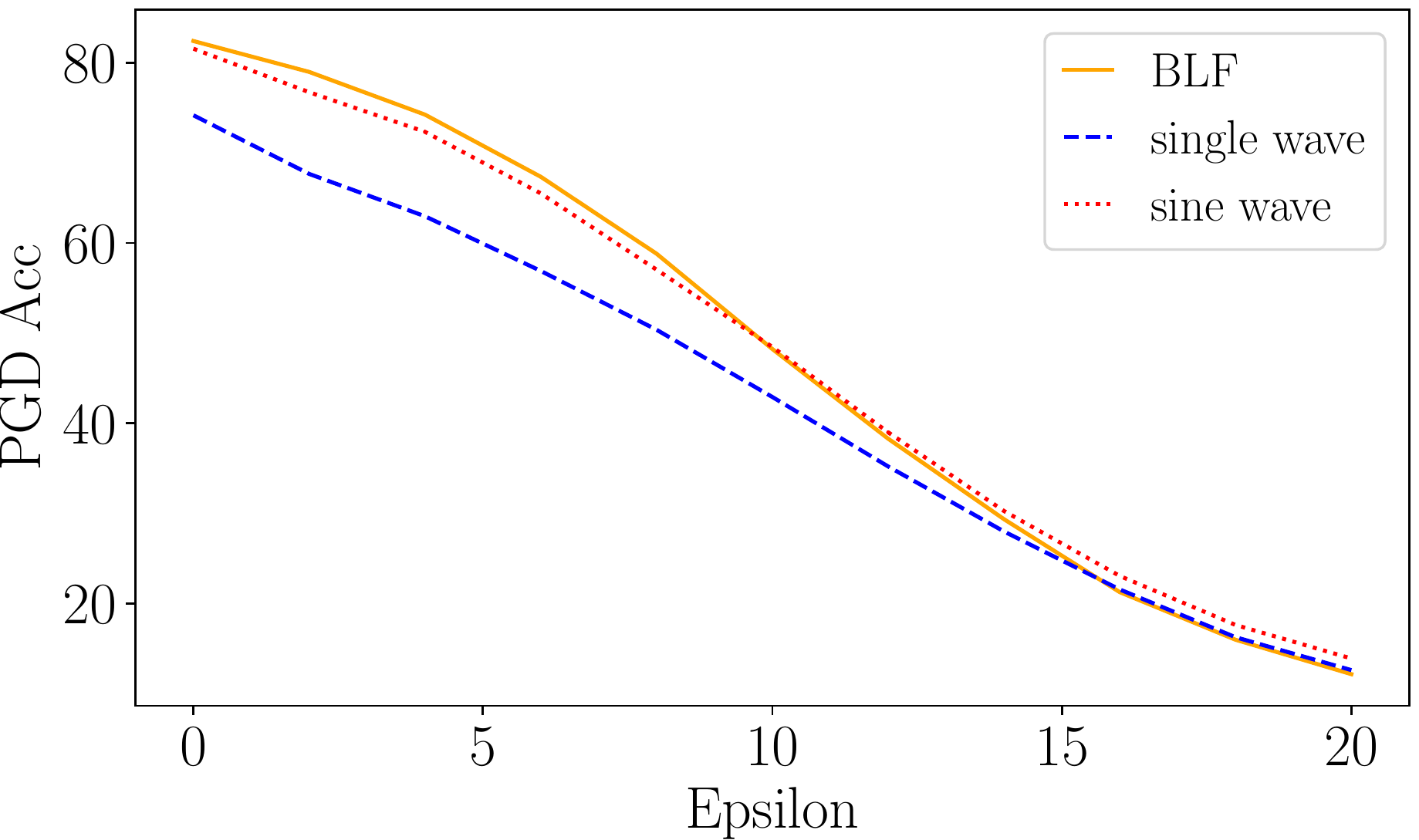}}
    \caption{Robust accuracy of 2C2F on MNIST and RN18 on CIFAR10.}
    \label{OBLF}
    \end{figure}
In this section, we compare BLF with other bounded functions.
As bounded functions, we evaluated sine wave $z=\sin(z)$ (\rfig{Fun} (a)) and single wave $z=\sin(z)/(\mathrm{exp}(z)+\mathrm{exp}(-z))^2$ (\rfig{Fun} (b)).
We used 2C2F on MNIST and RN18 on CIFAR10 and the same experimental conditions in Section~5.2.
Robust accuracies against PGD attacks are shown in \rfig{OBLF}.
This figure shows that single wave is inferior to BLF, and sine wave is comparable to BLF.
Note that our proposal is using bounded functions, which have finite maximum and minimum points,
 and not limited to using BLF.
 We use BLF because BLF is similar to tanh and we can evaluate the effect of finite optimal points.
 Even so, BLF can achieve better performance than single wave and as good performance as sine wave.
\section{Evaluation of input loss surface}
Since \citet{mosbach2018logit} have pointed out that
logit regularization methods cause distorted input loss surfaces,
we visualized input loss surface of each method.
In this experiment, we randomly selected eight data points $\{\bm{x}^(i)\}_{i=1}^8$ from the training dataset of CIFAR10
and generated two noise vectors $\bm{v}_1, \bm{v}_2$ whose elements are randomly selected from \{-1, +1\}.
Then, we evaluated $\mathcal{L}_{CE}(\bm{x}+\varepsilon_1\bm{v}_1+\varepsilon_2\bm{v}_2)$ for each dat point.
$\varepsilon_1$ and $\varepsilon_2$ are noise levels, and we changed $\varepsilon_1$ and $\varepsilon_2$ from -16/255 to 16/255, in 0.5/255 increments.
Note that we used the same data points and noise vectors among Baseline, logit regularization methods and TRADES.
In this experiment, we used RN18 trained on CIFAR10 in the standard training and adversarial training settings.

Figures~\ref{NLoss}-\ref{NBLFLoss} are input loss surfaces of models trained in the
standard setting, and 
Figures~\ref{PLoss}-\ref{PTRADESLoss} are input loss surfaces of models trained in the
adversarial setting.
In the standard training setting, when we compare Baseline (\rfig{NLoss}) with
logit regularization methods (Figs.~\ref{NLSQLoss}-\ref{NBLFLoss}),
logit regularization methods seem to make the flat input space small (e.g., Data No.2, No.5, and No.6).
However, we should pay attention to the scale of loss surfaces.
Scales of loss surfaces for logit regularization methods are smaller than those for Baseline.
For example, losses of Data No.2, No.5, and No.6 for Baseline
can exceed ten by noise injection while losses for logit regularization methods are smaller than three.
Thus, logit regularization methods improve the robustness against random noise.
This is because logit regularization methods induce the small Lipschitz constants.
Similarly, in the adversarial training settings, logit regularization methods
make the amount of loss changes small, e.g., the amount of change of the loss of No.2 in \rfig{PBLFLoss}
is smaller than the loss scale of No.2 in \rfig{PLoss}.
We list the difference between the maximum loss and minimum loss ($\max_{\varepsilon_1, \varepsilon_2} \mathcal{L}_{CE}(\bm{x}+\varepsilon_1\bm{v}_1+\varepsilon_2\bm{v}_2)-\min_{\varepsilon_1, \varepsilon_2} \mathcal{L}_{CE}(\bm{x}+\varepsilon_1\bm{v}_1+\varepsilon_2\bm{v}_2)$)
in \rtab{MMdiff}.
We can see that BLF can make the difference of losses by noise injection the smallest,
and thus, BLF can improve the robustness of cross entropy loss against the random noise injection.
These results support that logit regularization methods can improve the robustness against
untargeted attack using cross entropy.
\begin{figure*}
    \centering
    \includegraphics[width=\linewidth]{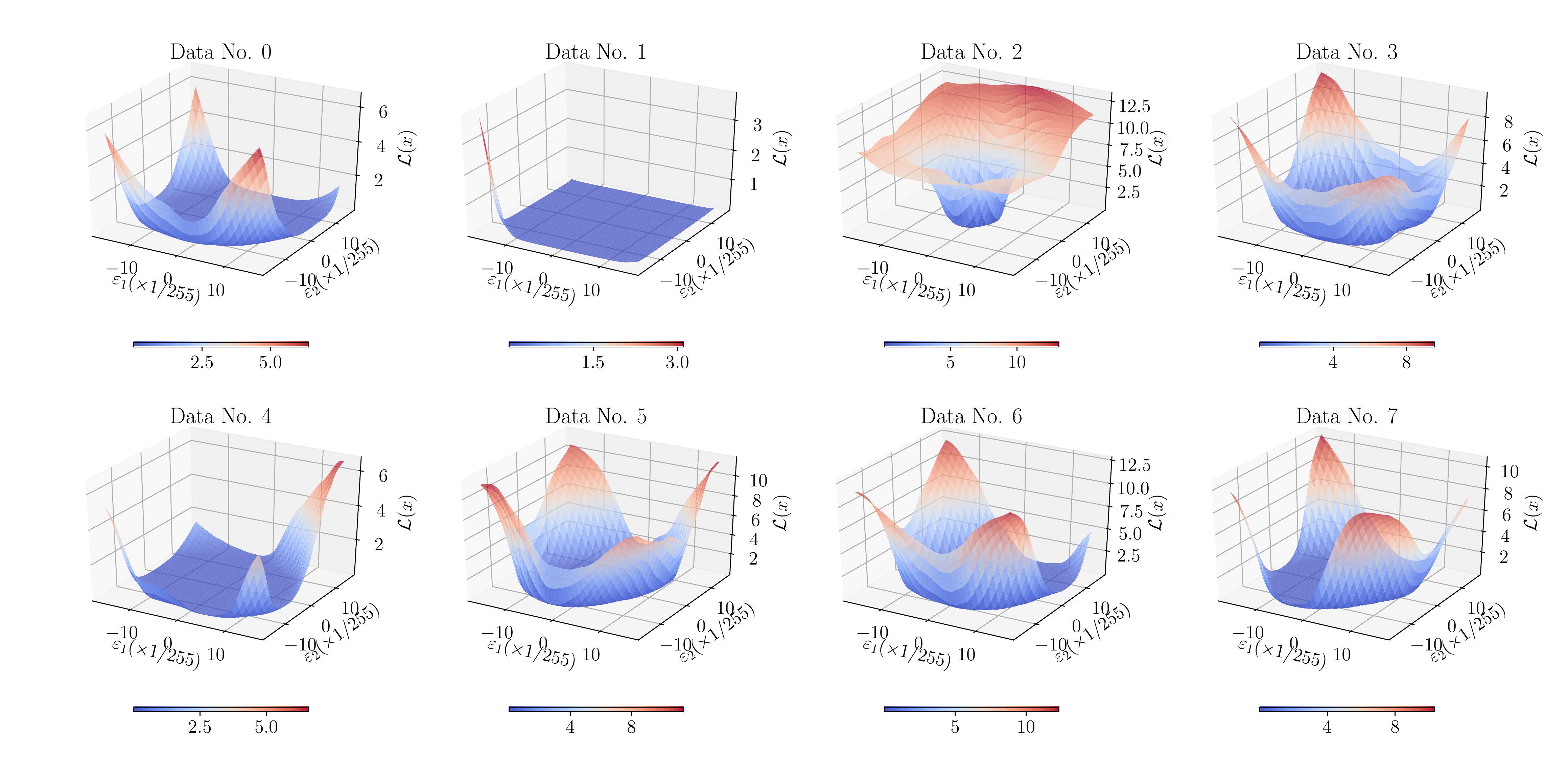}
    \caption{Loss Surface over input spaces of Baseline (standard training) on CIFAR10.}
    \label{NLoss}
\end{figure*}
\begin{figure*}
    \centering
    \includegraphics[width=\linewidth]{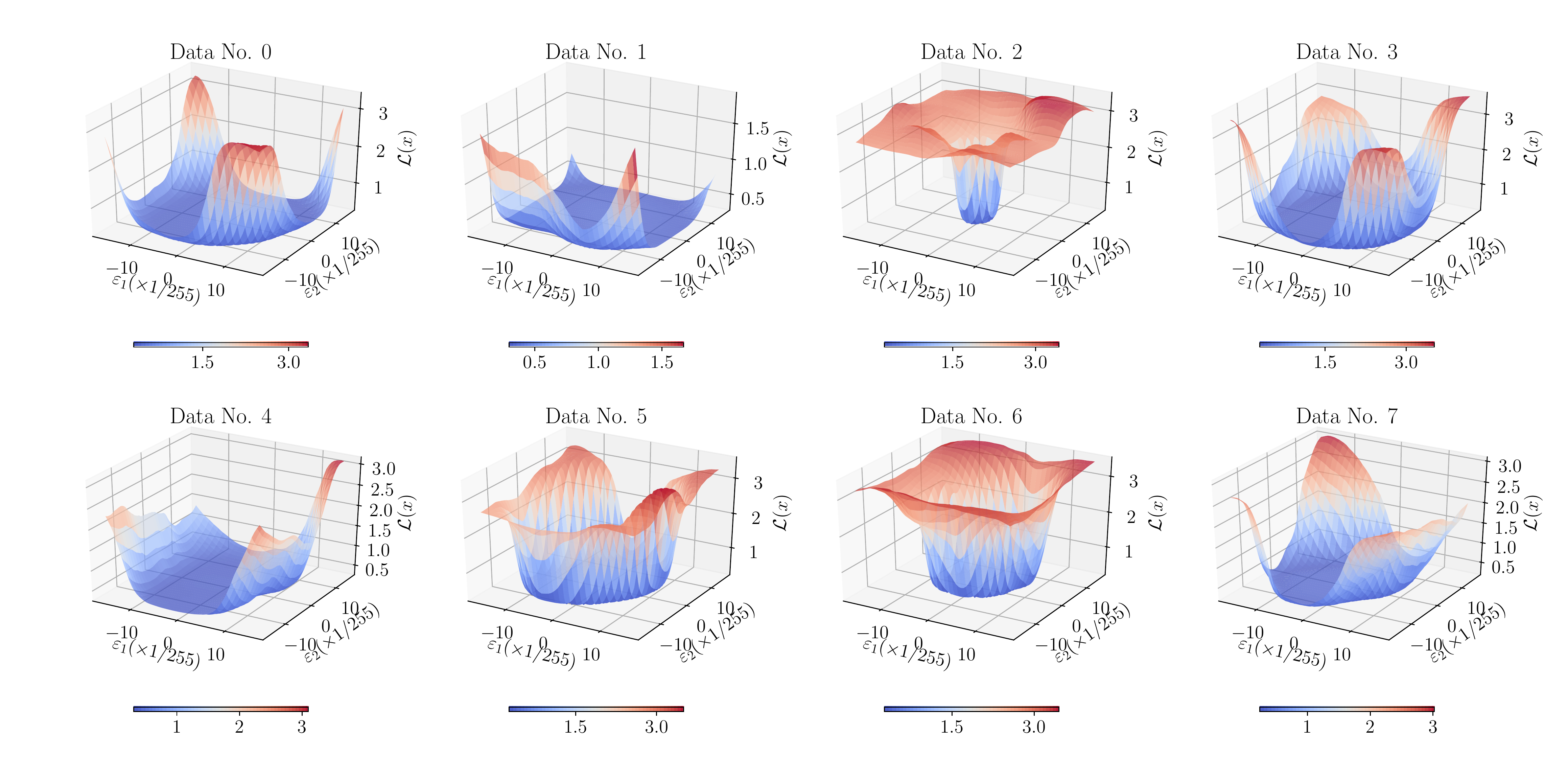}
    \caption{Loss Surface over input spaces of LSQ (standard training) on CIFAR10.}
    \label{NLSQLoss}
\end{figure*}
\begin{figure*}
    \centering
    \includegraphics[width=\linewidth]{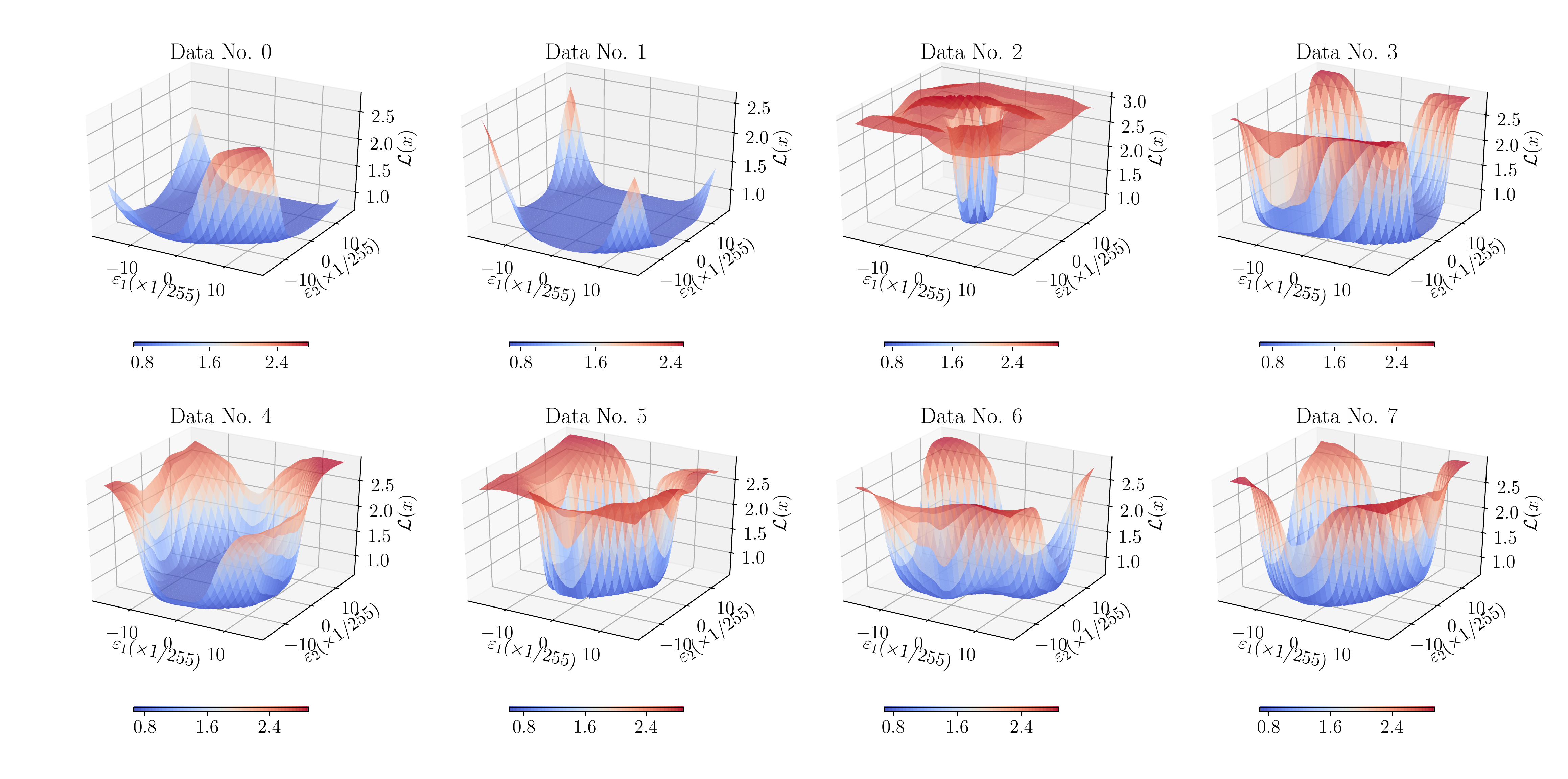}
    \caption{Loss Surface over input spaces of LSM (standard training) on CIFAR10.}
    \label{NLSMLoss}
\end{figure*}
\begin{figure*}
    \centering
    \includegraphics[width=\linewidth]{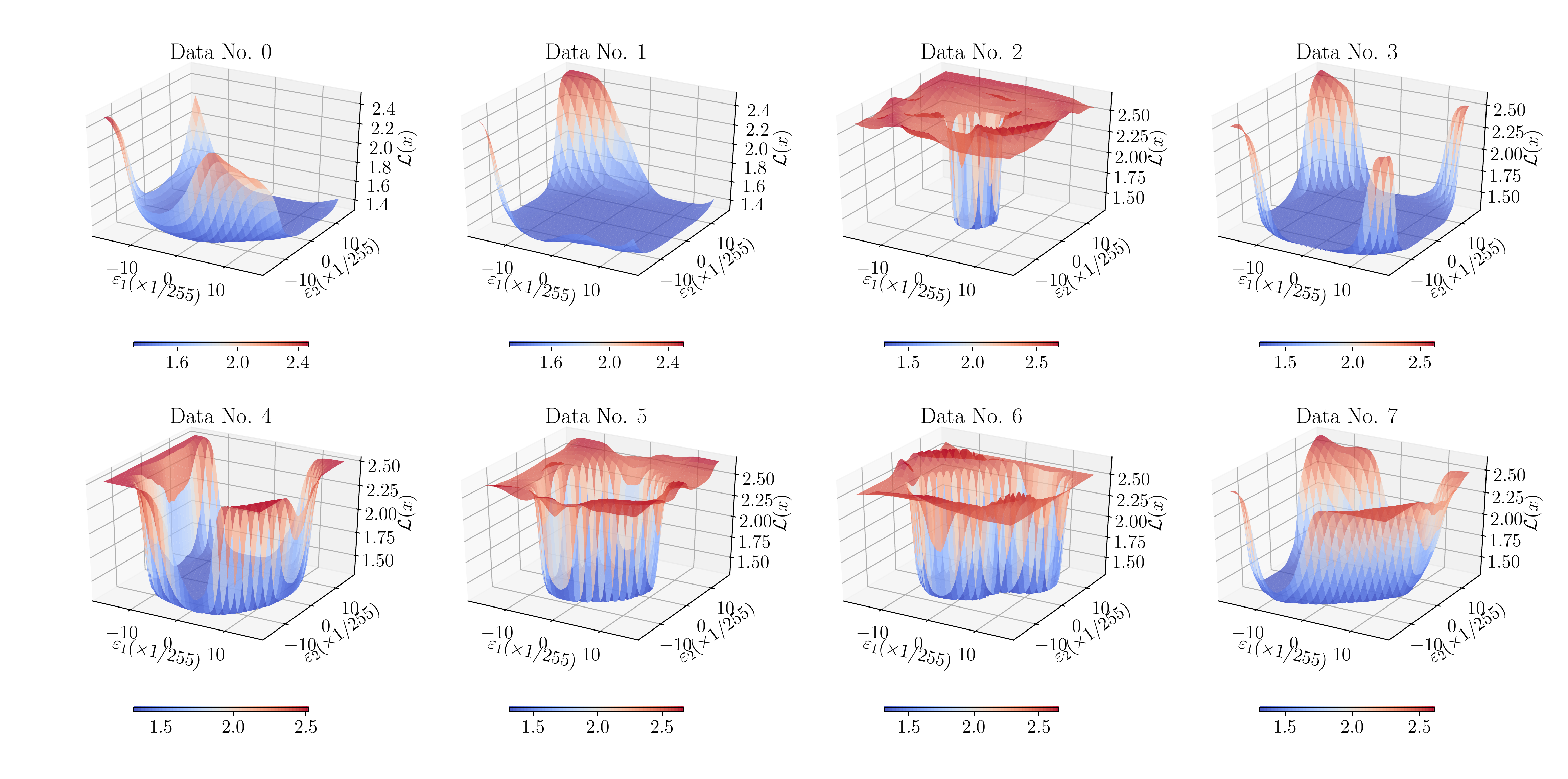}
    \caption{Loss Surface over input spaces of BLF (standard training) on CIFAR10.}
    \label{NBLFLoss}
\end{figure*}
\begin{figure*}
    \centering
    \includegraphics[width=\linewidth]{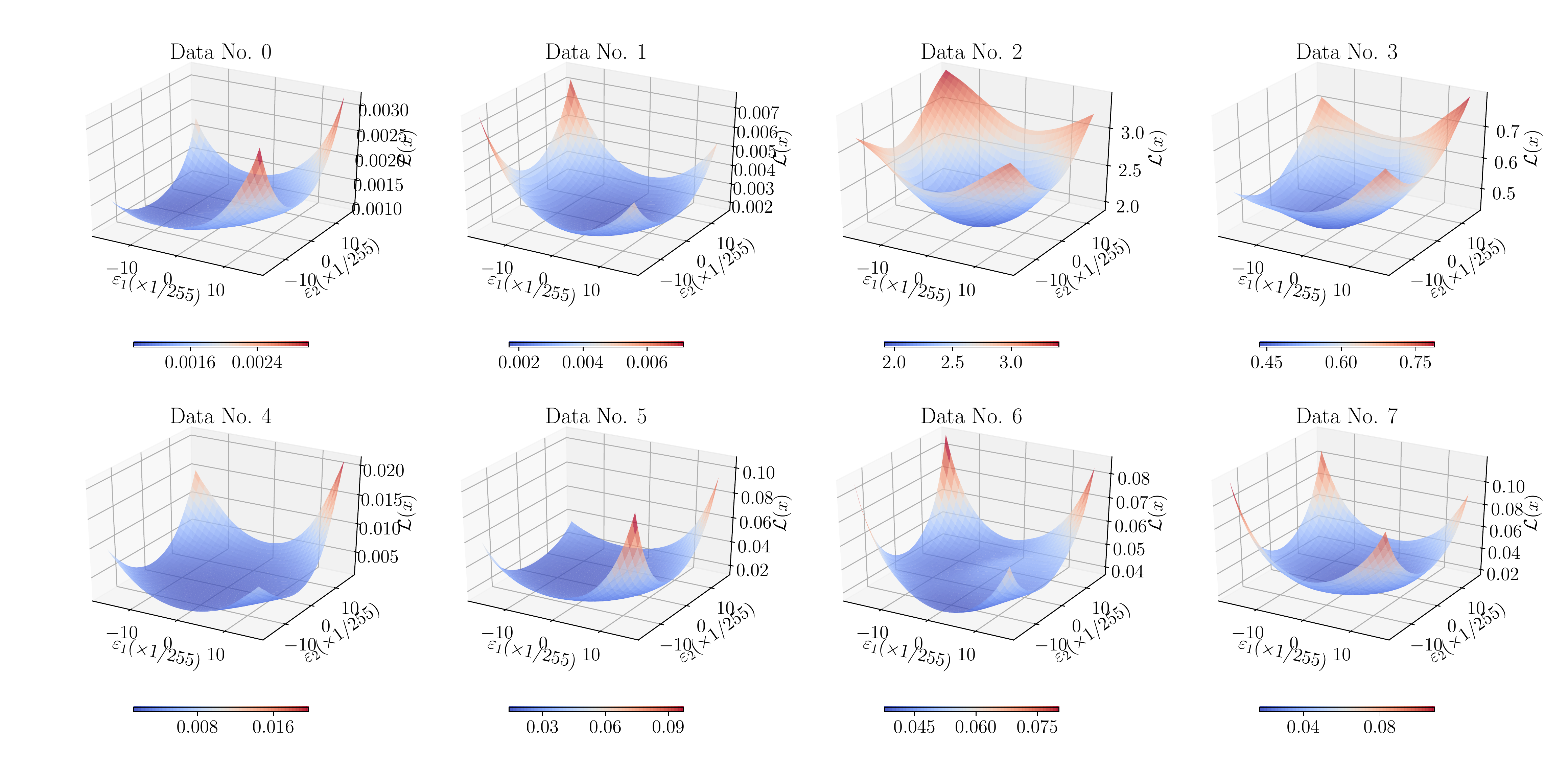}
    \caption{Loss Surface over input spaces of Baseline (adversarial training) on CIFAR10.}
    \label{PLoss}
\end{figure*}
\begin{figure*}
    \centering
    \includegraphics[width=\linewidth]{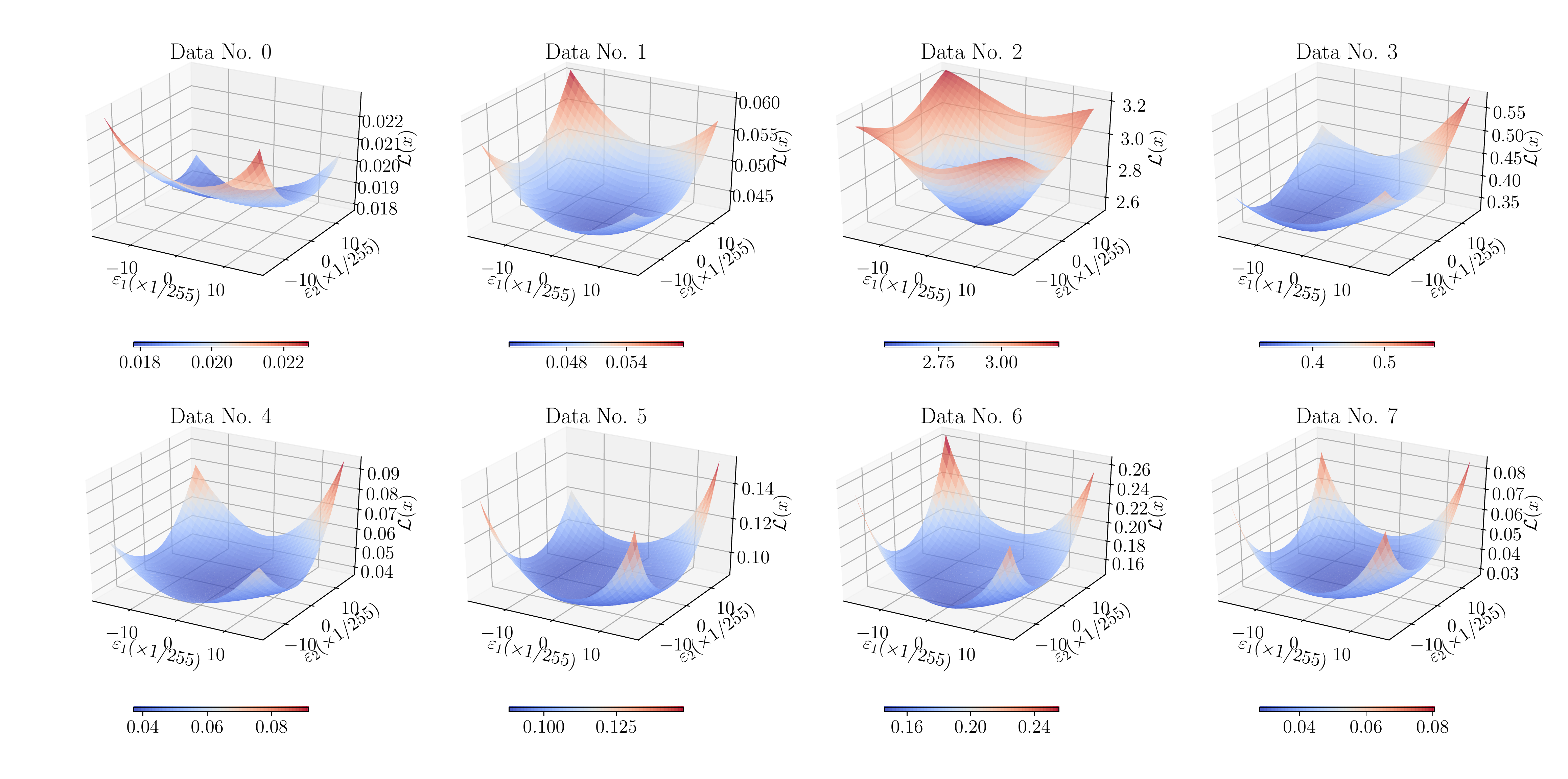}
    \caption{Loss Surface over input spaces of LSQ (adversarial training) on CIFAR10.}
    \label{PLSQLoss}
\end{figure*}
\begin{figure*}
    \centering
    \includegraphics[width=\linewidth]{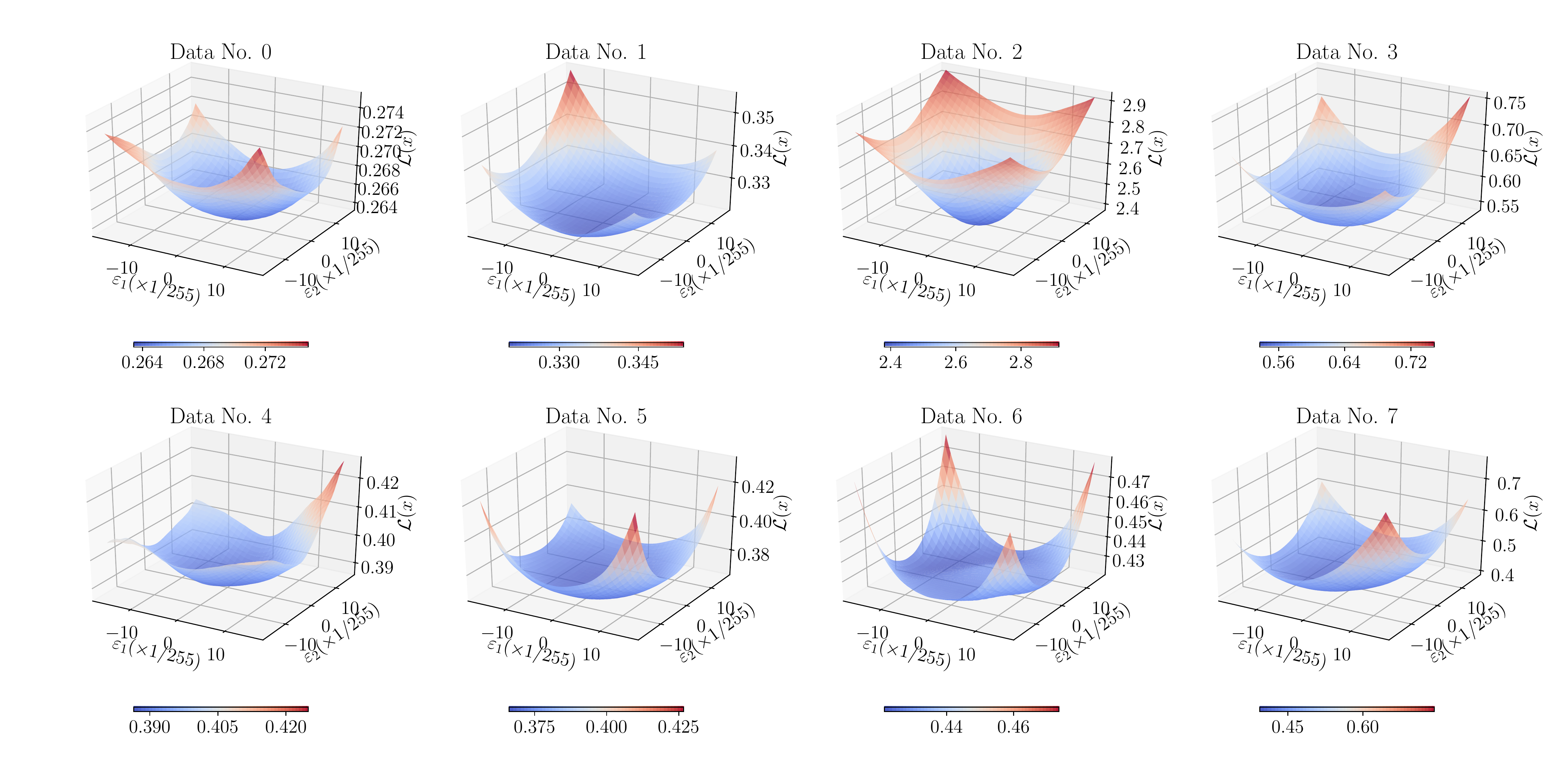}
    \caption{Loss Surface over input spaces of LSM (adversarial training) on CIFAR10.}
    \label{PLSMLoss}
\end{figure*}
\begin{figure*}
    \centering
    \includegraphics[width=\linewidth]{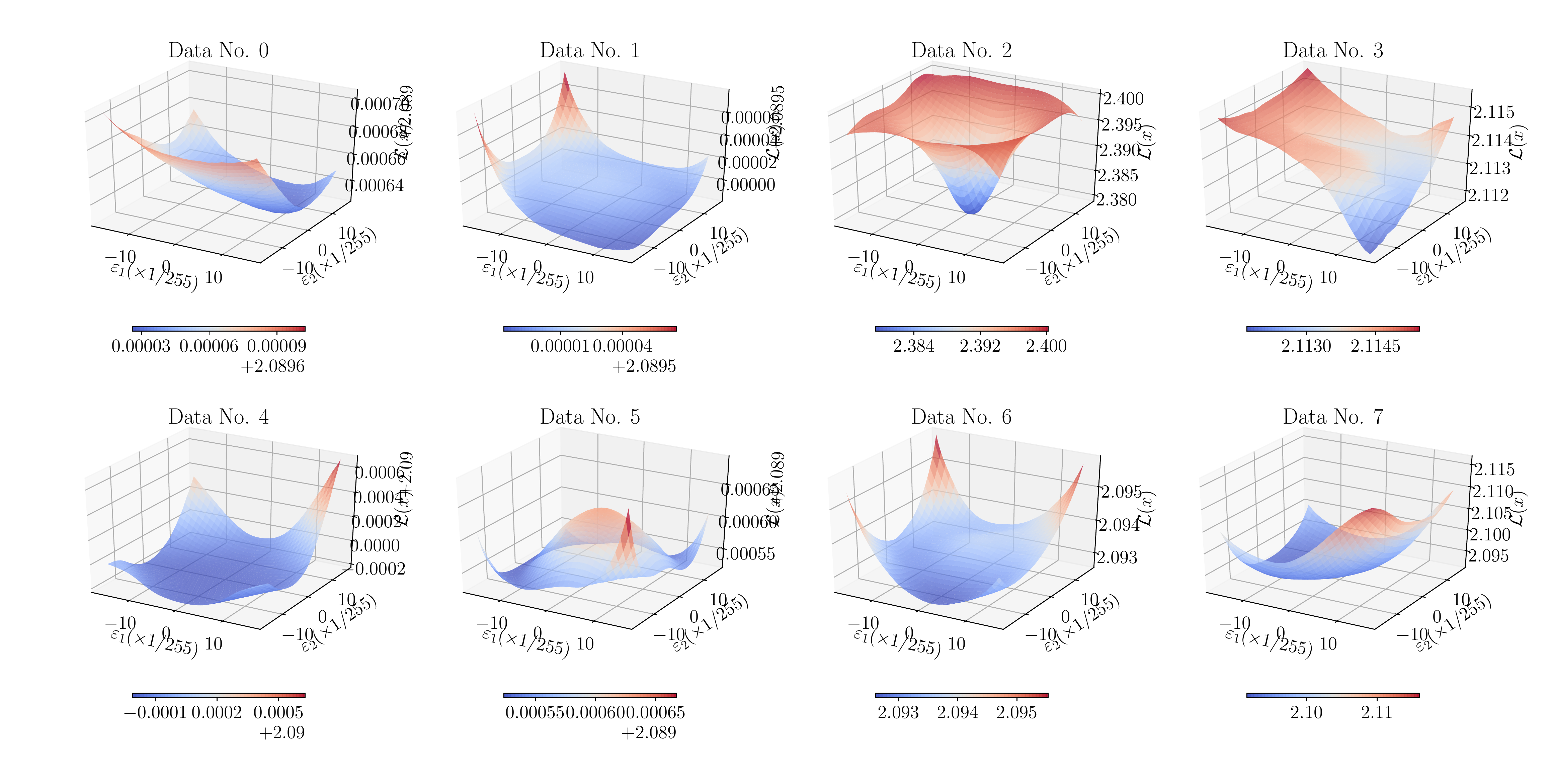}
    \caption{Loss Surface over input spaces of BLF (adversarial training) on CIFAR10.}
    \label{PBLFLoss}
\end{figure*}
\begin{figure*}
    \centering
    \includegraphics[width=\linewidth]{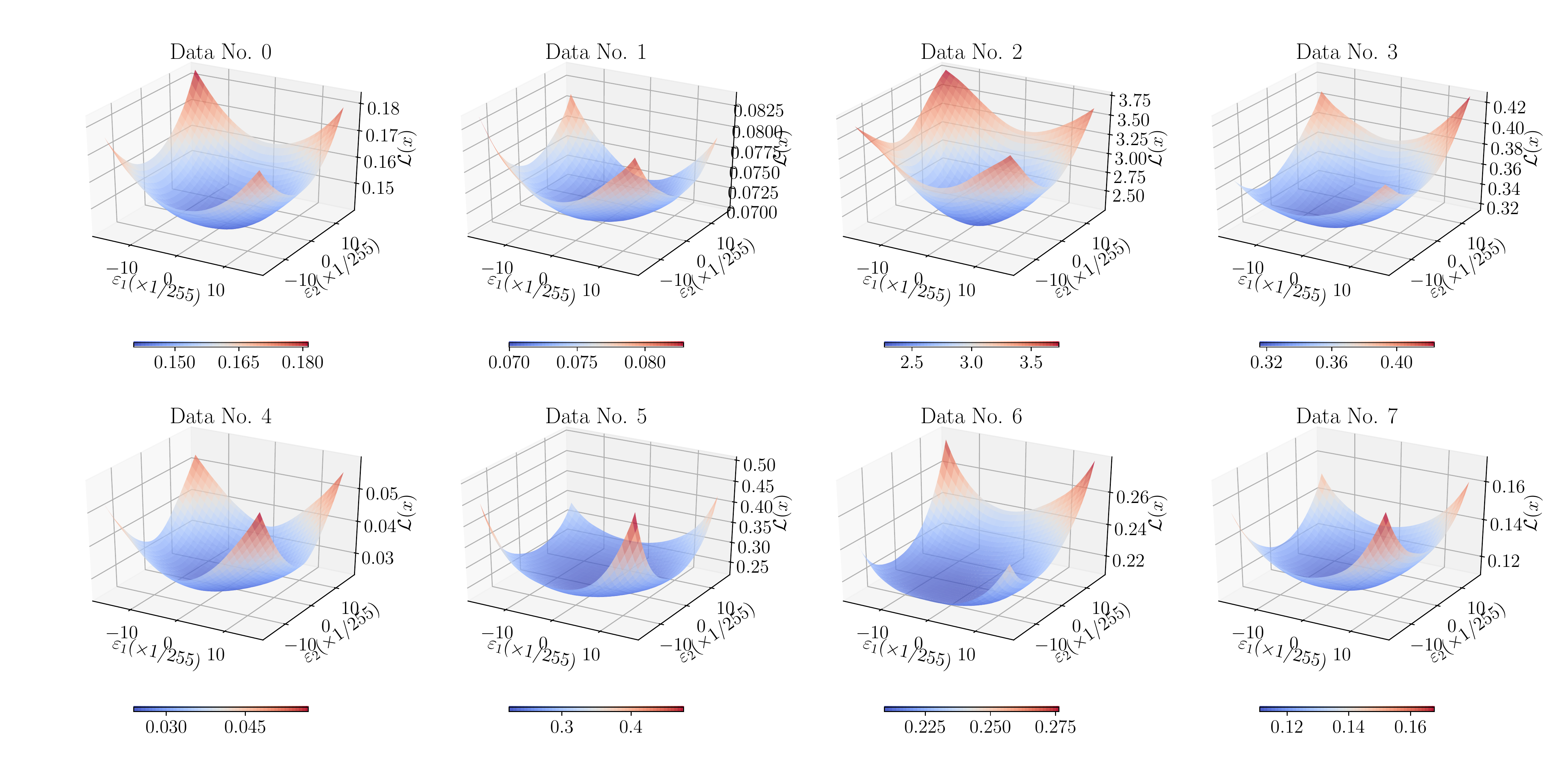}
    \caption{Loss Surface over input spaces of TRADES (adversarial training) on CIFAR10.}
    \label{PTRADESLoss}
\end{figure*}
\begin{table}[tb]
    \centering
    \caption{The difference between the maximum loss and minimum loss for random noise in the input loss surface experiment:
     $\max_{\varepsilon_1, \varepsilon_2} \mathcal{L}_{CE}(\bm{x}+\varepsilon_1\bm{v}_1+\varepsilon_2\bm{v}_2)-\min_{\varepsilon_1, \varepsilon_2} \mathcal{L}_{CE}(\bm{x}+\varepsilon_1\bm{v}_1+\varepsilon_2\bm{v}_2)$
     where $-16/255\leq\varepsilon_*\leq 16/255$. Results are averaged over eight data points.}
     \label{MMdiff}
    \begin{tabular}{ccccccc}\toprule
        &Baseline&LSQ&LSM&BLF&TRADES\\\midrule
        ST&9.38&2.89&2.21&1.28&N/A\\
        AT&0.269&0.160&0.168&0.00680&0.263\\
        \bottomrule
    \end{tabular}
\end{table}

\end{document}